\definecolor{mydarkblue}{rgb}{0,0.08,0.45}
\declaretheorem[name=Theorem]{theorem}
\declaretheorem[name=Proposition]{proposition}
\declaretheorem[name=Lemma]{lemma}
\declaretheorem[name=Corollary]{corollary}
\newtheorem{thm}{Theorem}
\newenvironment{thmbis}[1]
  {%
   \addtocounter{thm}{-1}%
   \begin{thm}}
  {\end{thm}}
\newtheorem{assumption}{}
\theoremstyle{definition}
\newtheorem{definition}{Definition}
\theoremstyle{remark}
\newenvironment{proofsketch}{%
  \proof}{\endproof}
\newcommand{\citethm}[2][]{%
\begingroup
  \let\NAT@mbox=\mbox
  \let\@cite\NAT@citenum
  \let\NAT@space\NAT@spacechar
  \let\NAT@super@kern\relax
  \renewcommand\NAT@open{[}%
  \renewcommand\NAT@close{]}%
  \cite[#1]{#2}%
  \endgroup
}
\newcommand{\coga}{\textsc{Conv-OGA}\xspace}
\newcommand{\dcoga}{\textsc{V-Conv-OGA}\xspace}
\newcommand{\NN}{\mathbb{N}}
\newcommand{\RR}{\mathbb{R}}
\newcommand{\EE}{\mathbb{E}}
\newcommand{\II}{\mathds{1}}
\newcommand{\de}{\mathrm{d}}
\newcommand{\norm}[1]{\left\lVert#1\right\rVert}
\newcommand{\argmax}{\text{argmax}}
\newcommand{\mc}[1]{\ensuremath{\mathcal{{#1}}}}
\newcommand{\lN}{\mathbb{N}}
\newcommand{\lR}{\mathbb{R}}
\newcommand{\lP}{\mathbb{P}}
\newcommand{\lE}{\mathbb{E}}
\newcommand{\lI}{\mathds{1}}
\newcommand{\indicator}[1]{{\lI_{#1}}}
\newcommand{\cC}{\mathcal{C}}
\newcommand{\cK}{\mathcal{K}}
\newcommand{\cL}{\mathcal{L}}
\newcommand{\cO}{\mathcal{O}}
\newcommand{\cX}{\mathcal{X}}
\newcommand{\ie}{\textit{i.e. }}
\newcommand{\eg}{\textit{e.g. }}
\newcommand{\toremove}[1]{}
\icmltitlerunning{Real-time Optimisation for Online Learning in Auctions}
\begin{document}

\twocolumn[
\icmltitle{Real-Time Optimisation for Online Learning in Auctions}



\icmlsetsymbol{equal}{*}

\begin{icmlauthorlist}
\icmlauthor{Lorenzo Croissant}{criteo}
\icmlauthor{Marc Abeille}{criteo}
\icmlauthor{Cl\'ement Calauz\`enes}{criteo}
\end{icmlauthorlist}

\icmlaffiliation{criteo}{Criteo AI Lab, Paris, France}

\icmlcorrespondingauthor{Lorenzo Croissant}{ld.croissant@criteo.com}

\icmlkeywords{Machine Learning, ICML}

\vskip 0.3in
]



\printAffiliationsAndNotice{}  

\begin{abstract}

In display advertising, a small group of sellers and bidders face each other in up to $10^{12}$ auctions a day. In this context, revenue maximisation via \emph{monopoly price} learning is a high-value problem for sellers. By nature, these auctions are \emph{online} and produce a very high frequency stream of data. This results in a computational strain that requires algorithms be \emph{real-time}. Unfortunately, existing methods inherited from the batch setting, suffer $\mc{O}(\sqrt{t})$ time/memory complexity at each update, prohibiting their use. In this paper, we provide the first algorithm for online learning of monopoly prices in online auctions whose update is constant in time and memory.

\end{abstract}

\section*{Introduction}\label{sec:intro}

Over the last two decades, online display advertising has become a key monetisation stream for many businesses. The market for the trading of these ads is controlled by a very small ($<\!10$) number of large intermediaries who buy and sell at auction, which means that a seller-buyer pair might trade together in $10^{10}$ to $10^{12}$ auctions per day. Repeated auctions on this scale raise the stakes of revenue maximisation, while making computational efficiency a key consideration. In his \citeyear{myerson1981optimal} seminal work on revenue maximisation,  \citeauthor{myerson1981optimal} described \emph{the} revenue-maximising auction when buyers' bid distributions are known. In the context of online display ads these distributions are private, but the large volume of data collected by sellers on buyers opens the way to \emph{learning} revenue maximising auctions.

\paragraph{Optimal vs. tractable.} The learning problem associated with the \citeauthor{myerson1981optimal} auction has infinite pseudo-dimension \cite{morgenstern2015pseudo}, making it unlearnable \cite{pollard1984convergence}. 2\textsuperscript{nd}-price auctions with personalised reserve prices (\ie different for each bidder) stand as the commonly accepted compromise between optimality and tractability. They provide a 2-approximation \cite{Roughgarden2016} to the revenue of the \citeauthor{myerson1981optimal} auction while securing finite pseudo-dimension.

\paragraph{Monopoly prices.} 2\textsuperscript{nd}-price auctions with personalised reserves can be either \textit{eager} or \textit{lazy}. In the \emph{eager} format, the item goes to the highest bidder \emph{amongst} those who cleared their reserve prices and goes unsold if none of them did. In the \emph{lazy} format, the item goes to the highest bidder \emph{if} he cleared his reserve price and goes unsold if he did not. While an optimised \emph{eager} version leads to better revenue than an optimised \emph{lazy} version, solving the eager auction's associated Empirical Risk Minimisation (ERM) problem is NP-hard \cite{PaesLeme:2016:FGP:2872427.2883071} and even APX-hard \cite{Roughgarden2016}. In contrast, solving the ERM for the \emph{lazy} version can be done in polynomial time \cite{Roughgarden2016}: it amounts to computing a bidder-specific quantity called the \emph{monopoly price}. Not only is the \emph{monopoly price} the optimal reserve in the \emph{lazy} 2\textsuperscript{nd}-price auction, but it is also a provably good reserve in the \emph{eager} one \cite{Roughgarden2016}, and the optimal reserve in posted-price \cite{PaesLeme:2016:FGP:2872427.2883071}. This makes learning monopoly prices for revenue maximisation an important and popular research direction.

\paragraph{Online learning.} Finding the monopoly price in a repeated 2\textsuperscript{nd}-price auction is a natural sequential decision problem based on the incoming bids. All three aforementioned settings relating to the monopoly price have been studied: posted-price \cite{amin2014repeated,blum2004online}, eager \cite{cesa2014regret,Roughgarden2016,kleinberg2003value}, and lazy which we study \cite{blum2005near,blum2004online,mohri2016learning, rudolph2016objective,bubeck2017online,shen2019learning}. Each setting also corresponds to a different observability structure. The offline problems are well understood, but no online method offers the $\mc{O}(1)$ \emph{efficiency} crucial for real-world settings. We focus, therefore, on the key problem of learning monopoly prices, online and efficiently, in the stationary and non-stationary cases.

\paragraph{Structure of the paper.} We propose a real-time first-order algorithm which makes online learning of monopoly prices computationally feasible, when interacting with stationary and non-stationary buyers. In Sec.~\ref{sec:setting}, we detail the setting and problem we consider. We review, in Sec.~\ref{sec:related.work.challenges}, the existing approaches and stress the challenges of the problem including overcoming computational complexities. Our approach, based on convolution and the $\mc{O}(1)$ Online Gradient Ascent algorithm, is described in Sec.~\ref{sec:smoothing.method}. We study performance for stationary bidders in Sec.~\ref{sec:convergence_stationary} with $1/\sqrt{t}$ convergence rate to the monopoly price, and for non-stationary bidders in Sec.~\ref{sec:tracking} with $\cO(\sqrt{T})$ dynamic regret.

\section{Setting}\label{sec:setting}
A key property of a personalised reserve price in a lazy second price auction is that it can be optimised separately for each bidder \cite{PaesLeme:2016:FGP:2872427.2883071}. For a bidder with bid cdf $F$, the optimal reserve price is the monopoly price $r^*$, \ie the maximiser of the \emph{monopoly revenue} defined as
\begin{equation}
    \Pi^F(r) = r(1-F(r))\,.
    \label{eq:def.monopoly.revenue}
\end{equation}
Thus, without loss of generality, we study each bidder separately in the following repeated game: the seller sets a reserve price $r$ and simultaneously the buyer submits a bid $b \in [0, \bar{b}]$ drawn from his private distribution $F$, whose pdf is $f$. The seller then observes $b$ which determines the instantaneous revenue
\begin{equation}
    p(r,b) = r \indicator{r \leq b}~~~~\text{with}~~~~\EE_{F}[p(r,b)]=\Pi^F(r)\,.
    \label{eq:def.instantaneous.revenue}
\end{equation}
In this work, we consider two settings, depending whether the bid distribution is stationary or not.

\paragraph{Stationary setting.} Stationarity here means $F$ is fixed for the whole game. We thus have a stream of i.i.d. bids from $F$, where the seller aims to maximise her long term revenue
\begin{equation*}
    \lim_{T \rightarrow \infty} \frac{1}{T}\sum_{t=1}^T p(r,b_t) = \Pi^F(r)\,.
\end{equation*}
Or, equivalently, tries to construct a sequence of reserve prices $\{r_t\}_{t\ge 1}$ given the information available so far encoded in the  filtration $\mathcal{F}_t = \sigma(r_1,b_1,\ldots,b_{t-1})$ such that $\Pi^{F}(r_t) \rightarrow \Pi^F(r^*)$ as fast as possible.

\paragraph{Non-stationary setting.} In real-world applications, bid distributions may change over time based on the current context. For example, near Christmas the overall value of advertising might go up since customers spend more readily, and thus bids might increase. The bidder could also refactor his bidding policy for reasons entirely independent of the seller. We relax the stationarity assumption by allowing bids to be drawn according to a sequence of distributions $\{F_t\}_{t\geq 1}$ that varies over time. As a result, the monopoly prices $\{r^*_t\}_{t\geq 1}$ and optimal monopoly revenues $\{\Pi^{F_{t}}(r^*_t)\}_{t\geq1}$ fluctuate and convergence is no longer defined. Instead, we evaluate the performance of an adaptive sequence of reserve prices by its expected dynamic regret 
\begin{equation}
    R(T) = \mathbb{E} \left( \sum_{t=1}^T \Pi^{F_t}(r^*_t) - \Pi^{F_t}(r_t) \right)\,,
    \label{eq:def.regret}
\end{equation}
and our objective is to track the monopoly price as fast as possible to minimise the dynamic regret.

\section{Related work, challenges and contributions}\label{sec:related.work.challenges}
\subsection{Related Work}

\emph{Lazy} 2\textsuperscript{nd}-price auctions have been studied both in batch \cite{mohri2016learning, shen2019learning, rudolph2016objective, PaesLeme:2016:FGP:2872427.2883071} and online \cite{blum2005near, blum2004online, bubeck2017online} settings. All existing approaches aim to optimise, at least up to a precision of $1/\sqrt{t}$, the ERM objective
\begin{equation}
      \Pi^{\hat{F}_t}(r) = r(1-\hat{F}_t(r)) = \frac{1}{t}\sum_{i=1}^t r \indicator{r \leq b_i}.
    \label{eq:def.empirical.monopoly.revenue}
\end{equation}

However, regardless of how well-behaved $\Pi^F$ is, $\Pi^{\hat{F}_t}$ is very poorly behaved for optimisation: it is non-smooth, non-quasi-concave, discontinuous, and is increasing everywhere (see Fig.\ref{fig:monopoly_revenue}, center, dashed). Thus direct optimisation with first order methods is not applicable. Some attempts have been made in the batch setting to optimise surrogate objectives, but ended up with an irreducible bias \cite{rudolph2016objective} or with hyper-parameters whose tuning is as hard as the initial problem \cite{shen2019learning}. 
The classical approach relies on sorting the bids $\{b_i\}_{i=1}^t$ to be able to enumerate $\Pi^{\hat{F}_t}(.)$ linearly over $\{b_i\}_{i=1}^t$ \cite{mohri2016learning, PaesLeme:2016:FGP:2872427.2883071}. A popular improvement in terms of complexity, especially used in online approaches \cite{blum2004online, blum2005near} consists in applying the same principle on a regular grid of resolution $1/\sqrt{t}$, which in the end provides an update with complexity $\cO(\sqrt{t})$ and a memory requirement of $\cO(\sqrt{t})$.

This idea of discretising the bid space was widely adopted in partially observable settings -- \eg online \emph{eager} or online posted-price auctions  -- as it reduces the problem to a multi-armed bandit with its well-studied algorithms \cite{kleinberg2003value, cesa2014regret, Roughgarden2016} at the price of still suffering the same update and memory complexities of $\cO(\sqrt{t})$.

Numerous approaches with adversarial bandits also followed this discretisation approach to adapt Exp3/Exp4 \cite{kleinberg2003value,Cohen2016, bubeck2017online} to all the settings. As a bonus, it also allows to handle the case of non-stationary bidders. However, the work of \citet{amin2014repeated} stresses that bidders cannot behave in an arbitrary way, as they optimise their own objective that is not incompatible with the seller's\footnote{An auction is not a zero-sum game: if the item goes unsold, neither player receives payoff.}. Hence, the non-stationarity mostly comes from the item's value changing over time. This suggest adapting a regular stochastic algorithms (ERM, UCB...), \eg using sliding windows \cite{garivier2011upper, lattimore2018bandit}. 

Non-smooth or non-differentiable objectives such as $\Pi^{\hat{F}}$ have been studied in both stochastic and $0$-order optimisation. In both, convolution smoothing has been employed to circumvent these problems. In \citet{duchi2012randomized} stochastic gradient with decreasing convolution smoothing is studied for the convex case. Unfortunately, very few distributions yield a concave $\Pi^F$. In $0$-order optimisation, the only feedback received for an input is the value of the objective at that input. In this setting, \citet{flaxman2005online} perturb their inputs to estimate a convolved gradient. In contrast, we obtain a closed form and do not need to perturb inputs.

\subsection{Challenges}

The directing challenge of our line of work is to devise an online learning algorithm for monopoly prices with minimal cost, to handle very large real-world data streams. With $10^{10}$ daily interactions in one seller-bidder pair, it is acceptable to forfeit some convergence speed in exchange for feasibility of the algorithm. It is not possible to accept  update complexity or memory requirement scaling with $t$. Our objective is thus to find a method that {\bf 1)} converges to $r^*$ in the stationary setting or has a low regret $R(T)$ in the non-stationary one, {\bf 2)} has $\cO(1)$ memory footprint, and {\bf 3)} computes the next reserve $r_{t+1}$ with  $\cO(1)$ computations.

\paragraph{Computational Complexity.} Unfortunately, none of the previously proposed methods fit these requirements. On one hand, all methods based on solving ERM by sorting \cite{cesa2014regret, Roughgarden2016} need to keep all past bids in memory ($\cO(t)$ dependency) and their update steps require at best $\cO(\sqrt{t})$ computations. On the other hand, adversarial methods such as Exp3 or Exp4 \cite{Cohen2016,bubeck2017online} are designed for finite action space and thus need to discretise $[0,\bar{b}]$ into $\sqrt{t}$ intervals (to keep their regret guarantees), also leading to a complexity of $\cO(\sqrt{t})$ from sampling to compute $r_{t+1}$.

\paragraph{Gradient bias.} First order methods (\eg Online Gradient Ascent a.k.a. OGA) are standard tools in online learning and enjoy $\mc{O}(1)$ update and memory. This makes them great candidates for our problem. OGA requires $3$ ingredients to converge: an objective whose gradients always point towards the optimum\footnote{Pseudo-concave or variationally coherent.}, a gradient estimator with bounded variance, and which is unbiased. Unfortunately, discontinuity of $p$ makes $\nabla p$ a \emph{biased} estimator of $\nabla\Pi^F$. A natural approach is to construct a surrogate for $p$ which has unbiased gradients and preserves the other two conditions. 

\paragraph{Surrogate consistency.} Optimising a surrogate objective inherently creates a \emph{bias}, which has to be reduced over-time. To do so without breaking the convergence of OGA, we must conduct a careful finite time analysis of the algorithm, which is an \emph{analytical} challenge. We must re-analyse classical results (\eg \citet{bach2011non,duchi2012randomized}) for varying objectives: the challenge is to design a bias reduction procedure, and then integrate it into these proofs to show we preserve consistency.

\paragraph{Non-stationarity.} Resolving the above challenges is sufficient to achieve efficient convergence in the stationary setting. However, it is not sufficient in order to track non-stationary bid distributions. Taking a constant surrogate and learning rate, it is possible to adapt the stationary solution to the non-stationary case and keep its computational efficiency. The challenge is to devise this adaptation, and then to derive (sub-linear) regret for it.

\subsection{Contributions}
We propose a smoothing method for creating surrogates in pseudo-concave problems with biased gradients. We use it to create a first-order real-time optimisation algorithm which reduces the surrogate's bias during optimisation. We prove convergence and give rates in the stationary setting and dynamic regret bounds for tracking. In more detail:

\paragraph{Smooth surrogates for first order methods.} We first translate standard auction theory assumptions (\eg increasing virtual value) into properties of generalised concavity of the monopoly revenue (Prop.~\ref{prop:pseudo.log.concave.monopoly.revenue}). Next, we introduce our smoothing method and show (in Prop.~\ref{prop:preservation.concavity.smoothing}) that it preserves the properties from Prop.~\ref{prop:pseudo.log.concave.monopoly.revenue} while offering arbitrary smoothness, which fixes the biased gradient problem. Finally, we provide controls, via the choice of the kernel, on the bias and variance of the gradient estimates of our surrogate, which is now ready for OGA (Prop.~\ref{prop:control.bias.variance.smoothing}). 

\paragraph{Consistent algorithm for stationary bidders.} 
We construct an algorithm (\dcoga) which performs gradient ascent while simultaneously decreasing the strength of the smoothing over time, reducing the bias to zero. As a result our algorithm almost surely converges to the monopoly price (Thm.~\ref{thm:as_convergence}) while enjoying computational efficiency. Further, under a minimum curvature assumption, we provide the rate of convergence and optimal tuning parameters (Thm.~\ref{thm:stationary_convergence_speed} and Cor.~\ref{cor:stationary_convergence_speed}). At the cost of a slight degradation in convergence speed (from $\cO(1/t)$ to $\cO(1/\sqrt{t})$), our algorithm has update and memory complexity of $\cO(1)$ which is vital for real-world applications. Results are summarised in Table~\ref{tab:summary}.

\begin{table}[ht]
    \centering
    \begin{tabular}{llll}
         & Update & Memory & Convergence\\
         \hline
        ERM & $\cO(t)$ & $\cO(t)$ & $\cO(1/t)$\\
        Discrete ERM & $\cO(\sqrt{t})$ & $\cO(\sqrt{t})$ & $\cO(1/t)$\\
        \dcoga & $\cO(1)$ & $\cO(1)$ & $\cO(1/\sqrt{t})$\\
    \end{tabular}
    \caption{Comparison of our method (\dcoga) against doing ERM at each step and ERM discretised on a grid of resolution $1/\sqrt{t}$, in terms of complexity and convergence -- \ie  $\lVert\Pi^F(r_t)-\Pi^F(r^*)\rVert$.}
    \label{tab:summary}
\end{table}
    
\paragraph{Tracking for non-stationary bidders.}
Contrary to the stationary setting, when tracking we do not decrease the strength of the smoothing over time. When the bias created is smaller than the noise, our algorithm can still achieve sub-linear dynamic regret when tracking changing bid distributions. For reasonably varying distribution, we show a regret bound of $\mc{O}(\sqrt{T})$ (see Thm.~\ref{thm:tracking} and Cor.~\ref{cor:tracking.regret}).

\section{Smooth Surrogate for First Order Methods}\label{sec:smoothing.method}

\begin{figure*}[ht]
\centering
\includegraphics[width=.32\textwidth]{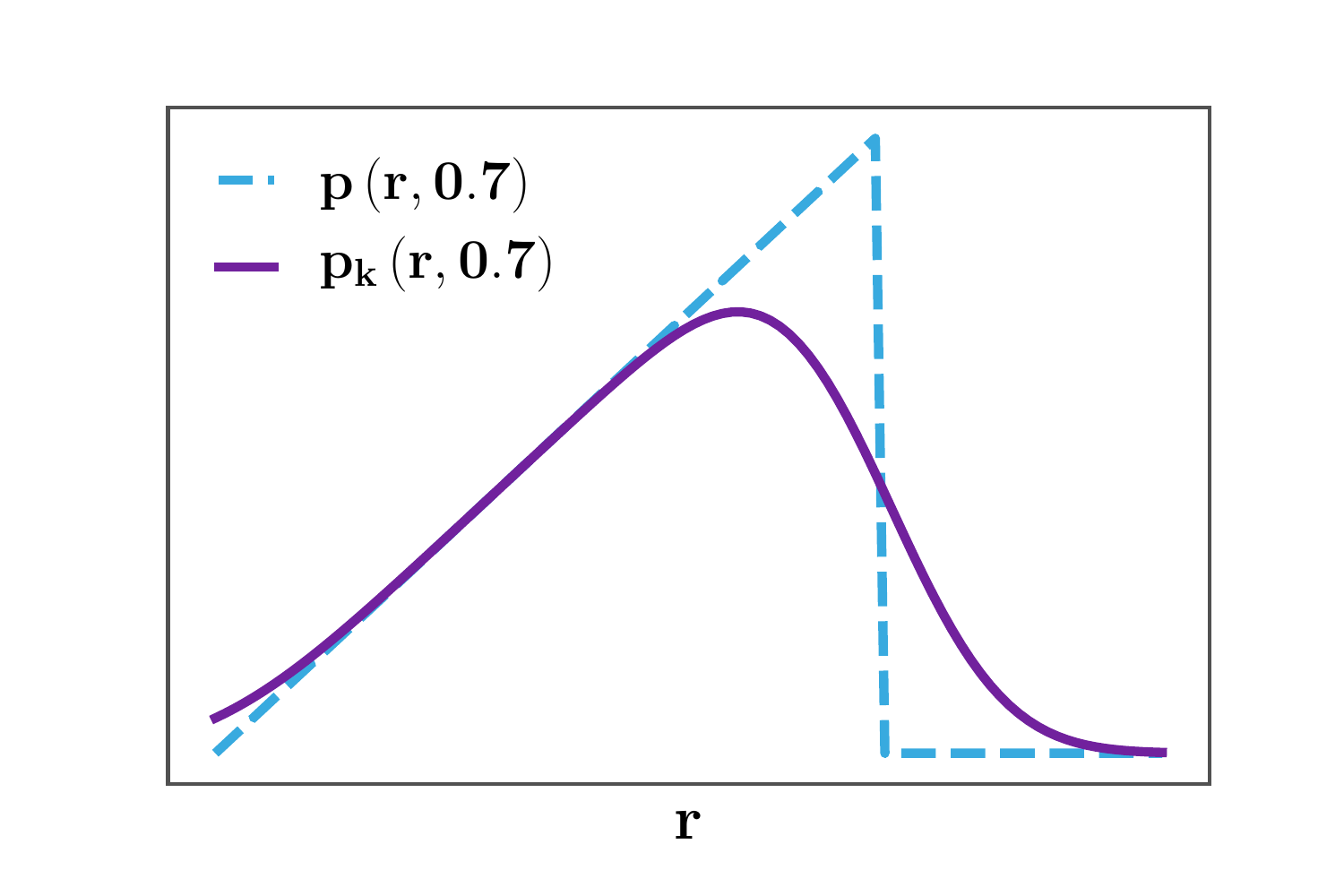}
\includegraphics[width=.32\textwidth]{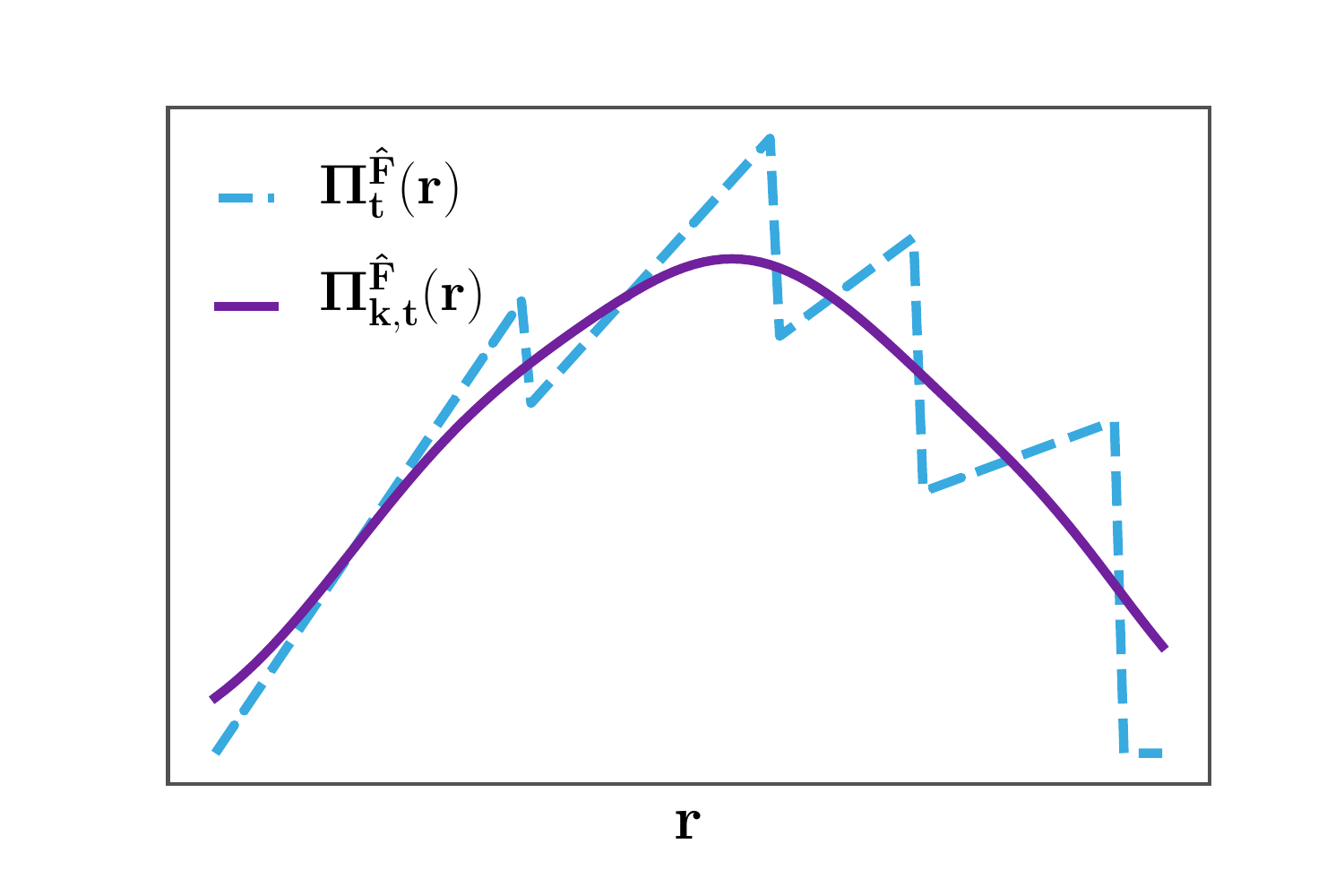}
\includegraphics[width=.32\textwidth]{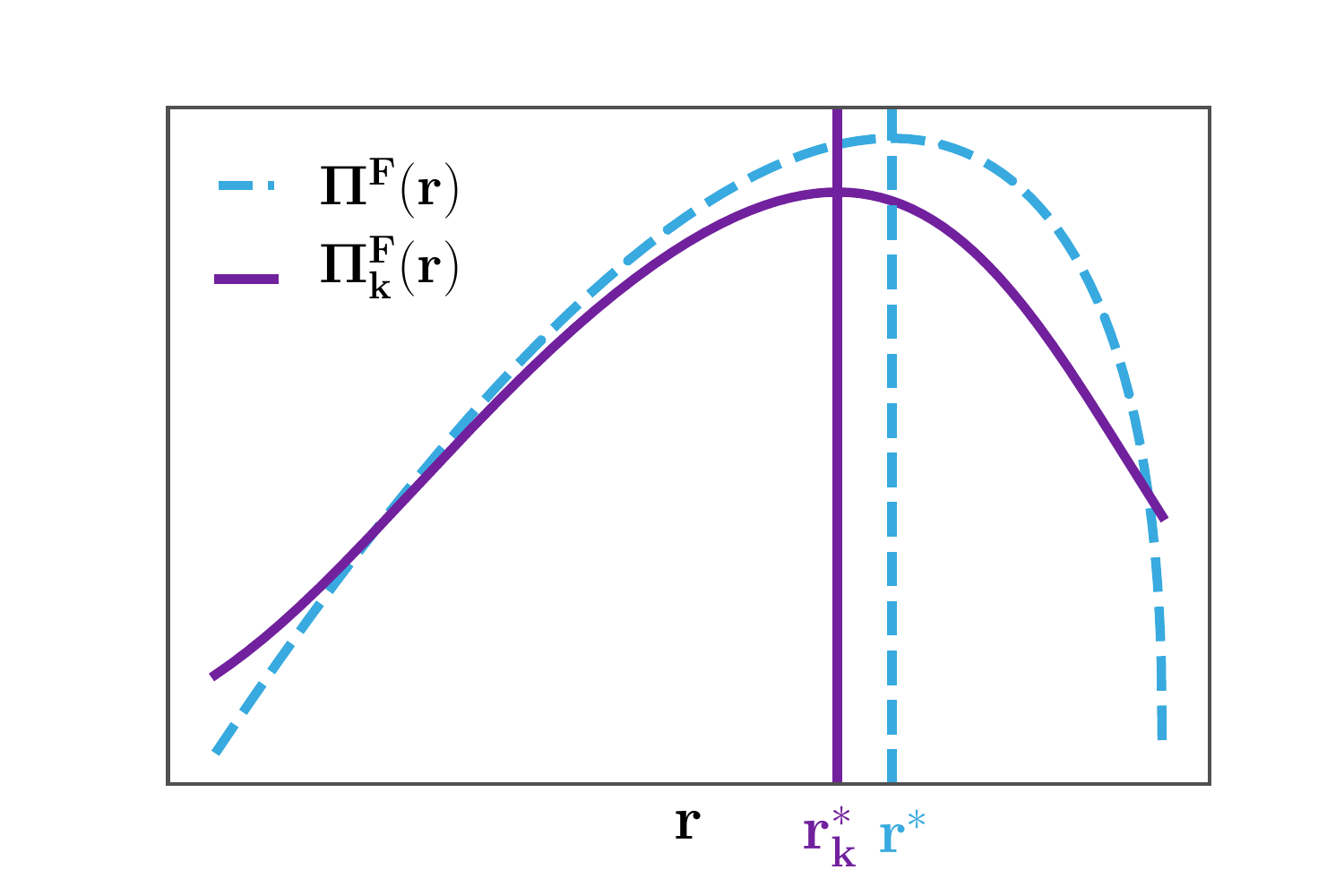}
\caption[]{The effect of smoothing the monopoly revenue of a bidder with $F$ a {Kumaraswamy\footnotemark}(1,0.4) distribution with a Gaussian kernel. Left: smoothing of $p(r,b)$ for $b=0.7$. Center: smoothing of the empirical revenue $\Pi^{\hat{F}}_t(r)$ for some random $b_t$. Right: smoothing of the expected revenue ($\Pi_k^F$ vs $\Pi^F$). Note the difference in the optima (\ie the surrogate bias).}
\label{fig:monopoly_revenue}
\end{figure*}

Our objective, to reiterate, is to design an online optimisation procedure to learn or track the optimal reserve price whose updates require $\mc{O}(1)$ computational and memory cost. To this end, we focus on first order method and consider vanilla Online Gradient Ascent. Unfortunately, the specific problem of learning a monopoly price doesn't provide a way to compute unbiased gradient estimates for $\Pi^F$ from bid samples. We therefore want to design a surrogate that makes $p$ sufficiently smooth so that differentiation and integration commute. This is a well known property of convolutional smoothing, suggesting its use. In addition, we must ensure our surrogate preserves the optimisation properties that $\Pi^F$ already has. These must thus be studied first, before smoothing to obtain a surrogate.

\subsection{Properties of the monopoly revenue}%
\label{subsec:concavity.monopoly.revenue}

The standard assumptions of auction theory are made to guarantee that the monopoly price exists -- and that the optimisation problem is well-posed. This is generally stated as ``the monopoly revenue is \emph{quasi-concave}''. We refine this characterisation by translating the assumptions we make into specific concavity properties of the monopoly revenue in Prop \ref{prop:pseudo.log.concave.monopoly.revenue}. 

\begin{assumption}\label{asmp:finite_variance}
$F \in \cC^{2}\big([0,\bar{b}]\big)$ and $f = F^\prime > 0$ on $(0,\bar{b})$.%
\end{assumption}
\begin{assumption}\label{asmp:regular}
$F$ is strictly regular on its domain of definition \ie the virtual value $\psi(b) = b - \frac{1-F(b)}{f(b)}$ is increasing.
\end{assumption}
\begin{assumption}\label{asmp:mhr}
$F$ has strongly increasing hazard rate on its domain \ie the hazard rate $\lambda(b) = \frac{f(b)}{1-F(b)}$ satisfies:
\begin{equation*}
    \forall 0\leq b_1 \leq b_2 \leq \bar{b},\hspace{1mm} \lambda(b_2) - \lambda(b_1) \geq \mu (b_2 - b_1).
\end{equation*}
\end{assumption}
\ref{asmp:finite_variance} is made for ease of exposition.  \ref{asmp:regular} is a standard auction theory assumption (see~\citet{krishna2009auction} for a review), and implies a pseudo-concave revenue, as shown by Prop \ref{prop:pseudo.log.concave.monopoly.revenue}. \ref{asmp:regular} is satisfied by common distributions, exhaustively listed in \cite{ewerhart2013regular} and by real-world data -- see \eg \cite{Ostrovsky2011}. \ref{asmp:mhr} strengthens \ref{asmp:regular} by requiring a minimum curvature around the maximum.

\begin{restatable}{proposition}{concavityrevenue}\label{prop:pseudo.log.concave.monopoly.revenue}
Let $F$ satisfy~\ref{asmp:finite_variance} and $\Pi^F$ be its associated monopoly revenue. Then, $\Pi^F \in \cC^{2}\big([0,\bar{b}]\big)$, $\Pi^F > 0$ on $(0,\bar{b})$ and:
\begin{itemize}
\item under~\ref{asmp:regular}, $\Pi^F$ is strictly pseudo-concave,
\item under~\ref{asmp:mhr}, $\Pi^F$ is $\mu$-strongly log-concave, i.e. $\log \Pi^F$ is $\mu$-strongly concave on $(0,\bar{b})$.
\end{itemize}
\end{restatable}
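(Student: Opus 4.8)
The plan is to route all three assertions through the single identity $(\Pi^F)'(r)=-f(r)\,\psi(r)$, reducing everything to one-line computations with $F$, the virtual value $\psi$, and the hazard rate $\lambda$, plus one classical characterisation of pseudo-concavity. To start, $\Pi^F(r)=r\,(1-F(r))$ is the product of the $\mathcal{C}^{\infty}$ map $r\mapsto r$ with the $\mathcal{C}^{2}$ map $r\mapsto 1-F(r)$ (by \ref{asmp:finite_variance}), hence $\Pi^F\in\mathcal{C}^{2}([0,\bar{b}])$; on $(0,\bar{b})$ one has $r>0$, and since $f>0$ makes $F$ strictly increasing with $F(\bar{b})=1$ we get $1-F(r)>0$, so $\Pi^F>0$ there. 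Differentiating, $(\Pi^F)'(r)=(1-F(r))-r f(r)=-f(r)\,\psi(r)$ on $(0,\bar{b})$, which is just the definition of $\psi$; this identity drives the rest.

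For pseudo-concavity under \ref{asmp:regular}: $\Pi^F$ is continuous on $[0,\bar{b}]$, vanishes at both endpoints ($\Pi^F(0)=\Pi^F(\bar{b})=0$), and is strictly positive inside, so it attains its maximum at an interior point $r^*$, where $(\Pi^F)'(r^*)=0$ and hence $\psi(r^*)=0$ since $f(r^*)>0$. Strict monotonicity of $\psi$ then makes $r^*$ the unique zero of $\psi$ and forces $\psi<0$ on $(0,r^*)$ and $\psi>0$ on $(r^*,\bar{b})$; combined with $f>0$ this gives $(\Pi^F)'>0$ on $(0,r^*)$, $(\Pi^F)'(r^*)=0$, and $(\Pi^F)'<0$ on $(r^*,\bar{b})$. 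It then suffices to invoke --- with a two-line verification from the definition --- the standard fact that a $\mathcal{C}^{1}$ function on an interval whose derivative is positive, then zero at a single point, then negative, is strictly pseudo-concave: if $(\Pi^F)'(x)(y-x)\le 0$ for $x\ne y$, the sign of $(\Pi^F)'(x)$ forces $y$ onto the monotone branch of $x$ running away from $r^*$, so $\Pi^F(y)<\Pi^F(x)$.

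For $\mu$-strong log-concavity under \ref{asmp:mhr}: on $(0,\bar{b})$ we have $\log\Pi^F(r)=\log r+\log(1-F(r))$, which is $\mathcal{C}^{2}$ since $1-F>0$ and $F\in\mathcal{C}^{2}$, with $(\log\Pi^F)'(r)=1/r-\lambda(r)$ and therefore $(\log\Pi^F)''(r)=-1/r^2-\lambda'(r)$. Assumption \ref{asmp:mhr} says $b\mapsto\lambda(b)-\mu b$ is non-decreasing, so $\lambda'\ge\mu$ on $(0,\bar{b})$; hence $(\log\Pi^F)''(r)\le-\mu-1/r^2\le-\mu$, which is exactly $\mu$-strong concavity of $\log\Pi^F$ on $(0,\bar{b})$.

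None of the steps is deep; the only points requiring care are (i) placing the maximiser $r^*$ strictly inside $(0,\bar{b})$ --- which I handle via $\Pi^F(0)=\Pi^F(\bar{b})=0<\Pi^F$ on $(0,\bar{b})$, thereby avoiding any delicate analysis of the limits of $\psi$ at the endpoints (where $f$ may vanish) --- and (ii) stating the pseudo-concavity characterisation precisely, since that definition is easy to misapply. I expect (ii) to be the main friction point; the rest is routine differentiation under \ref{asmp:finite_variance}.
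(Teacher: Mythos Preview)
Your proof is correct and follows essentially the same approach as the paper: both hinge on the identity $(\Pi^F)'(r)=-f(r)\psi(r)$ for the pseudo-concavity claim and on the decomposition $\log\Pi^F=\log r+\log(1-F)$ for the log-concavity claim. The only notable difference is that you verify $\mu$-strong concavity of $\log(1-F)$ via the second-order condition $(\log(1-F))''=-\lambda'\le-\mu$, whereas the paper uses the first-order characterisation $(\nabla G(r_2)-\nabla G(r_1))(r_2-r_1)\le-\mu(r_2-r_1)^2$ directly from~\ref{asmp:mhr}; your route needs $\lambda\in\cC^1$, which is indeed guaranteed by $F\in\cC^2$ in~\ref{asmp:finite_variance}, so there is no gap.
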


\subsection{A Method Based on Smoothing}

Prop.~\ref{prop:pseudo.log.concave.monopoly.revenue} ensures that the first condition for the convergence of OGA is met under standard assumptions~\ref{asmp:regular} or~\ref{asmp:mhr}. The main difficulty in the way of using OGA for revenue optimisation -- it must be stressed -- lies in the undesirable shape of the instantaneous revenue $p$. Indeed, $p$ is non-smooth (discontinuous even) and cannot be used to construct an unbiased estimate of $\nabla \Pi^F(r)$, which is necessary for first order-methods.

\citet{mohri2016learning} suggests replacing $p(\cdot,b)$ by a continuous upper bound. This surrogate can be used for OGA, but it has potentially large areas of zero-gradient, which means it doesn't learn from all samples. We give a general surrogate construction (based on convolutional smoothing) which \textbf{1)} can approximate the original monopoly revenue to arbitrary accuracy, \textbf{2)} preserves the concavity properties of $\Pi^F$, \textbf{3)} offers the desired level of smoothness, and \textbf{4)} exhibits no areas of zero gradient.

Formally, given a kernel $k$ (a metaparameter), we use convolution smoothing to create surrogates for $p$ and $\Pi^F$:
\begin{equation}
    p_k(r,b) = (p(\cdot,b)\star k)(r)\,\, , \,\, \Pi_k^F(r) = (\Pi^F \star k)(r).\label{eq:revenue.surrogate}
\end{equation}
This smoothing guarantees that $\nabla p_k(\cdot,b)$ is an unbiased estimate of $ \nabla \Pi_k^F$. On Fig.~\ref{fig:monopoly_revenue}, we illustrate the effect of this smoothing on $p$, $\Pi^{\hat{F}_t}$, and $\Pi^F$.
We introduce a set $\cK$ of admissible kernels  which contains all strictly positive, strictly log-concave, $\cC^1(\lR)$, $\cL^1(\lR)$, normalised (\ie $\int_\lR k(x)\de x=1$) functions. $\cK$ contains a large family of kernels, including standard smoothing ones such as Gaussians and mollifiers. Prop. \ref{prop:preservation.concavity.smoothing} shows that convolution with elements of $\mc{K}$ preserves pseudo- and log-concavity.

\begin{restatable}{proposition}{concavitysmoothing}
\label{prop:preservation.concavity.smoothing}
    Let $F$ satisfy~\ref{asmp:finite_variance} and $\Pi^F$ be its associated monopoly revenue. Let $k \in \cK$, then:
    \begin{itemize}
        \item $\Pi_k^F$ and $p_k$ are $\cC^1(\mathbb{R})$,
        \item $\Pi_k^F(r)\!\!=\!\!\mathbb{E}_{F} \big( p_k(r,b) \big)$ and $\nabla \Pi_k^F(r)\!\!=\!\!\mathbb{E}_{F} \big(\nabla p_k(r,b) \big)$,
        \item under~\ref{asmp:regular}, $\Pi_k^F$ is strictly pseudo-concave on $\mathbb{R}$,
        \item under~\ref{asmp:mhr}, $\Pi_k^F$ is strictly log-concave on $\RR$.
    \end{itemize}
\end{restatable}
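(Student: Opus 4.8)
\emph{Setup and items 1--2.} The plan is to work throughout with $p(\cdot,b)$ and $\Pi^F$ extended by zero outside $[0,\bar b]$; this is the natural extension (the reserve is searched in $[0,\bar b]$) and it keeps $\Pi^F=\EE_F[p(\cdot,b)]$ valid on all of $\RR$, since both sides vanish off $[0,\bar b]$. I would first record the elementary facts about $k\in\cK$ used everywhere: being strictly log-concave, $k$ is unimodal, so $k'$ changes sign once, $\int_\RR|k'|=2\sup k<\infty$, and $k$ has exponential tails; and $p(\cdot,b),\Pi^F$ are then bounded with compact support. With these, differentiation under the integral sign applies to $p_k(r,b)=\int p(s,b)k(r-s)\,\de s$ and to $\Pi_k^F=\Pi^F\star k$, giving $\nabla p_k(\cdot,b)=p(\cdot,b)\star k'$ and $\nabla\Pi_k^F=\Pi^F\star k'$, both continuous by dominated convergence ($k'$ continuous, integrable), hence $p_k,\Pi_k^F\in\cC^1(\RR)$. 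An integration by parts whose boundary terms vanish (as $\Pi^F$ is continuous, compactly supported, with $\Pi^F(0)=\Pi^F(\bar b)=0$) also yields $\nabla\Pi_k^F=(\nabla\Pi^F)\star k$. For the interchange, Fubini is legitimate since $|p(s,b)k(r-s)|\le\bar b\,k(r-s)$ is $F\otimes\de s$-integrable, so $\Pi_k^F=(\EE_F[p(\cdot,b)])\star k=\EE_F[p_k(\cdot,b)]$, and the same argument with $k'$ in place of $k$ ($\int|k'|<\infty$) gives $\nabla\Pi_k^F=\EE_F[\nabla p_k(\cdot,b)]$.

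\emph{Item 4 (log-concavity under~\ref{asmp:mhr}), which also gives item 3 under~\ref{asmp:mhr}.} By Prop.~\ref{prop:pseudo.log.concave.monopoly.revenue}, $\log\Pi^F$ is $\mu$-strongly concave on $(0,\bar b)$; the zero-extended $\Pi^F$ has convex support $[0,\bar b]$, hence is log-concave on $\RR$. I would then invoke closure of log-concavity under convolution (Pr\'ekopa), with strictness inherited from the strictly log-concave factor $k$, to conclude that $\Pi_k^F=\Pi^F\star k$ is strictly log-concave on $\RR$; moreover $\Pi_k^F>0$ everywhere since $k>0$ and $\Pi^F\ge 0$ is not a.e.\ zero. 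Then $(\log\Pi_k^F)'=\nabla\Pi_k^F/\Pi_k^F$ is strictly decreasing, so $\nabla\Pi_k^F$ has at most one zero and switches from $+$ to $-$, which for a positive $\cC^1$ function is precisely strict pseudo-concavity --- so item 3 also follows under~\ref{asmp:mhr}.

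\emph{Item 3 under~\ref{asmp:regular} --- the crux.} On $(0,\bar b)$ one has $\nabla\Pi^F(r)=1-F(r)-rf(r)=-f(r)\psi(r)$ with $f>0$, so~\ref{asmp:regular} (strictly increasing $\psi$) makes $\nabla\Pi^F$ change sign exactly once, from $+$ to $-$, at the unique monopoly price $r^*$ (equivalently, this is the content of strict pseudo-concavity of $\Pi^F$ from Prop.~\ref{prop:pseudo.log.concave.monopoly.revenue}); together with $\nabla\Pi^F\equiv 0$ off $[0,\bar b]$, the a.e.\ derivative of the extended $\Pi^F$ has exactly one sign change. I would then use that a strictly log-concave $k$ is a (strict) P\'olya frequency function, so $(r,s)\mapsto k(r-s)$ is totally positive and convolution with $k$ is variation-diminishing (Schoenberg): writing $Z(\cdot)$ for the number of sign changes, $Z(\nabla\Pi_k^F)=Z((\nabla\Pi^F)\star k)\le Z(\nabla\Pi^F)=1$, and when this bound is attained the $+/-$ arrangement is preserved. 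The bound is attained because $\Pi_k^F>0$ everywhere while $\Pi_k^F(r)\to 0$ as $|r|\to\infty$ (compact support of $\Pi^F$, $k$ vanishing at infinity), so $\Pi_k^F$ is not monotone; and strict total positivity prevents $\nabla\Pi_k^F$ from vanishing on an interval. Hence $\nabla\Pi_k^F$ has a single zero with a $+/-$ sign change, i.e.\ $\Pi_k^F$ is strictly pseudo-concave on $\RR$.

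\emph{Main obstacle.} The only non-routine step is this last one: pseudo-concavity, unlike log-concavity, is not preserved by convolution in general, so item 3 under~\ref{asmp:regular} genuinely requires the total-positivity / variation-diminishing machinery, together with a little extra care to upgrade ``at most one sign change'' to ``exactly one zero with the correct sign pattern'' so that \emph{strict} pseudo-concavity holds. The rest reduces to differentiation under the integral, Fubini, and Pr\'ekopa.
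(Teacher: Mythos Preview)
Your proposal is correct, and for items 1, 2, and 4 it matches the paper's argument essentially verbatim (convolution regularity, Fubini--Tonelli, and Pr\'ekopa/Saumard for stability of log-concavity). The genuine difference is in item 3 under~\ref{asmp:regular}. The paper does not invoke Schoenberg/Karlin variation-diminishing; instead it proves a tailored strict extension of Ibragimov's unimodality theorem (their Lem.~\ref{le:ibragimov.extension}): given a critical point $x^*$ of $\Pi_k^F$, it splits $\nabla\Pi_k^F(x^*+\delta)=\int\nabla\Pi^F(t)\,k(x^*+\delta-t)\,\de t$ at the unique critical point $y^*$ of $\Pi^F$, and uses the strict log-concavity ratio inequality $k(v+\delta)/k(v)>k(u+\delta)/k(u)$ for $u>v$ to show the derivative is strictly negative for $\delta>0$ (and strictly positive for $\delta<0$). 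Your route via total positivity is more conceptual and immediately generalises (any number of sign changes, any strictly PF$_2$ kernel), but it outsources the work to a nontrivial external theorem and leaves the upgrade from ``at most one sign change'' to ``exactly one isolated zero'' slightly implicit; the paper's argument is longer but fully self-contained and yields the strict monotonicity on each side of $r_k^*$ directly, which is exactly what strict pseudo-concavity needs. The two approaches are in fact close cousins: Ibragimov's theorem is precisely the one-sign-change instance of the variation-diminishing property, and the paper's ratio trick is the elementary mechanism behind TP$_2$.
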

\begin{proof}
See App.~\ref{app:unbiased.preservation.concavity}.%
\end{proof}

%
%

\begin{algorithm}[ht]
\SetAlgoLined
\SetKwInOut{Input}{input}
\Input{$r_{0}$, $\{\gamma_t\}_{t\in\lN}$, $k \in \cK$, $\cC \subset [0, \bar{b}]$}
 \For{$t = 1$ \KwTo $+\infty$}{
 observe $b_t$\;
 $r_{t} \leftarrow {\rm proj}_{\cC} \left(r_{t-1} + \gamma_t \nabla p_{k}(r_{t-1}, b_t)\right)$\;
 }
 \caption{\coga}\label{alg:conv-oga}
\end{algorithm}
Prop. \ref{prop:preservation.concavity.smoothing} guarantees that the surrogate satisfies the pseudo-concavity and unbiased gradient conditions of  OGA. Applying OGA to the surrogate $\Pi_k^F$ gives Alg. \ref{alg:conv-oga}. Note that as a property of convolution, $\nabla p_k(\cdot,b)= p(\cdot, b)\star \nabla k$, which is a simple (generally closed-form) computation.

Prop. \ref{prop:control.bias.variance.smoothing} will show that the bounded variance condition of OGA also holds. Since OGA's three conditions are satisfied, we can guarantee convergence to the maximum $r_k^*$ of $\Pi_k^F$ (see \eg \cite{bottou1998online}). However, in general $r_k^*$ is not the monopoly price $r^*$ and the surrogate is biased. Prop. \ref{prop:control.bias.variance.smoothing} also gives a control on this bias in terms of the $L1$ distance between the cdf $K$ of $k$ and the cdf $\II_{\RR^+}$ of the Dirac mass $\delta_0$, which is the only kernel to guarantee $r_{\delta_0}^*=r^*$.

\footnotetext{This distribution satisfies our concavity assumptions and can display highly eccentric behaviour for easy visualisation of the impact of the surrogate.}

\begin{restatable}{proposition}{controlbiasvariance}%
\label{prop:control.bias.variance.smoothing}
    Let $F$ satisfy~\ref{asmp:finite_variance}, $k \in \cK$. Let $r^*$ and $r^*_k$ be the monopoly prices associated with $\Pi^F$ and $\Pi_k^F$. Then, the bias $B_k = |\Pi^F(r^*) - \Pi^F(r^*_k)|$ and the instantaneous convolved gradient second moment $V_k = \max_{r\geq 0} \mathbb{E}_{b \sim F} \big(| \nabla p_k(r,b)|^2 \big)$ are upper bounded by 
    \begin{itemize}
        \item $B_k \leq 2 \|\nabla \Pi^F\|_\infty \lVert K - \indicator{\lR^+}\rVert_1$,
        \item $V_k \leq 1 + \bar{b} \big(1+\|\nabla \Pi^F \|_\infty\big) \|k\|_\infty$.
    \end{itemize}
\end{restatable}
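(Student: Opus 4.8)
The plan is to bound $B_k$ and $V_k$ separately, each by a direct computation that exploits the convolutional structure $p_k(\cdot,b)=p(\cdot,b)\star k$ and $\Pi_k^F = \Pi^F \star k$.

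\textbf{Bounding the bias $B_k$.} First I would relate $\Pi_k^F$ to $\Pi^F$ pointwise. Writing the convolution explicitly, $\Pi_k^F(r) = \int_\RR \Pi^F(r-x)k(x)\,\de x$, so $\Pi_k^F(r) - \Pi^F(r) = \int_\RR (\Pi^F(r-x)-\Pi^F(r))k(x)\,\de x$. Using $\Pi^F(r-x)-\Pi^F(r) = -\int_0^x (\nabla\Pi^F)(r-u)\,\de u$ and Fubini to swap the order of integration, one gets a representation of the form $\Pi_k^F(r)-\Pi^F(r) = \int_\RR (\nabla\Pi^F)(r-u)\,(K(u)-\indicator{\RR^+}(u))\,\de u$ (up to sign and a careful treatment of the tail), whence $|\Pi_k^F(r)-\Pi^F(r)| \le \|\nabla\Pi^F\|_\infty \lVert K-\indicator{\RR^+}\rVert_1$ for every $r$. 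The factor of $2$ then comes from the standard argument comparing the values of two functions at their respective maximisers: $\Pi^F(r^*)-\Pi^F(r_k^*) = \big(\Pi^F(r^*)-\Pi_k^F(r^*)\big) + \big(\Pi_k^F(r^*)-\Pi_k^F(r_k^*)\big) + \big(\Pi_k^F(r_k^*)-\Pi^F(r_k^*)\big)$, where the middle bracket is $\le 0$ by optimality of $r_k^*$ for $\Pi_k^F$, and the two outer brackets are each bounded by the sup-norm estimate above; together with the fact that $\Pi^F(r^*)\ge\Pi^F(r_k^*)$ makes $B_k=\Pi^F(r^*)-\Pi^F(r_k^*)\ge 0$, giving $B_k \le 2\|\nabla\Pi^F\|_\infty\lVert K-\indicator{\RR^+}\rVert_1$. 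I should double-check that $\Pi^F$ is taken extended to all of $\RR$ in a way consistent with the rest of the paper (e.g. extended by $0$ outside $[0,\bar b]$, which is where the $\indicator{\RR^+}$ — the cdf of $\delta_0$ — enters as the "ideal" kernel), and that $\|\nabla\Pi^F\|_\infty$ is finite, which follows from Prop.~\ref{prop:pseudo.log.concave.monopoly.revenue} ($\Pi^F\in\cC^2([0,\bar b])$).

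\textbf{Bounding the variance proxy $V_k$.} Here I would use $\nabla p_k(\cdot,b) = p(\cdot,b)\star\nabla k$ (the identity noted right after Alg.~\ref{alg:conv-oga}). Since $p(r,b)=r\indicator{r\le b}$, we have $(\nabla p_k)(r,b) = \int_\RR p(r-x,b)\,\nabla k(x)\,\de x = \int_{-\infty}^{\min(r,?)}\!\!(r-x)\,\nabla k(x)\,\de x$ after resolving the indicator $\indicator{r-x\le b}$, i.e. the integration is over $x \ge r-b$. An integration by parts turns this into something controllable: the boundary term is bounded using $|r-x|\le \bar b$ on the relevant range (after the projection onto $\cC\subset[0,\bar b]$, or more robustly by splitting the integral), and the remaining term integrates $|k|$ against the derivative of $p$, which is $\indicator{r\le b}$ plus a Dirac at $r=b$ — contributing at most $1$ from the indicator part and at most $\bar b\,\|k\|_\infty$ from the jump term. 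Collecting, $|\nabla p_k(r,b)| \le 1 + \bar b\,\|k\|_\infty + \bar b\,\|\nabla\Pi^F\|_\infty\,\|k\|_\infty$ uniformly in $r$ and $b$, which after squaring and taking the expectation (and using that a uniform bound on $|\nabla p_k|$ bounds its second moment) yields $V_k \le 1 + \bar b(1+\|\nabla\Pi^F\|_\infty)\|k\|_\infty$. The cleanest route is probably to integrate by parts so that $\nabla p_k(r,b) = \indicator{r\le b}\cdot(\text{something of size }\le 1) + r\,k(r-b)\,\indicator{r\le b}$-type terms, then bound each piece; I would verify the exact constants against the two asserted terms.

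\textbf{Main obstacle.} The analytically delicate point is the bias representation: swapping $\int_\RR k$ and $\int_0^x \nabla\Pi^F$ and re-expressing the result in terms of $K-\indicator{\RR^+}$ requires care with signs, with the direction of integration for $x<0$ versus $x>0$, and with the tail behaviour (the identity $\int_\RR(K(u)-\indicator{\RR^+}(u))\,\de u$-type integrals must converge, which is exactly the hypothesis $\lVert K-\indicator{\RR^+}\rVert_1<\infty$ implicit in admitting $k$). For the variance bound the only subtlety is handling the Dirac/jump contribution of $\partial_r p$ correctly through the integration by parts — a boundary term that one must not drop. Neither step is conceptually hard, but getting the constants to match the stated inequalities exactly is where the work lies. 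Full details are deferred to the appendix.
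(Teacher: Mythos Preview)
Your treatment of the bias $B_k$ is essentially the paper's argument: the sup-norm estimate $|\Pi^F-\Pi_k^F|\le \|\nabla\Pi^F\|_\infty\lVert K-\indicator{\RR^+}\rVert_1$ (the paper obtains it by one integration by parts and Young's convolution inequality; your Fubini route is equivalent), followed by the same three-term decomposition at $r^*$ and $r_k^*$ with the middle term discarded by optimality. This part is fine.

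The variance bound, however, has a genuine gap. First, $\nabla p_k(r,b)$ depends only on $r$, $b$ and $k$, not on $F$; so $\|\nabla\Pi^F\|_\infty$ cannot enter any \emph{pointwise} bound on $|\nabla p_k(r,b)|$ as you claim. Second, and more importantly, squaring a uniform bound of the form $|\nabla p_k|\le 1+\bar b\,\|k\|_\infty$ only yields $V_k\le (1+\bar b\,\|k\|_\infty)^2$, which is $O(\|k\|_\infty^2)$ and does \emph{not} recover the stated linear-in-$\|k\|_\infty$ bound. That linear dependence is what later drives the summability condition $\sum_t \gamma_t^2\|k_t\|_\infty<\infty$, so the loss is not cosmetic.

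The paper avoids this by bounding the square directly rather than squaring a bound. After the integration by parts one gets the exact expression
\[
\nabla p_k(r,b)=\int_0^b k(r-\tau)\,\de\tau\; -\; b\,k(r-b),
\]
whence $-b\,k(r-b)\le \nabla p_k(r,b)\le 1$ and therefore $|\nabla p_k(r,b)|^2\le 1+b^2k(r-b)^2$. Only \emph{after} taking $\EE_{b\sim F}$ does one pull out a single factor $\|k\|_\infty$, leaving $\int_0^{\bar b} b^2 f(b)\,k(r-b)\,\de b$; the identity $b f(b)=1-F(b)-\nabla\Pi^F(b)\le 1+\|\nabla\Pi^F\|_\infty$ is what finally brings $\|\nabla\Pi^F\|_\infty$ into the bound, and the remaining $\int_0^{\bar b} b\,k(r-b)\,\de b\le \bar b$ closes the argument. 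The missing idea in your proposal is precisely this two-sided pointwise bound on $\nabla p_k$ and the use of $bf(b)\le 1+\|\nabla\Pi^F\|_\infty$ under the expectation.
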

If one chooses a family of kernels, these bounds can be expressed in terms of its parameters. For instance, when $k$ is zero-mean Gaussian with variance $\sigma^2$, one easily recovers: 
\begin{equation}
    \lVert K - \indicator{\lR^+}\rVert_1 = \sigma\sqrt{2/\pi}\,\, ,\,\, \|k\|_\infty = (\sqrt{2\pi}\sigma)^{-1}.%
    \label{eq:rate.gaussian.kernel.example}
\end{equation}

\coga converges only to $r_k^*$. To remedy this, we would like to decrease $B_k$ over time by letting $k\to\delta_0$. However, since $\norm{k}_\infty$ diverges as $k\to \delta_0$, we will have to tread carefully in our analysis which occupies the next section. 

\section{Convergence with Stationary Bidder}\label{sec:convergence_stationary}

To decrease $B_k$ over time, we introducing a decaying kernel sequence $\{k_t\}_{t\in\NN}$ into \coga, giving \dcoga (Alg. \ref{alg:adapt-conv-oga}). This section will demonstrate its consistency and convergence by controlling the trade-off between bias $B_k$ and variance $V_k$, as $B_k$ is reduced to zero over time. This trade-off decomposes the total error as:
\begin{equation*}
    \Pi^F\!(r^*) - \Pi^F\!(r_t) = \underbrace{\Pi^F\!(r^*) - \Pi^F\!(r_k^*)}_{\text{(surrogate bias)}} + \underbrace{\Pi^F\!(r_k^*) - \Pi^F\!(r_t)}_{\text{(estimation)}}.
\end{equation*}
This stresses that the kernel should converge to $\delta_0$ \textit{fast enough} to cancel the bias $B_k$, yet \textit{slowly enough} to control $V_k$ and preserve the convergence speed of OGA. 
\begin{algorithm}[ht]
\SetAlgoLined
\SetKwInOut{Input}{input}
\Input{$r_0$, $\{\gamma_t\}_{t\in\lN}$, $\{k_t\}_{t\in\lN} \in \cK^\lN$, $\cC \subset [0,\bar{b}]$}
 \For{$t = 1$ \KwTo $+\infty$}{
 observe $b_t$\;
 $r_t \leftarrow {\rm proj}_{\cC} \left(r_{t-1} + \gamma_t \nabla p_{k_t}(r_{t-1}, b_t)\right)$\;
 }
 \caption{\dcoga}
 \label{alg:adapt-conv-oga}
\end{algorithm}

\subsection{General Convergence Result}

Thm.~\ref{thm:as_convergence} provides sufficient conditions on the schedules of $k_t$ and $\gamma_t$ that guarantees \dcoga  converges \textit{a.s.} to $r^*$. It is derived by adapting stochastic optimisation results (see \eg  \citet{bottou1998online}) to the changing objective $\Pi_{k_t}^F$.
\begin{restatable}{theorem}{asconvergence}
\label{thm:as_convergence}
Let $F$ satisfy~\ref{asmp:finite_variance} and~\ref{asmp:regular} and $\{k_t\}_{t\in\lN} \in \cK^\lN$. Then, by running \dcoga with $\cC = [0,\bar{b}]$, we have
\[r_t \xrightarrow{a.s.} r^*\]
as long as $\sum_{t=1}^{+\infty} \gamma_t = +\infty$, $\sum_{t=1}^{+\infty} \gamma_t \lVert K_t - \indicator{\lR^+}\rVert_1 < +\infty$ and $\sum_{t=1}^{+\infty} \gamma_t^2 \lVert k_t \rVert_\infty < +\infty$.
\end{restatable}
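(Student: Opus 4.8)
The plan is to follow the classical Robbins--Monro / Bottou stochastic approximation scheme, but carefully tracking the bias introduced by the moving surrogate $\Pi_{k_t}^F$. Write $g_t = \nabla p_{k_t}(r_{t-1}, b_t)$ for the stochastic gradient actually used, and note by Prop.~\ref{prop:preservation.concavity.smoothing} that $\EE[g_t \mid \cF_t] = \nabla \Pi_{k_t}^F(r_{t-1})$. The idea is to run a Lyapunov argument on $h_t = \tfrac12 (r_t - r^*)^2$, expand $h_t$ using non-expansiveness of the projection onto $\cC=[0,\bar b]$ (which contains $r^*$), and obtain a one-step inequality of the form
\begin{equation*}
h_t \le h_{t-1} + \gamma_t (r_{t-1}-r^*)\nabla\Pi_{k_t}^F(r_{t-1}) + \gamma_t (r_{t-1}-r^*)\big(g_t - \nabla\Pi_{k_t}^F(r_{t-1})\big) + \tfrac12 \gamma_t^2 g_t^2.
\end{equation*}
The second-moment term is handled by Prop.~\ref{prop:control.bias.variance.smoothing}: $\EE[g_t^2\mid\cF_t]\le V_{k_t} \le 1 + \bar b(1+\|\nabla\Pi^F\|_\infty)\|k_t\|_\infty$, so the summability hypothesis $\sum \gamma_t^2\|k_t\|_\infty < \infty$ makes $\sum \gamma_t^2 \EE[g_t^2\mid\cF_t] < \infty$ (the constant ``$1$'' part is absorbed since $\sum\gamma_t^2<\infty$ follows from the other hypotheses plus boundedness, or can be noted directly). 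The martingale-difference term has zero conditional expectation and its conditional variance is bounded by $\bar b^2 \gamma_t^2 V_{k_t}$, hence is again summable.

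The crux is the drift term $\gamma_t(r_{t-1}-r^*)\nabla\Pi_{k_t}^F(r_{t-1})$. Were the surrogate the true $\Pi^F$, strict pseudo-concavity (Prop.~\ref{prop:pseudo.log.concave.monopoly.revenue}) would make this term nonpositive, giving an almost-supermartingale. Here it is only pseudo-concave about $r_{k_t}^*$, not $r^*$, so I would split
\begin{equation*}
(r_{t-1}-r^*)\nabla\Pi_{k_t}^F(r_{t-1}) = (r_{t-1}-r_{k_t}^*)\nabla\Pi_{k_t}^F(r_{t-1}) + (r_{k_t}^*-r^*)\nabla\Pi_{k_t}^F(r_{t-1}).
\end{equation*}
The first piece is $\le 0$ by pseudo-concavity of $\Pi_{k_t}^F$ around its own maximiser $r_{k_t}^*$. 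The second piece is an error term of size $\gamma_t |r_{k_t}^* - r^*| \cdot |\nabla\Pi_{k_t}^F(r_{t-1})|$; the gradient factor is bounded (uniformly in $r$ and $t$, since convolution with a normalised kernel cannot increase the sup-norm of the derivative, so $\|\nabla\Pi_{k_t}^F\|_\infty \le \|\nabla\Pi^F\|_\infty$), and $|r_{k_t}^* - r^*|$ must be related to $\|K_t - \indicator{\lR^+}\|_1$. This is exactly what the bias bound of Prop.~\ref{prop:control.bias.variance.smoothing} controls in \emph{revenue} value, $B_{k_t}\le 2\|\nabla\Pi^F\|_\infty\|K_t-\indicator{\lR^+}\|_1$; translating a revenue gap into a price gap near a strict maximum costs at most a square root in general, but since we only need summability of $\gamma_t \cdot(\text{price gap})$ and we are given summability of $\gamma_t\|K_t-\indicator{\lR^+}\|_1$, I expect the intended argument is a direct bound $|r_{k_t}^*-r^*| = \cO(\|K_t-\indicator{\lR^+}\|_1)$ obtained by differentiating the first-order condition, or alternatively to reformulate the whole Lyapunov step in terms of the revenue gap rather than the squared price distance. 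Establishing this Lipschitz-type dependence of $r_k^*$ on the kernel is the step I expect to be the main obstacle and the place where Assumption~\ref{asmp:regular} (strict regularity, hence a non-degenerate maximiser) is essential.

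With these three terms in hand, the inequality reads $\EE[h_t\mid\cF_t] \le h_{t-1} - \gamma_t u_t + \beta_t$ where $u_t = -(r_{t-1}-r_{k_t}^*)\nabla\Pi_{k_t}^F(r_{t-1}) \ge 0$ and $\sum\EE[\beta_t] < \infty$ by the two summability hypotheses. The Robbins--Siegmund almost-supermartingale convergence theorem then gives that $h_t$ converges a.s.\ to a finite limit and $\sum_t \gamma_t u_t < \infty$ a.s. Since $\sum_t\gamma_t = \infty$, along a subsequence $u_t\to 0$; using $r_{k_t}^*\to r^*$ (from the bias hypothesis) and continuity/strict pseudo-concavity to conclude that $u_t\to0$ forces $r_{t-1}\to r^*$ along that subsequence, and finally that the a.s.\ limit of $h_t$ must be $0$. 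I would present the last step as: the limit $h_\infty$ exists a.s., a subsequence of $r_t$ converges to $r^*$ so $h_\infty = 0$ a.s., hence $r_t\to r^*$ a.s. Care is needed that $r_{k_t}^*$ stays in a compact set and that the bound on $u_t$ from below near points bounded away from $r^*$ is uniform for large $t$ (again using $r_{k_t}^*\to r^*$ and equi-pseudo-concavity), which is a routine but slightly delicate compactness argument on $[0,\bar b]$.
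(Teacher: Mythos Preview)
Your Lyapunov setup, projection step, second-moment bound, and quasi-martingale bookkeeping are all correct and match the paper. The gap is in your decomposition of the drift term.

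You split $(r_{t-1}-r^*)\nabla\Pi_{k_t}^F(r_{t-1})$ around the \emph{surrogate} maximiser $r_{k_t}^*$, which forces you to control $|r_{k_t}^*-r^*|$ by $\|K_t-\indicator{\lR^+}\|_1$. You correctly flag this as the obstacle, and it is a real one: under \ref{asmp:finite_variance} and \ref{asmp:regular} alone, $\Pi^F$ is only strictly pseudo-concave, which gives no quantitative curvature at $r^*$. In particular $\nabla^2\Pi^F(r^*)=-\psi'(r^*)f(r^*)$ can vanish (strict monotonicity of $\psi$ does not force $\psi'(r^*)>0$), so the implicit-function route you suggest need not yield a Lipschitz bound, and the revenue gap $B_{k_t}=\cO(\|K_t-\indicator{\lR^+}\|_1)$ cannot in general be upgraded to a price gap of the same order. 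Hence the summability of your $\beta_t$ is not established from the stated hypotheses.

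The paper avoids this entirely by splitting the \emph{gradient} rather than the \emph{base point}:
\[
(r_{t-1}-r^*)\nabla\Pi_{k_t}^F(r_{t-1}) = (r_{t-1}-r^*)\nabla\Pi^F(r_{t-1}) + (r_{t-1}-r^*)\big(\nabla\Pi_{k_t}^F(r_{t-1})-\nabla\Pi^F(r_{t-1})\big).
\]
The first term is $\le 0$ by pseudo-concavity of $\Pi^F$ about its own maximiser $r^*$; no surrogate maximiser enters. The second term is handled by a direct gradient-bias lemma (the derivative analogue of Prop.~\ref{prop:control.bias.variance.smoothing}): integrating by parts,
\[
|\nabla\Pi_{k_t}^F(r)-\nabla\Pi^F(r)| = \big|\nabla\Pi^F\star(k_t-\delta_0)(r)\big| \le \|\nabla^2\Pi^F\|_\infty\,\|K_t-\indicator{\lR^+}\|_1,
\]
which uses only $F\in\cC^2$ from \ref{asmp:finite_variance}. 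Since $|r_{t-1}-r^*|\le\bar b$, the bias contribution is $\cO(\gamma_t\|K_t-\indicator{\lR^+}\|_1)$, summable by hypothesis. With this decomposition the Robbins--Siegmund step goes through exactly as you described, yielding $\sum_t \gamma_t (r^*-r_t)\nabla\Pi^F(r_t)<\infty$ a.s., and the conclusion $r_t\to r^*$ follows directly from $\sum\gamma_t=\infty$ and strict pseudo-concavity of $\Pi^F$ --- without ever tracking $r_{k_t}^*$.
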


\begin{proofsketch}
The proof relies on decomposing the error $\norm{r_t - r^*}^2$ into three terms related respectively to the pseudo-concavity of $\Pi^F$, the bias $B_{k_t}$, and the instantaneous gradient second moment $V_{k_t}$. Then, following \citet{bottou1998online}, we use a quasi-martingale argument to ensure the convergence of the stochastic error process. The full proof is available in App.~\ref{app:as.convergence}.
\end{proofsketch}

If a constant kernel sequence were to be used in \dcoga, we would recover the usual stochastic approximation conditions on the step size $\gamma_t$, namely that $\sum_{t=1}^\infty \gamma_t  = + \infty$ and $\sum_{t=1}^\infty \gamma_t^2 < +\infty$. This suggests setting $\gamma_t \propto 1/t$. For such a choice of step-size, Thm.~\ref{thm:as_convergence} asserts convergence if $\sum_{t=1}^\infty \gamma_t \lVert K_t - \indicator{\lR^+}\rVert_1 < +\infty$, which is guaranteed by $k_t \rightarrow \delta_0$. This means $\|k\|_\infty \rightarrow \infty$ as $t\to \infty$, but $\sum_{t=1}^{+\infty} \gamma_t^2 \lVert k_t \rVert_\infty < +\infty$ tells us explicitly how slow our decay must be in terms of the family of kernels. For example in the case of a Gaussian kernel, \eqref{eq:rate.gaussian.kernel.example} implies that a suitable choice of kernel variance is $\sigma_t \propto t^{-\alpha}$ for $\alpha \in (0,1)$.

\subsection{Finite-time Convergence Rates}

While Thm.~\ref{thm:as_convergence} provides sufficient conditions on the kernel sequence $\{k_t\}_{t\in\lN}$ for \dcoga to be consistent, it does not characterise the rate of the convergence, and thus cannot be leveraged to \textit{optimise} the step size $\gamma_t$ and the decay rate of the kernel.

To obtain finite time guarantees on the rate of convergence, we must impose stronger conditions on the monopoly revenue $\Pi^F$. Recall that under \ref{asmp:regular}, $\Pi^F$ is strictly pseudo-concave. It is well known that such functions can have large areas of arbitrarily small gradient. Since they can make first order methods arbitrarily slow, no meaningful rate can be obtained for them. Strengthening the assumption to \ref{asmp:mhr}, \ie excluding vanishing gradients by ensuring $\Pi^F$ is $\mu$-strongly log-concave (see Prop.~\ref{prop:pseudo.log.concave.monopoly.revenue}), will give a rate in Thm. \ref{thm:stationary_convergence_speed} under the further technical assumption \ref{asmp:lower.bounded.revenue}.

\begin{assumption}
\label{asmp:lower.bounded.revenue}
The seller is given a compact subset $\cC \subseteq [0, \bar{b}]$ and a constant $c > 0$ such that $r^* \in \cC$ and for all $r \in \cC$, $\Pi^F(r) \geq c$.
\end{assumption}

\ref{asmp:lower.bounded.revenue} ensures that the seller can lower bound revenue on a compact subset of $[0,\bar{b}]$. It should be understood as prior knowledge of the seller based on the format of the auction and the type of item sold. $\cC$ exists for any $c < \Pi^F(r^*)$, so this hypothesis is not restrictive relative to \ref{asmp:mhr}.

\begin{restatable}{theorem}{speedconvergence}
\label{thm:stationary_convergence_speed}
Let $F$ satisfy~\ref{asmp:finite_variance} and~\ref{asmp:mhr}, let $\cC$ and $c$ as in~\ref{asmp:lower.bounded.revenue} be given, and let $\{k_t\}_{t\in\lN} \in \cK^\lN$ such that there are $\nu_1$, $\nu_\infty$, $\alpha_1$, $\alpha_\infty$ with
\begin{equation*}
    \lVert K_t - \indicator{\lR^+}\rVert_1 \leq \nu_1 t^{-\alpha_1} \text{ and} \quad \lVert k_t \rVert_\infty  \leq \nu_\infty t^{\alpha_\infty}.
\end{equation*}
Then, by running \dcoga on $\cC$ for $\gamma_t = \nu t^{-\alpha}$ with $\nu \leq (2c\mu)^{-1}$, we have for all $t\geq 2$,
\begin{equation*}
\begin{aligned}
   &\alpha =1:  \hspace{-3mm}&\mathbb{E}(\|r_t - r^*\|^2) &= \tilde{O}\big(t^{-\alpha_1} + t^{\alpha_\infty - 1} + t^{-2 \mu c \nu }\big)
   \\
   &\alpha \in (0,1): \hspace{-3mm}&\mathbb{E}(\|r_t - r^*\|^2) &= \tilde{O}\big( t^{-\alpha_1} + t^{\alpha_\infty - \alpha} \big) \\
   &&+\tilde{O}\Big( \big(t^{1-\alpha - \alpha_1} &+ t^{1 + \alpha^\infty - 2 \alpha} \big) e^{-\mu c \nu t^{1-\alpha}} \Big)
\end{aligned}
\end{equation*}
where $\tilde{O}$ potentially hides a logarithmic term depending on the values of $\alpha$, $\alpha_1$, and $\alpha_\infty$.
\end{restatable}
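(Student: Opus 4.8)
\emph{Strategy.} The plan is to control the squared error $e_t := \EE(\norm{r_t - r^*}^2)$ by a one-step stochastic recursion, reduce it to a deterministic scalar recursion with polynomially decaying coefficients, and then solve that recursion explicitly in each regime of $\alpha$. For the one-step inequality: since ${\rm proj}_\cC$ is non-expansive and $r^*\in\cC$ by \ref{asmp:lower.bounded.revenue}, expanding the square of the update and taking $\EE(\,\cdot\mid\cF_t)$, and using the unbiasedness $\EE(\nabla p_{k_t}(r_{t-1},b_t)\mid\cF_t)=\nabla\Pi_{k_t}^F(r_{t-1})$ and the second-moment bound $\EE(\norm{\nabla p_{k_t}(r_{t-1},b_t)}^2\mid\cF_t)\le V_{k_t}$ from Prop.~\ref{prop:preservation.concavity.smoothing} and Prop.~\ref{prop:control.bias.variance.smoothing}, one gets
\begin{equation*}
\EE\big(\norm{r_t-r^*}^2\mid\cF_t\big)\le\norm{r_{t-1}-r^*}^2+2\gamma_t\big\langle\nabla\Pi_{k_t}^F(r_{t-1}),r_{t-1}-r^*\big\rangle+\gamma_t^2 V_{k_t},
\end{equation*}
with $r_{t-1}\in\cC$ by construction.

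\emph{Controlling the drift.} I would split the inner product as $\langle\nabla\Pi_{k_t}^F(r_{t-1}),r_{t-1}-r^*\rangle=\langle\nabla\Pi^F(r_{t-1}),r_{t-1}-r^*\rangle+\langle\nabla\Pi_{k_t}^F(r_{t-1})-\nabla\Pi^F(r_{t-1}),r_{t-1}-r^*\rangle$. Under \ref{asmp:mhr}, Prop.~\ref{prop:pseudo.log.concave.monopoly.revenue} makes $\log\Pi^F$ $\mu$-strongly concave with maximiser $r^*$; since $\Pi^F$ vanishes at $0$ and $\bar{b}$, \ref{asmp:lower.bounded.revenue} forces $\cC\subset(0,\bar{b})$, so $r^*$ is interior and $(\log\Pi^F)'(r^*)=0$, giving $(\log\Pi^F)'(r)(r-r^*)\le-\mu(r-r^*)^2$; multiplying by $\Pi^F(r)\ge c$ yields $\langle\nabla\Pi^F(r),r-r^*\rangle\le-\mu c\norm{r-r^*}^2$ for $r\in\cC$. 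For the second term, $\Pi^F\in\cC^2$ (Prop.~\ref{prop:pseudo.log.concave.monopoly.revenue}) and an argument analogous to the bias bound of Prop.~\ref{prop:control.bias.variance.smoothing} give $\norm{\nabla\Pi_{k_t}^F-\nabla\Pi^F}_\infty\le L\norm{K_t-\indicator{\RR^+}}_1$ for a finite constant $L$; Cauchy--Schwarz together with $\norm{r_{t-1}-r^*}\le{\rm diam}(\cC)$ then bounds this term by $L\,{\rm diam}(\cC)\norm{K_t-\indicator{\RR^+}}_1$.

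\emph{Scalar recursion.} Taking full expectations, inserting $\norm{K_t-\indicator{\RR^+}}_1\le\nu_1 t^{-\alpha_1}$, $\norm{k_t}_\infty\le\nu_\infty t^{\alpha_\infty}$, the bound $V_{k_t}\le1+\bar{b}(1+\norm{\nabla\Pi^F}_\infty)\norm{k_t}_\infty$ from Prop.~\ref{prop:control.bias.variance.smoothing}, and $\gamma_t=\nu t^{-\alpha}$ with $\nu\le(2c\mu)^{-1}$ (so that $0\le1-2\mu c\gamma_t\le1$ for $t\ge1$), I obtain
\begin{equation*}
e_t\le\big(1-2\mu c\nu\,t^{-\alpha}\big)e_{t-1}+A_1 t^{-\alpha-\alpha_1}+A_\infty t^{\alpha_\infty-2\alpha}
\end{equation*}
for constants $A_1,A_\infty$ depending only on $\mu,c,\bar{b},\norm{\nabla\Pi^F}_\infty,{\rm diam}(\cC),\nu,\nu_1,\nu_\infty$.

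\emph{Solving the recursion (the main obstacle).} The remaining work is purely analytic and, I expect, the bulk of the effort: unroll the recursion with $a=2\mu c\nu$ into
\begin{equation*}
e_t\le e_0\prod_{s=1}^{t}(1-as^{-\alpha})+\sum_{s=1}^{t}\big(A_1 s^{-\alpha-\alpha_1}+A_\infty s^{\alpha_\infty-2\alpha}\big)\prod_{j=s+1}^{t}(1-aj^{-\alpha}),
\end{equation*}
and estimate each piece with $1-x\le e^{-x}$, $\sum_{j\le t}j^{-\alpha}\ge t^{1-\alpha}$, and integral comparison. When $\alpha=1$, $\prod_{s=1}^{t}(1-as^{-1})\asymp t^{-a}$ and the two source sums contribute $\tilde O(t^{-\min(\alpha_1,a)})$ and $\tilde O(t^{-\min(1-\alpha_\infty,a)})$ (a logarithm appearing at equality); combined with $e_0 t^{-a}$ this gives the claimed $\tilde O(t^{-\alpha_1}+t^{\alpha_\infty-1}+t^{-2\mu c\nu})$. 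When $\alpha\in(0,1)$, splitting the sum at $s\approx t/2$ separates a ``stationary'' part --- the tail $s>t/2$ yields orders $t^{\alpha-(\alpha+\alpha_1)}=t^{-\alpha_1}$ and $t^{\alpha+(\alpha_\infty-2\alpha)}=t^{\alpha_\infty-\alpha}$ --- from a ``transient'' part: on the head $s\le t/2$ one has $\prod_{j=s+1}^{t}(1-aj^{-\alpha})\le\exp(-\tfrac{a}{2}t^{1-\alpha})=\exp(-\mu c\nu\,t^{1-\alpha})$, which together with $e_0\prod_{s=1}^{t}(1-as^{-\alpha})$ gives $\tilde O\big((t^{1-\alpha-\alpha_1}+t^{1+\alpha_\infty-2\alpha})e^{-\mu c\nu\,t^{1-\alpha}}\big)$. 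The delicate part throughout is the bookkeeping --- deciding which of $\alpha_1$, $1-\alpha_\infty$, $\alpha$, $2\mu c\nu$ dominates and where the logarithmic factors hidden in $\tilde O$ arise --- rather than any individual estimate.
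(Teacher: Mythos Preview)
Your proposal is correct and follows essentially the same route as the paper: derive the local strong-concavity-type inequality $\langle\nabla\Pi^F(r),r-r^*\rangle\le-\mu c\norm{r-r^*}^2$ on $\cC$ from \ref{asmp:mhr}+\ref{asmp:lower.bounded.revenue}, bound the gradient bias by $\norm{\nabla^2\Pi^F}_\infty\norm{K_t-\indicator{\RR^+}}_1$ (the paper records this as a separate lemma), obtain the scalar recursion $\bar h_{t+1}\le(1-2\mu c\gamma_t)\bar h_t+C_1\gamma_t\gamma_t^1+C_\infty\gamma_t^2\gamma_t^\infty$, and then unroll it via the Bach--Moulines case split at $\alpha=1$ versus $\alpha\in(0,1)$ with the $t/2$ cut for the latter. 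The only cosmetic difference is that the paper packages the integral-comparison estimates through an auxiliary function $\varphi_\beta(t)=\log t\,\indicator{\beta=0}+(t^\beta-1)/\beta\,\indicator{\beta\neq0}$, which makes the log-factor bookkeeping you flag as ``delicate'' explicit.
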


\begin{proofsketch}
The extended statement of Thm.~\ref{thm:stationary_convergence_speed} with explicit constants and its proof are detailed in App.~\ref{app:speed.convergence}. The proof builds on~\citet[Thm.2]{bach2011non}, derived for log-concave functions, and is adapted to our varying kernel approach and its changing objective. In contrast with the proof of Thm.~\ref{thm:as_convergence}, where we only leveraged pseudo-concavity, we show here that~\ref{asmp:mhr}, together with~\ref{asmp:lower.bounded.revenue}, guarantees more refined control on the curvature of $\Pi^F$ around $r^*$:
\begin{equation}
    \forall r \in \cC, \quad (r - r^*)\nabla \Pi^F(r) \leq - \mu c \|r^* - r\|^2.%
    \label{eq:local.strong.concavity}
\end{equation}
This way we can better control the stochastic process $\{\norm{r_t - r^*}^2 \}_{t\in\NN}$: like for Thm.~\ref{thm:as_convergence}, we decompose the error into three terms related to concavity (Eq. \ref{eq:local.strong.concavity}), bias $B_{k_t}$, and instantaneous gradient smoothness $V_{k_t}$. The error is then bounded in expectation by manipulating finite series.
\end{proofsketch}

\begin{figure}[ht]
\centering
\includegraphics[width=.45\textwidth]{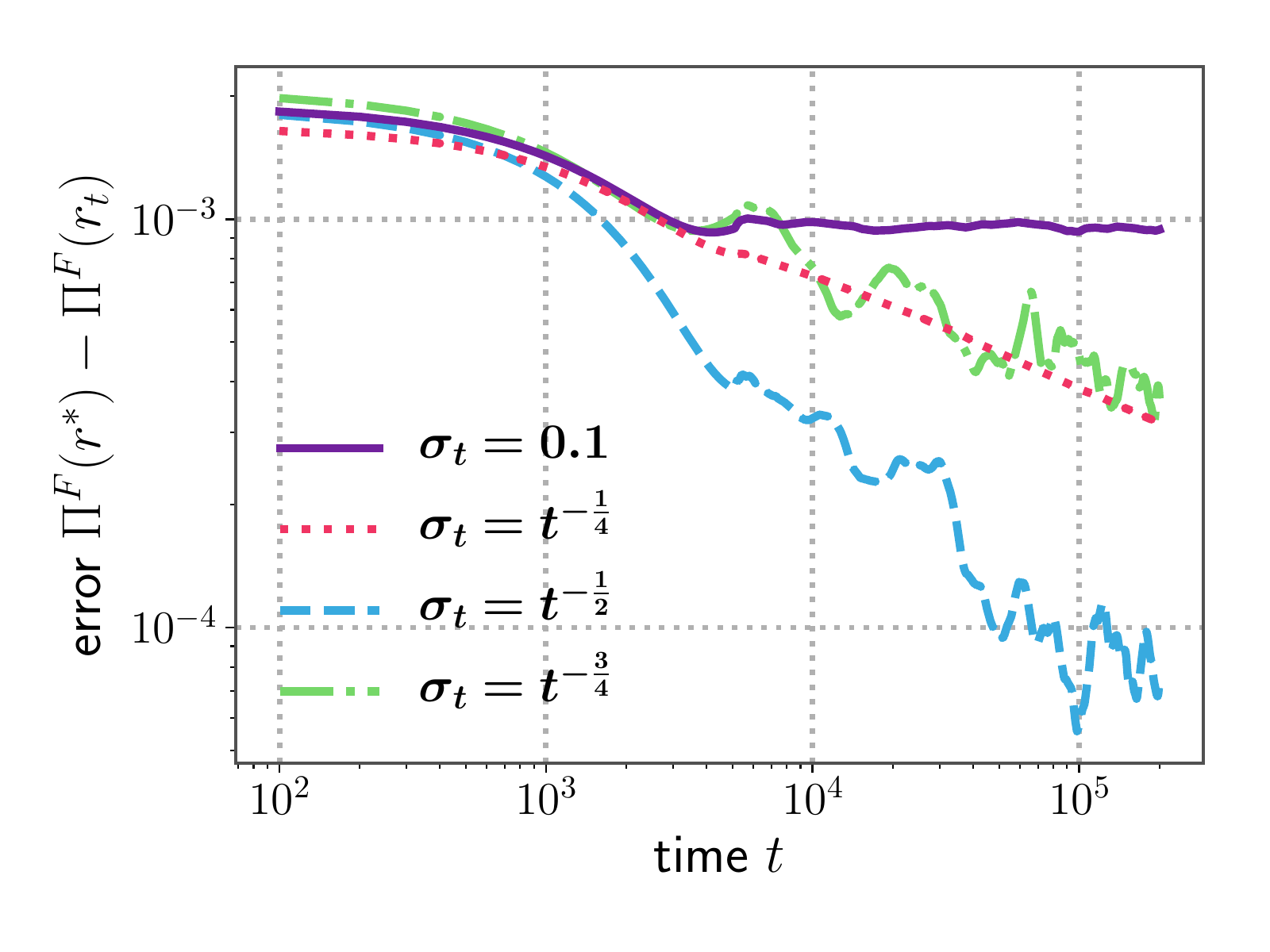}
\includegraphics[width=.45\textwidth]{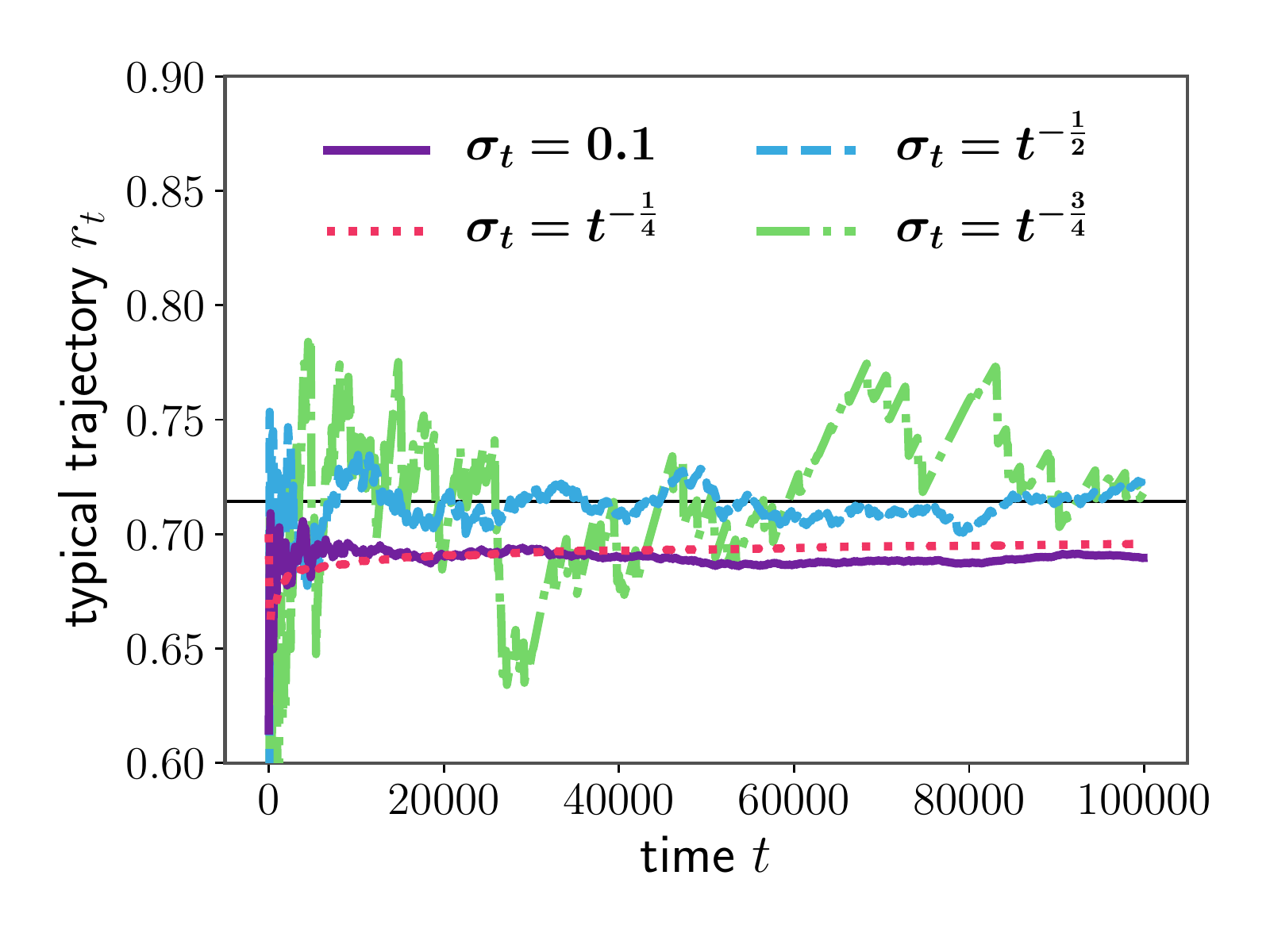}
\caption{Stationary case. Numerical behaviour of \dcoga for different $\sigma_t$ on i.i.d samples from a Kumaraswamy (1, 0.4). Top: averaged convergence speeds of instant regret (log-log scale). Bottom: representative reserve price trajectories.}
\label{fig:convergence_stationary}
\end{figure}

Thm.~\ref{thm:stationary_convergence_speed} show two distinct regimes for both choices of $\alpha$: transient ($t^{- 2 \mu c \nu}$ and $e^{- \mu c \nu t^{1-\alpha}}$ resp.) and stationary ($t^{-\alpha_1} + t^{\alpha_\infty - \alpha}$ and $t^{-\alpha_1} + t^{\alpha_\infty - 1}$ resp.). On Fig. \ref{fig:convergence_stationary} (top), the transient phase is visible up to $2\times10^3$ steps. Since the transient regime's rate depends only on $\nu=\gamma_0$, $c$ known from \ref{asmp:lower.bounded.revenue}, and $\mu$ known from \ref{asmp:mhr}, we can set $\nu$ to make the \emph{stationary} regime the driver of the rate.

To optimise the stationary regime we face a bias-variance trade-off. Like Thm. \ref{thm:as_convergence}, Thm. \ref{thm:stationary_convergence_speed} requires that $k\to\delta_0$ (via $\alpha_1>0$) while imposing a bound on the growth speed of $V_k$ (via $\alpha_\infty<\alpha$). This time, however, we have exact rates which we can use to determine optimal parameters for the trade-off, taking into account the antagonistic effects of $\alpha_1$ and $\alpha_\infty$. From Thm. \ref{thm:stationary_convergence_speed}, we recover that the optimal learning rate is $\gamma_t\propto 1/t$. To tune the kernels it is sensible to fix a parametric family and tune its parameter(s). For zero-mean Gaussian kernels, we have Cor.~\ref{cor:stationary_convergence_speed}.

\begin{restatable}{corollary}{speedconvergenceoptimal}
\label{cor:stationary_convergence_speed}
If we fix $\gamma_t \propto 1/t$, and let $\{k_t\}_{t\in\NN}$ be Gaussian $(0,1/t)$ kernels in Thm. \ref{thm:stationary_convergence_speed} we have for all $t\geq 2$ that:
\[\mathbb{E}\big(\|r_t - r^*\|^2\big) = \tilde{\cO}\big(t^{-1/2}\big)\,.\]
This rate is optimal up to logarithmic factors.
\end{restatable}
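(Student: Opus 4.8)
The plan is to instantiate Theorem~\ref{thm:stationary_convergence_speed} with the explicit Gaussian rates from \eqref{eq:rate.gaussian.kernel.example} and then optimise the exponent of the resulting stationary regime. With $k_t$ Gaussian$(0,1/t)$, i.e. $\sigma_t^2 = 1/t$, we have $\sigma_t = t^{-1/2}$, so from \eqref{eq:rate.gaussian.kernel.example} the bias-controlling quantity is $\lVert K_t - \indicator{\lR^+}\rVert_1 = \sqrt{2/\pi}\, t^{-1/2}$ and the variance-controlling quantity is $\lVert k_t\rVert_\infty = (\sqrt{2\pi})^{-1} t^{1/2}$. Hence in the notation of Theorem~\ref{thm:stationary_convergence_speed} we may take $\alpha_1 = 1/2$ and $\alpha_\infty = 1/2$ (with $\nu_1 = \sqrt{2/\pi}$, $\nu_\infty = (\sqrt{2\pi})^{-1}$). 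Since we fix $\gamma_t \propto 1/t$, we are in the case $\alpha = 1$ of the theorem.

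Substituting $\alpha = 1$, $\alpha_1 = 1/2$, $\alpha_\infty = 1/2$ into the first displayed bound of Theorem~\ref{thm:stationary_convergence_speed} gives
\[
\mathbb{E}\big(\|r_t - r^*\|^2\big) = \tilde{O}\big(t^{-1/2} + t^{1/2 - 1} + t^{-2\mu c\nu}\big) = \tilde{O}\big(t^{-1/2} + t^{-2\mu c\nu}\big).
\]
The first two terms both contribute $t^{-1/2}$ (the stationary regime), and the third term $t^{-2\mu c\nu}$ is the transient term. As the text observes, $\nu = \gamma_0$ is a free constant (subject only to $\nu \le (2c\mu)^{-1}$), and $c$, $\mu$ are known from \ref{asmp:lower.bounded.revenue} and \ref{asmp:mhr}; so one chooses $\nu$ large enough — concretely, any $\nu$ with $2\mu c\nu \ge 1/2$, i.e. $\nu \ge (4\mu c)^{-1}$, which is compatible with $\nu \le (2\mu c)^{-1}$ — so that the transient term $t^{-2\mu c\nu}$ is dominated by $t^{-1/2}$. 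This yields $\mathbb{E}(\|r_t - r^*\|^2) = \tilde{O}(t^{-1/2})$, which is the claimed bound.

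For optimality up to logarithmic factors, I would argue that within the class of Gaussian-kernel schedules with $\gamma_t \propto 1/t$ one cannot do better. Writing $\sigma_t^2 = t^{-2\beta}$ for a tuning exponent $\beta \in (0,1/2)$, \eqref{eq:rate.gaussian.kernel.example} gives $\alpha_1 = \beta$ and $\alpha_\infty = \beta$, so with $\alpha = 1$ the stationary rate from Theorem~\ref{thm:stationary_convergence_speed} is $t^{-\beta} + t^{\beta - 1}$, whose best exponent over $\beta$ is attained at $\beta = 1/2$, giving exactly $t^{-1/2}$ — the antagonism between the bias term $t^{-\alpha_1}$ (wanting large $\alpha_1$, hence large $\beta$) and the variance term $t^{\alpha_\infty - 1}$ (wanting small $\alpha_\infty$, hence small $\beta$) balances at $1/2$. (One must also check that $\beta = 1/2$ is admissible, i.e. that $\alpha_\infty < \alpha$ holds as $1/2 < 1$, which it does, and that the $\alpha=1$ branch's hypotheses are met.) The main obstacle is not any single calculation but making sure the bookkeeping is faithful: that the Gaussian identities in \eqref{eq:rate.gaussian.kernel.example} are correctly translated into the $\nu_1, \nu_\infty, \alpha_1, \alpha_\infty$ of Theorem~\ref{thm:stationary_convergence_speed}, that the constraint $\nu \le (2c\mu)^{-1}$ is compatible with the choice forcing the transient term to be negligible, and that the logarithmic factors hidden in $\tilde O$ (which appear here because $\alpha = 1$ and the exponents $\alpha_1, \alpha_\infty$ interact with the geometric-series manipulations in the proof of Theorem~\ref{thm:stationary_convergence_speed}) are genuinely only polylogarithmic, so that they are absorbed into $\tilde{\cO}$.
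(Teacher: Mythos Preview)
Your proposal is correct and follows exactly the approach implied by the paper: the corollary is an immediate instantiation of Thm.~\ref{thm:stationary_convergence_speed} with the Gaussian identities \eqref{eq:rate.gaussian.kernel.example} yielding $\alpha_1=\alpha_\infty=1/2$, and the optimality claim is the bias--variance balancing of $t^{-\alpha_1}+t^{\alpha_\infty-1}$ discussed in the paragraph preceding the corollary. One cosmetic nit: you write $\beta\in(0,1/2)$ and then pick the optimum at the endpoint $\beta=1/2$; the admissible range (from $\alpha_\infty<\alpha=1$ in the extended theorem) is actually $\beta\in(0,1)$, so $\beta=1/2$ sits in the interior as intended.
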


Fig. \ref{fig:convergence_stationary} demonstrates this optimality: the $\sigma_t=1/\sqrt{t}$ (blue) curve is the optimal rate on the top pane, and attains the rate of Cor. \ref{cor:stationary_convergence_speed}. The bottom pane illustrates the bias variance trade-off at hand in Thm. \ref{thm:stationary_convergence_speed}. If the kernel decays slower that $1/\sqrt{t}$ (red), the learning rate shrinks much faster and convergence is very slow but very smooth. If $\sigma_t$ decreases too fast (green) the variance becomes overwhelming and noise swallows the performance.

The novel analysis of \dcoga showed its a.s. convergence under \ref{asmp:regular}, and that with a bit of curvature \ref{asmp:mhr} and the technical \ref{asmp:lower.bounded.revenue} we could fully characterise its convergence rates. We could thus derive optimal learning rates and place conditions on optimal kernel decay rates. We made the optimal decay rate explicit for Gaussian kernels. This concludes the primary discussion on \dcoga, and we now move to the non-stationary setting.

\section{Tracking a Non-stationary Bidder}\label{sec:tracking}
In practical applications of online auctions, such as display advertising, bidders might change their bid distribution over time. These changes often result from non-stationarity in the private information of bidders. It is therefore beneficial to be able to effectively adapt one's reserve price over time to \emph{track} changing bid distributions $\{F_t\}_{t\in\NN}$. We use the dynamic regret $R(T)$ to measure the quality of an algorithm's tracking.

The difficulty in the non-stationary setting is to trade-off adaptability (how fast a switch is detected) vs. accuracy (how accurate one is between switches). Convergent algorithms like ERM or \dcoga will have high accuracy in the first phase, but then suffer as they try to adapt to changes later on, when their learning rate is very small. Windowed methods are more adaptable but still carry with them a lag, directly dependent on their window size. First-order methods like \coga (with constant learning rate $\gamma$) are much more adaptable, but their convergence rate ($\mc{O}(1/\sqrt{t})$) hurts their accuracy. Nevertheless, we show that \coga is effective, with $\mc{O}(\sqrt{T})$ regret.

The dynamic regret $R(T)$ cannot be meaningfully controlled for arbitrary sequences $\{F_t\}_{t\in\lN}$. As such it is customary to assume \ref{asmp:tau_switches} that $\{F_t\}_{t\in\lN}$ contains at most $\tau - 1$ switches up to a horizon $T$ (see \eg \cite{garivier2011upper, lattimore2018bandit}). This corresponds to approximating a slowly changing sequence of $F_t$ (\eg Lipschitz) by a piece-wise constant sequence.

\begin{assumption}\label{asmp:tau_switches}
Given some horizon $T$, there exists $\tau \leq T$ such that $\sum_{t=1}^{T-1} \indicator{F_t \neq F_{t+1}} \leq \tau - 1$.
\end{assumption}

Under \ref{asmp:tau_switches}, the game (up to $T$) decomposes into $\tau$ phases. The first step towards controlling the regret is to bound the tracking performance in each phase. We do this in Thm. \ref{thm:tracking}, which shows an incompressible assymptotic error (the bias of our surrogate plus the variance) and a transient phase with exponential decay.

\begin{restatable}{theorem}{trackingconvergence}
\label{thm:tracking}
Let $F$ satisfy~\ref{asmp:finite_variance} and~\ref{asmp:mhr} with parameter $\mu$, let $\cC$ and $c$ be as in \ref{asmp:lower.bounded.revenue} and $k \in \cK$.
Then,  by running \coga on $\cC$ with a constant stepsize $\gamma > 0$, for any $t\geq 1$ we have 
\begin{align*}
    \mathbb{E}(\|r_t - r_k^*\|^2) \leq \big(\bar{b}^2 + C(\gamma, k) (t\,\text{-}\,1)\big) e^{-\frac{\mu c\gamma}{2} t } + \frac{2 C(\gamma, k)}{\mu c}\,
\end{align*}
where $C(\gamma, k) = \cO\left(\gamma \lVert K -\indicator{\lR^+}\rVert_1 + \gamma^2 \lVert k\rVert_\infty\right)$.
\end{restatable}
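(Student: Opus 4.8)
The plan is to run the textbook stochastic-approximation contraction argument, but with a \emph{constant} step and kernel and with the recursion centred at the surrogate optimum $r_k^*$ rather than the monopoly price $r^*$; structurally this is the fixed-$\gamma$, fixed-$k$ specialisation of the proof of Thm.~\ref{thm:stationary_convergence_speed}, itself adapted from \citet[Thm.~2]{bach2011non}. Write $e_t=\|r_t-r_k^*\|^2$. From the \coga update and nonexpansiveness of ${\rm proj}_\cC$ — using $r_k^*\in\cC$, which I take as given (it holds once $\|K-\indicator{\lR^+}\|_1$ is small enough that $r_k^*$ has not drifted out of $\cC$; otherwise the claimed bound is vacuous as $e_t\le\bar b^2$ always) — one gets
\begin{equation*}
e_t \le e_{t-1} + 2\gamma\,\langle\nabla p_k(r_{t-1},b_t),\,r_{t-1}-r_k^*\rangle + \gamma^2\,\|\nabla p_k(r_{t-1},b_t)\|^2 .
\end{equation*}
Taking $\EE[\,\cdot\mid\cF_t\,]$ and invoking Prop.~\ref{prop:preservation.concavity.smoothing} ($\EE_F[\nabla p_k(r,b)]=\nabla\Pi_k^F(r)$) and Prop.~\ref{prop:control.bias.variance.smoothing} ($\EE_b\|\nabla p_k(r,b)\|^2\le V_k\le 1+\bar b(1+\|\nabla\Pi^F\|_\infty)\|k\|_\infty$) reduces everything to bounding the drift $\langle\nabla\Pi_k^F(r_{t-1}),r_{t-1}-r_k^*\rangle$ from above.

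The crux is a \emph{curvature transfer}: I must show that, on $\cC$,
\begin{equation*}
\langle\nabla\Pi_k^F(r),\,r-r_k^*\rangle \;\le\; -\tfrac{\mu c}{2}\,\|r-r_k^*\|^2 \;+\; D\,\|K-\indicator{\lR^+}\|_1
\end{equation*}
for a finite $D$ — the analogue for the smoothed objective of Eq.~\eqref{eq:local.strong.concavity}. Two routes are available. Since $\Pi_k^F$ is log-concave with $\nabla\Pi_k^F(r_k^*)=0$ (Prop.~\ref{prop:preservation.concavity.smoothing}) the drift is already $\le 0$; to make it $-\Theta(\mu c)\|r-r_k^*\|^2$ I would lower-bound $-(\log\Pi_k^F)''$ on $\cC$ by a Brascamp--Lieb/covariance computation — showing that convolving a $\mu$-strongly-log-concave profile with a log-concave $k\in\cK$ retains a constant fraction of the curvature $\mu$ — and combine it with $\Pi_k^F\ge c-\|\nabla\Pi^F\|_\infty\|K-\indicator{\lR^+}\|_1\ge c/2$ on $\cC$ (from $\|\Pi_k^F-\Pi^F\|_\infty\le\|\nabla\Pi^F\|_\infty\|K-\indicator{\lR^+}\|_1$, the same estimate underlying Prop.~\ref{prop:control.bias.variance.smoothing}). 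Alternatively, writing $\nabla\Pi_k^F=\nabla\Pi^F\star k$ and using the identity $\int_\RR|u|\,k(u)\,\de u=\|K-\indicator{\lR^+}\|_1$, one can transport the bound~\eqref{eq:local.strong.concavity} for $\Pi^F$ itself while paying an $\cO(\|K-\indicator{\lR^+}\|_1)$ penalty for the kernel's spread and for the displacement of $r_k^*$ from $r^*$. This is the step I expect to be the main obstacle: \ref{asmp:regular}, \ref{asmp:mhr} and \ref{asmp:lower.bounded.revenue} are stated for $\Pi^F$, $r^*$, $\mu$, $c$ — not for the surrogate $\Pi_k^F$, $r_k^*$ — and convolution genuinely weakens the strong-log-concavity constant, so one must quantify that the weakening, the shrinkage of the revenue floor, and the drift of $r_k^*$ are all controlled by $\|K-\indicator{\lR^+}\|_1$.

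Granting this, I would use $\|r_{t-1}-r_k^*\|\le\bar b$ to absorb the $D\|K-\indicator{\lR^+}\|_1$ term into an $\cO(\gamma\|K-\indicator{\lR^+}\|_1)$ additive error, assemble the one-step contraction
\begin{equation*}
\EE[\,e_t\mid\cF_t\,] \;\le\; \Big(1-\tfrac{\mu c\gamma}{2}\Big)e_{t-1} \;+\; C(\gamma,k),\qquad C(\gamma,k)=\cO\!\big(\gamma\|K-\indicator{\lR^+}\|_1+\gamma^2\|k\|_\infty\big),
\end{equation*}
valid once $1-\tfrac{\mu c\gamma}{2}\in(0,1)$ (for larger $\gamma$ the target bound holds trivially), take expectations, and unroll: $\EE[e_t]\le(1-\tfrac{\mu c\gamma}{2})^t\bar b^2+C(\gamma,k)\sum_{j=0}^{t-1}(1-\tfrac{\mu c\gamma}{2})^j$. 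Finally $(1-x)^t\le e^{-xt}$ and a split of the finite geometric sum into its transient and stationary parts yield the $\big(\bar b^2+C(\gamma,k)(t-1)\big)e^{-\mu c\gamma t/2}$ term together with the residual $\tfrac{2C(\gamma,k)}{\mu c}$, as claimed. The remaining work — pinning down $D$, the exact fraction of $\mu$ retained, and the geometric-series bookkeeping — is routine and, as with the companion theorems, deferred to the appendix.
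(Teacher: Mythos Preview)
Your centring at $r_k^*$ is the main divergence from the paper, and it is what generates the ``curvature transfer'' obstacle you flag. The paper's proof (App.~\ref{app:tracking}) simply centres the Lyapunov process at $r^*$, the true monopoly price --- notwithstanding that the theorem statement writes $r_k^*$; the downstream Cor.~\ref{cor:tracking.regret} also uses $r^*$. With $h_t=\|r_t-r^*\|^2$ the drift term is $2\gamma(r_t-r^*)^T\nabla\Pi_k^F(r_t)$, which the paper splits as $2\gamma(r_t-r^*)^T\nabla\Pi^F(r_t)+2\gamma(r_t-r^*)^T(\nabla\Pi_k^F(r_t)-\nabla\Pi^F(r_t))$. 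The first piece is $\le-2\mu c\gamma h_t$ directly from \ref{asmp:mhr}+\ref{asmp:lower.bounded.revenue} (no convolution involved), and the second is $\le 2\gamma\bar b\|\nabla^2\Pi^F\|_\infty\|K-\indicator{\lR^+}\|_1$ by Lem.~\ref{lem:uniformly_bounded_grad_diff}. This yields the recursion $\bar h_{t+1}\le(1-2\mu c\gamma)\bar h_t+C(\gamma,k)$ without ever needing curvature of $\Pi_k^F$, and the rest is exactly the constant-step case of \citet{bach2011non}.

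Your route (a) has a genuine gap for general $k\in\cK$: the preservation result you allude to (Thm.~\ref{thm:saumard} in the appendix) gives a strong-log-concavity constant $\mu\mu'/\sqrt{\mu^2+\mu'^2}$ only when \emph{both} factors are strongly log-concave, whereas $\cK$ requires only \emph{strict} log-concavity. Convolving a $\mu$-strongly-log-concave profile with a merely log-concave kernel need not retain any fixed fraction of $\mu$, so ``a constant fraction of the curvature $\mu$'' is not available in the stated generality. Your route (b) is viable --- it is in effect the paper's decomposition, plus an extra step to control $|r^*-r_k^*|$ --- but once you see that centring at $r^*$ removes the need for that extra step, the whole curvature-transfer programme becomes unnecessary.
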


\begin{figure}[t]
    \centering
    \includegraphics[width=.45\textwidth]{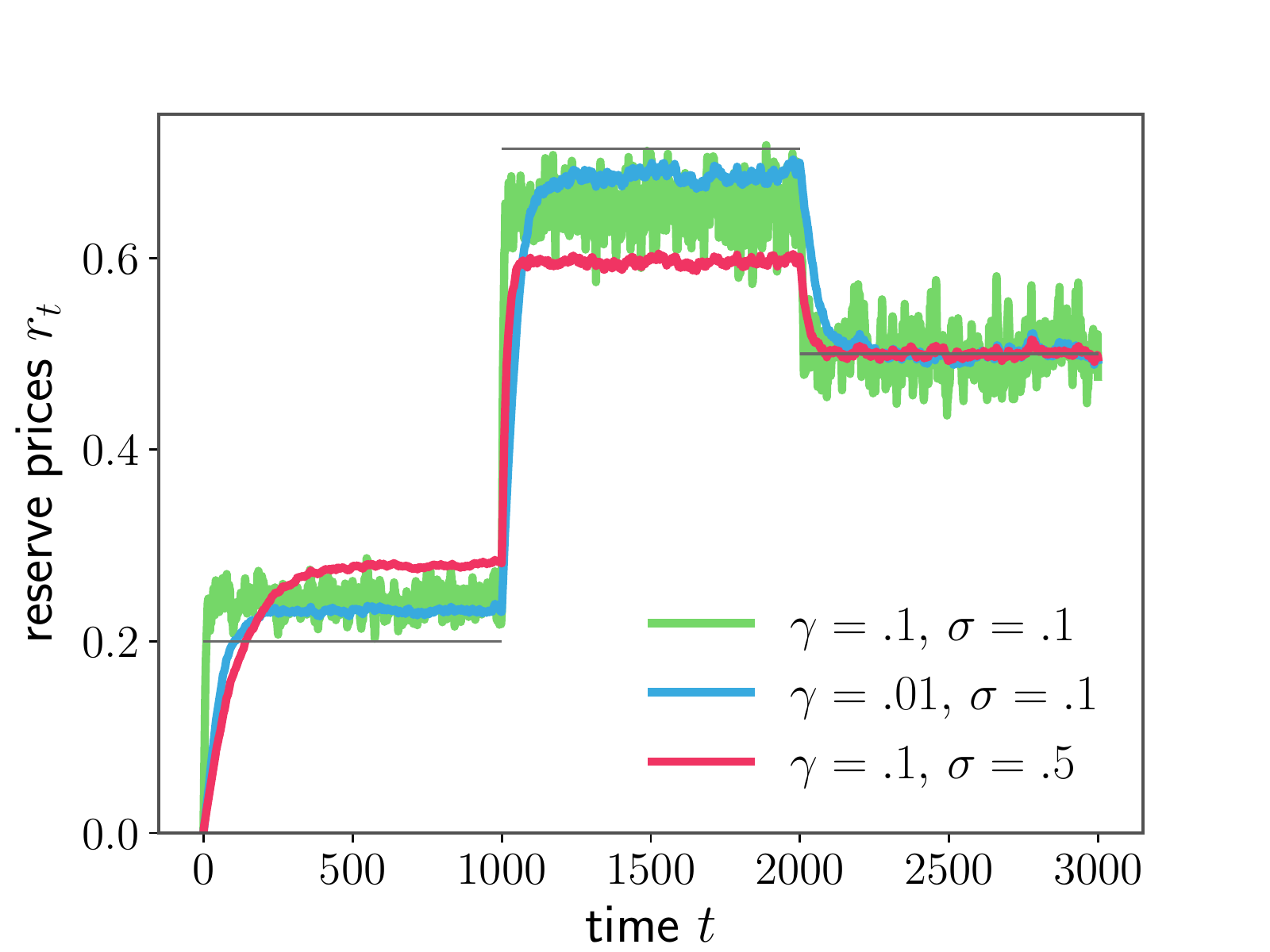}
    \caption[]{Non-stationary case. Tracking by \coga of three Kumaraswamy distributions (with parameters $(1,4)$, $(1, 0.4)$, and $(1,1)$ resp.)  with different Gaussian kernels and learning rates.}\label{fig:tracking.switches}
\end{figure}

Thus, immediately after a switch there will be a transient regime of order $t e^{-\frac{\mu c\gamma}{2}t}$ (high adaptability), but afterwards $r_t$ will oscillate in a band of size $\frac{2C(\gamma,k)}{\mu c}$ around $r^*_k$ (low accuracy).
We can then use Thm. \ref{thm:tracking} to derive a sub-linear regret bound given $T,\tau$ (Cor. \ref{cor:tracking.regret}).

\begin{restatable}{corollary}{trackingregret}
\label{cor:tracking.regret}
Let $\{F_t\}_{t \geq 1}$ satisfy ~\ref{asmp:finite_variance},~\ref{asmp:mhr}, \ref{asmp:lower.bounded.revenue}, \ref{asmp:tau_switches} and $k \in \cK$.
Then, there exists $\Xi(k, \gamma)$ and $\Omega(k,\gamma)$ such that \coga has a non-stationary regret of 
\begin{equation*}
    R(T) \leq \Xi(k, \gamma) T
    +
    \Omega(k,\gamma)\tau\,.
\end{equation*}
Further, if the horizon $T$ is known in advance, running \coga with $\gamma = T^{-\frac{1}{2}}$ and $k$ a  kernel with $\lVert K -\indicator{\lR^+}\rVert_1 \leq T^{-\frac{1}{2}}$ and $\lVert k \rVert_\infty \leq T^{\frac{1}{2}}$, then $R(T) = \cO(\sqrt{T})$.
\end{restatable}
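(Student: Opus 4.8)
The plan is to reduce the dynamic regret to a per-phase analysis and to control each phase using Theorem~\ref{thm:tracking}. Under~\ref{asmp:tau_switches} the horizon $\{1,\dots,T\}$ splits into $\tau$ consecutive phases $\mc{P}_1,\dots,\mc{P}_\tau$ of lengths $T_1,\dots,T_\tau$ with $\sum_j T_j = T$, on each of which $F_t$ equals a fixed distribution $F_{(j)}$ with monopoly price $r^*_{(j)}$ and surrogate monopoly price $r^*_{k,(j)}$. I would start from
\[
R(T) \;=\; \sum_{j=1}^{\tau} \sum_{t\in\mc{P}_j} \EE\!\left[\Pi^{F_{(j)}}(r^*_{(j)}) - \Pi^{F_{(j)}}(r_t)\right]
\]
and bound each inner sum. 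Since~\ref{asmp:tau_switches} allows at most $\tau$ distinct distributions, all constants used below — $L := \sup_j \|(\Pi^{F_{(j)}})''\|_\infty$, $\sup_j \|\nabla \Pi^{F_{(j)}}\|_\infty$, the curvature $\mu$, the revenue floor $c$ — are finite and may be taken uniform over the phases, by Prop.~\ref{prop:pseudo.log.concave.monopoly.revenue} together with~\ref{asmp:mhr} and~\ref{asmp:lower.bounded.revenue}.

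The crucial step is a deterministic pointwise bound converting the revenue gap into the squared distance to the surrogate optimum, which is exactly what Theorem~\ref{thm:tracking} controls. Because $\nabla\Pi^{F_{(j)}}(r^*_{(j)}) = 0$ and $\Pi^{F_{(j)}} \in \cC^2([0,\bar b])$, a second-order expansion gives $\Pi^{F_{(j)}}(r^*_{(j)}) - \Pi^{F_{(j)}}(r_t) \leq \tfrac{L}{2}\|r_t - r^*_{(j)}\|^2$. Splitting $\|r_t - r^*_{(j)}\|^2 \leq 2\|r_t - r^*_{k,(j)}\|^2 + 2\|r^*_{k,(j)} - r^*_{(j)}\|^2$ and converting the surrogate bias into a distance via the local strong concavity~\eqref{eq:local.strong.concavity} — which integrates to $\Pi^{F_{(j)}}(r^*_{(j)}) - \Pi^{F_{(j)}}(r) \geq \tfrac{\mu c}{2}\|r - r^*_{(j)}\|^2$ on $\cC$, hence $\|r^*_{k,(j)} - r^*_{(j)}\|^2 \leq \tfrac{2}{\mu c} B_k$ — yields, after taking expectations,
\[
\EE\!\left[\Pi^{F_{(j)}}(r^*_{(j)}) - \Pi^{F_{(j)}}(r_t)\right] \;\leq\; L\,\EE\|r_t - r^*_{k,(j)}\|^2 + \tfrac{2L}{\mu c}\,B_k .
\]
The naive Lipschitz bound, linear in $\|r_t - r^*_{k,(j)}\|$, would only produce $\cO(T^{3/4})$ regret; the quadratic estimate above is what buys the $\cO(\sqrt T)$ rate.

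Next I would feed Theorem~\ref{thm:tracking} into this. The key observation is that at the first step of phase $j$ the iterate $r_t$ lies in $\cC \subseteq [0,\bar b]$, so Theorem~\ref{thm:tracking} applies verbatim with that iterate as the initial point: writing $s$ for the number of steps elapsed since the phase began, $\EE\|r_t - r^*_{k,(j)}\|^2 \leq (\bar b^2 + C(\gamma,k)(s-1)) e^{-\frac{\mu c\gamma}{2}s} + \frac{2C(\gamma,k)}{\mu c}$. Summing over $s = 1,\dots,T_j$, the transient part is dominated by the convergent series $\Theta(\gamma,k) := \sum_{s\geq 1} (\bar b^2 + C(\gamma,k)(s-1)) e^{-\frac{\mu c\gamma}{2}s} = \cO\big(\tfrac{\bar b^2}{\mu c\gamma} + \tfrac{C(\gamma,k)}{(\mu c\gamma)^2}\big)$, which is independent of $T_j$, while the stationary part contributes $T_j \cdot \tfrac{2C(\gamma,k)}{\mu c}$. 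Summing over the $\tau$ phases and using $\sum_j T_j = T$ gives $R(T) \leq \Xi(k,\gamma) T + \Omega(k,\gamma)\tau$ with $\Xi(k,\gamma) = \tfrac{2L}{\mu c}(C(\gamma,k) + B_k)$ and $\Omega(k,\gamma) = L\,\Theta(\gamma,k)$. For the explicit rate, plugging $\gamma = T^{-1/2}$, $\|K - \indicator{\lR^+}\|_1 \leq T^{-1/2}$ and $\|k\|_\infty \leq T^{1/2}$ into Prop.~\ref{prop:control.bias.variance.smoothing} gives $B_k = \cO(T^{-1/2})$ and $C(\gamma,k) = \cO(\gamma\|K - \indicator{\lR^+}\|_1 + \gamma^2\|k\|_\infty) = \cO(T^{-1} + T^{-1/2}) = \cO(T^{-1/2})$; hence $\Xi(k,\gamma)T = \cO(\sqrt T)$ and $\Omega(k,\gamma) = \cO(T^{1/2})$, so $R(T) = \cO((1+\tau)\sqrt T) = \cO(\sqrt T)$ for fixed $\tau$.

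The main obstacle is the second paragraph: turning Theorem~\ref{thm:tracking}'s control of the distance to the \emph{surrogate} optimum $r^*_k$ into a bound on the \emph{true} revenue gap $\Pi^F(r^*) - \Pi^F(r_t)$. This requires curvature in both directions — smoothness of $\Pi^F$ on $[0,\bar b]$ to pass from distance to revenue, and the local strong concavity~\eqref{eq:local.strong.concavity} (from~\ref{asmp:mhr} and~\ref{asmp:lower.bounded.revenue}) to bound $\|r^*_k - r^*\|$ by the revenue bias $B_k$ — and one must check that $r^*_k$ lies in (a neighbourhood of) $\cC$ so that~\eqref{eq:local.strong.concavity} can be invoked at $r^*_k$, which holds once $\|K - \indicator{\lR^+}\|_1$ is small enough, as the stated tuning ensures. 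Everything else is bookkeeping: summing the geometric-type transient series and tracking how $\gamma$ and the kernel parameters propagate through $B_k$, $C(\gamma,k)$, and $\Theta(\gamma,k)$.
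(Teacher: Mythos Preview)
Your proposal is correct and follows the same skeleton as the paper: split into $\tau$ phases, convert the revenue gap to a squared distance via the second-order Taylor bound $\Pi^{F_{(j)}}(r^*_{(j)}) - \Pi^{F_{(j)}}(r_t) \leq \tfrac{L}{2}\|r_t - r^*_{(j)}\|^2$, apply Theorem~\ref{thm:tracking} per phase, then sum the geometric transient and the stationary term over phases to obtain $\Xi(k,\gamma)T + \Omega(k,\gamma)\tau$.

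The one substantive difference is your detour through the surrogate optimum. You read Theorem~\ref{thm:tracking} as bounding $\EE\|r_t - r_k^*\|^2$ and therefore insert a triangle-inequality step plus the bias bound $\|r_k^* - r^*\|^2 \leq \tfrac{2}{\mu c}B_k$ (via~\eqref{eq:local.strong.concavity}), which forces you to worry about $r_k^* \in \cC$. The paper's proof skips this entirely: it applies Theorem~\ref{thm:tracking} directly as a bound on $\EE\|r_t - r^*_{(j)}\|^2$. This is consistent with the \emph{proof} of Theorem~\ref{thm:tracking} in App.~\ref{app:tracking}, where the Lyapunov process is $h_t = \|r_t - r^*\|^2$ around the true optimum (the $r_k^*$ in the theorem statement is a typo). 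Your extra $B_k$ term is harmless --- it is $\cO(\|K-\indicator{\lR^+}\|_1) = \cO(T^{-1/2})$ under the stated tuning and folds into $\Xi(k,\gamma)$ --- but it is not needed, and dropping it removes the side condition on $r_k^*$.
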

\begin{proof}
See App.~\ref{app:tracking}.%
\end{proof}

Figure \ref{fig:tracking.switches} illustrates the behaviour of \coga in a non-stationary environment. In agreement with Thm.~\ref{thm:tracking} and Cor.~\ref{cor:tracking.regret}, $\gamma$ controls the length of the transient regime due to the $e^{-\frac{\mu c\gamma}{2} t}$ term. Increasing $\gamma$ shortens it but increases the width of the band of the asymptotic regime as $C(\gamma,k)$ increases with $\gamma$ (blue \textit{vs}. green curves). For a fixed $\gamma$, the stationary regime in terms of $k$ exhibits a bias-variance trade-off: $\norm{K-\II_{\RR^+}}_1$ corresponds to the bias and $\norm{k}_\infty$ to the variance (see Prop.~\ref{prop:control.bias.variance.smoothing}). In the case of a Gaussian kernel, increasing $\sigma$ reduces variance but increases bias (green \textit{vs}. red curve). 

\coga with a constant learning rate is an efficient real-time algorithm for tracking monopoly prices of non-stationary bidders. It incurs $\mc{O}(\sqrt{T})$ regret given the horizon and $\tau$, by tuning $\gamma$ and $k$, while maintaining the computational efficiency of online methods.

\vfill

\section{Discussion}\label{sec:discussion}

In this paper we introduced \dcoga, the first \emph{real-time} ($\mc{O}(1)$ update-time and memory) method for monopoly price learning. We first gave some theoretical results bridging auction theory and optimisation. We then showed how to fix the biased gradient problem with smooth surrogates, giving \coga. Next, we let the smoothing decrease over time in \dcoga, for whom we showed convergence of $\mc{O}(1/\sqrt{t})$. Finally, we adapted \coga to perform tracking of non-stationary bid distributions with $\mc{O}(\sqrt{T})$ dynamic regret.

\textbf{Towards optimal rates.} In the context of high-frequency auctions, computational efficiency trumps numerical precision, so we traded $\mc{O}(\sqrt{t})$ complexity and $\mc{O}(1/t)$ speed for $\mc{O}(1)$ complexity and $\mc{O}(1/\sqrt{t})$ speed. Whether or not it is possible to reach the optimal rate with a real-time algorithm remains an open question. We conjecture this to be impossible in general, but we know it is possible in some instances. If $F$ is a symmetric distribution, then \coga with a constant \emph{symmetric} kernel has no bias and $\mc{O}(1/t)$ convergence. Adapting the choice kernel to some \emph{a priori} knowledge on $F$ is a possible direction to match the optimal rate.

\paragraph{Extension to stationary bandit.} The second question concerns the extension to partially observable settings, such as online \emph{eager} auctions, when the seller does not observe bids under the reserve. Obviously, extensions using a reduction to multi-armed bandits (UCB, Exp3, Exp4, \textit{etc}.) via a discretisation of the bid space cannot be \emph{real-time}: the discretisation creates a need for $\cO(\sqrt{t})$ in memory and the same for the update. Yet, it is possible to obtain a strait-forward extension of \dcoga in this setting, by plugging it into an Explore-The-Commit (ETC) \cite{perchet2013} algorithm: \dcoga learns an estimate of the monopoly price during the exploration period, which is then used during the exploitation period. As for other algorithms, by using a doubling trick to handle an unknown horizon, ETC+\dcoga exhibits a sub-linear regret. Unfortunately, like in the \emph{lazy} auction setting, the regret is not optimal and the question of whether a \emph{real-time} algorithm can match this optimal regret is still open.

\paragraph{Extension to non-stationary bandit.} The question of the partially observable setting also applies to non-stationary bidders. In this case, extending \coga with ETC is no longer straight-forward, as the switching times are unknown. Thus, it is not obvious when to re-trigger an exploration phase of ETC to adapt to the change of the bidder's distribution. A potential way to tackle this problem could be to use randomised resets for the algorithm \cite{Allesiardo2017} or change-point detection algorithms to trigger exploration \cite{hartland2006}.

\bibliography{bibliography}

\begin{thebibliography}{29}
\providecommand{\natexlab}[1]{#1}
\providecommand{\url}[1]{\texttt{#1}}
\expandafter\ifx\csname urlstyle\endcsname\relax
  \providecommand{\doi}[1]{doi: #1}\else
  \providecommand{\doi}{doi: \begingroup \urlstyle{rm}\Url}\fi

\bibitem[Allesiardo et~al.(2017)Allesiardo, F{\'e}raud, and
  Maillard]{Allesiardo2017}
Allesiardo, R., F{\'e}raud, R., and Maillard, O.-A.
\newblock The non-stationary stochastic multi-armed bandit problem.
\newblock \emph{International Journal of Data Science and Analytics},
  3\penalty0 (4):\penalty0 267--283, Jun 2017.

\bibitem[Amin et~al.(2014)Amin, Rostamizadeh, and Syed]{amin2014repeated}
Amin, K., Rostamizadeh, A., and Syed, U.
\newblock Repeated contextual auctions with strategic buyers.
\newblock In \emph{Advances in Neural Information Processing Systems}, pp.\
  622--630, 2014.

\bibitem[Bach \& Moulines(2011)Bach and Moulines]{bach2011non}
Bach, F. and Moulines, E.
\newblock Non-asymptotic analysis of stochastic approximation algorithms for
  machine learning.
\newblock In \emph{Advances in Neural Information Processing Systems}, 2011.

\bibitem[Blum \& Hartline(2005)Blum and Hartline]{blum2005near}
Blum, A. and Hartline, J.~D.
\newblock Near-optimal online auctions.
\newblock In \emph{Proceedings of the sixteenth annual ACM-SIAM symposium on
  Discrete algorithms}, pp.\  1156--1163. Society for Industrial and Applied
  Mathematics, 2005.

\bibitem[Blum et~al.(2004)Blum, Kumar, Rudra, and Wu]{blum2004online}
Blum, A., Kumar, V., Rudra, A., and Wu, F.
\newblock Online learning in online auctions.
\newblock \emph{Theoretical Computer Science}, 324\penalty0 (2-3):\penalty0
  137--146, 2004.

\bibitem[Bottou(1998)]{bottou1998online}
Bottou, L.
\newblock Online learning and stochastic approximations.
\newblock \emph{On-line learning in neural networks}, 17\penalty0 (9):\penalty0
  142, 1998.

\bibitem[Bubeck et~al.(2017)Bubeck, Devanur, Huang, and
  Niazadeh]{bubeck2017online}
Bubeck, S., Devanur, N.~R., Huang, Z., and Niazadeh, R.
\newblock Online auctions and multi-scale online learning.
\newblock In \emph{Proceedings of the 2017 ACM Conference on Economics and
  Computation}, pp.\  497--514. ACM, 2017.

\bibitem[Cesa-Bianchi et~al.(2014)Cesa-Bianchi, Gentile, and
  Mansour]{cesa2014regret}
Cesa-Bianchi, N., Gentile, C., and Mansour, Y.
\newblock Regret minimization for reserve prices in second-price auctions.
\newblock \emph{IEEE Transactions on Information Theory}, 61\penalty0
  (1):\penalty0 549--564, 2014.

\bibitem[Cohen et~al.(2016)Cohen, Lobel, and Leme]{Cohen2016}
Cohen, M., Lobel, I., and Leme, R.~P.
\newblock Feature-based dynamic pricing.
\newblock In \emph{Proceedings of the 2016 ACM Conference on Economics and
  Computation}, 2016.

\bibitem[Duchi et~al.(2012)Duchi, Bartlett, and
  Wainwright]{duchi2012randomized}
Duchi, J.~C., Bartlett, P.~L., and Wainwright, M.~J.
\newblock Randomized smoothing for stochastic optimization.
\newblock \emph{SIAM Journal on Optimization}, 22\penalty0 (2):\penalty0
  674--701, 2012.

\bibitem[Ewerhart(2013)]{ewerhart2013regular}
Ewerhart, C.
\newblock Regular type distributions in mechanism design and $\rho$-concavity.
\newblock \emph{Economic Theory}, 53\penalty0 (3):\penalty0 591--603, 2013.

\bibitem[Flaxman et~al.(2005)Flaxman, Kalai, and McMahan]{flaxman2005online}
Flaxman, A.~D., Kalai, A.~T., and McMahan, H.~B.
\newblock Online convex optimization in the bandit setting: Gradient descent
  without a gradient.
\newblock In \emph{Proceedings of the Sixteenth Annual ACM-SIAM Symposium on
  Discrete Algorithms}, SODA ’05, pp.\  385–394, USA, 2005. Society for
  Industrial and Applied Mathematics.
\newblock ISBN 0898715857.

\bibitem[Garivier \& Moulines(2011)Garivier and Moulines]{garivier2011upper}
Garivier, A. and Moulines, E.
\newblock On upper-confidence bound policies for switching bandit problems.
\newblock In \emph{International Conference on Algorithmic Learning Theory},
  pp.\  174--188. Springer, 2011.

\bibitem[Hartland et~al.(2006)Hartland, Gelly, Baskiotis, Teytaud, and
  Sebag]{hartland2006}
Hartland, C., Gelly, S., Baskiotis, N., Teytaud, O., and Sebag, M.
\newblock {Multi-armed Bandit, Dynamic Environments and Meta-Bandits}.
\newblock working paper or preprint, November 2006.
\newblock URL \url{https://hal.archives-ouvertes.fr/hal-00113668}.

\bibitem[Ibragimov(1956)]{ibragimov1956composition}
Ibragimov, I.~A.
\newblock On the composition of unimodal distributions.
\newblock \emph{Theory of Probability \& Its Applications}, 1\penalty0
  (2):\penalty0 255--260, 1956.

\bibitem[Kleinberg \& Leighton(2003)Kleinberg and Leighton]{kleinberg2003value}
Kleinberg, R. and Leighton, T.
\newblock The value of knowing a demand curve: Bounds on regret for online
  posted-price auctions.
\newblock In \emph{44th Annual IEEE Symposium on Foundations of Computer
  Science, 2003. Proceedings.}, pp.\  594--605. IEEE, 2003.

\bibitem[Krishna(2009)]{krishna2009auction}
Krishna, V.
\newblock \emph{Auction theory}.
\newblock Academic press, 2009.

\bibitem[Lattimore \& Szepesv{\'a}ri(2018)Lattimore and
  Szepesv{\'a}ri]{lattimore2018bandit}
Lattimore, T. and Szepesv{\'a}ri, C.
\newblock Bandit algorithms.
\newblock \emph{preprint}, 2018.

\bibitem[Mohri \& Medina(2016)Mohri and Medina]{mohri2016learning}
Mohri, M. and Medina, A.~M.
\newblock Learning algorithms for second-price auctions with reserve.
\newblock \emph{The Journal of Machine Learning Research}, 17\penalty0
  (1):\penalty0 2632--2656, 2016.

\bibitem[Morgenstern \& Roughgarden(2015)Morgenstern and
  Roughgarden]{morgenstern2015pseudo}
Morgenstern, J.~H. and Roughgarden, T.
\newblock On the pseudo-dimension of nearly optimal auctions.
\newblock In \emph{Advances in Neural Information Processing Systems}, pp.\
  136--144, 2015.

\bibitem[Myerson(1981)]{myerson1981optimal}
Myerson, R.~B.
\newblock Optimal auction design.
\newblock \emph{Mathematics of operations research}, 6\penalty0 (1):\penalty0
  58--73, 1981.

\bibitem[Ostrovsky \& Schwarz(2011)Ostrovsky and Schwarz]{Ostrovsky2011}
Ostrovsky, M. and Schwarz, M.
\newblock Reserve prices in internet advertising auctions: A field experiment.
\newblock In \emph{Proceedings of the 12th ACM Conference on Electronic
  Commerce}, EC '11, pp.\  59--60, New York, NY, USA, 2011. ACM.
\newblock ISBN 978-1-4503-0261-6.

\bibitem[Paes~Leme et~al.(2016)Paes~Leme, Pal, and
  Vassilvitskii]{PaesLeme:2016:FGP:2872427.2883071}
Paes~Leme, R., Pal, M., and Vassilvitskii, S.
\newblock A field guide to personalized reserve prices.
\newblock In \emph{Proceedings of the 25th International Conference on World
  Wide Web}, WWW '16, pp.\  1093--1102, Republic and Canton of Geneva,
  Switzerland, 2016. International World Wide Web Conferences Steering
  Committee.

\bibitem[Perchet \& Rigollet(2013)Perchet and Rigollet]{perchet2013}
Perchet, V. and Rigollet, P.
\newblock The multi-armed bandit problem with covariates.
\newblock \emph{The Annals of Statistics}, 41\penalty0 (2):\penalty0 693--721,
  04 2013.

\bibitem[Pollard(1984)]{pollard1984convergence}
Pollard, D.
\newblock \emph{Convergence of stochastic processes}.
\newblock Springer Science \& Business Media, 1984.

\bibitem[Roughgarden \& Wang(2016)Roughgarden and Wang]{Roughgarden2016}
Roughgarden, T. and Wang, J.~R.
\newblock {Minimizing Regret with Multiple Reserves}.
\newblock In \emph{Proceedings of the 2016 ACM Conference on Economics and
  Computation - EC '16}, volume~9, pp.\  601--616, 2016.

\bibitem[Rudolph et~al.(2016)Rudolph, Ellis, and Blei]{rudolph2016objective}
Rudolph, M.~R., Ellis, J.~G., and Blei, D.~M.
\newblock Objective variables for probabilistic revenue maximization in
  second-price auctions with reserve.
\newblock In \emph{Proceedings of the 25th International Conference on World
  Wide Web}, pp.\  1113--1122. International World Wide Web Conferences
  Steering Committee, 2016.

\bibitem[Saumard \& Wellner(2014)Saumard and Wellner]{saumard2014}
Saumard, A. and Wellner, J.~A.
\newblock Log-concavity and strong log-concavity: A review.
\newblock \emph{Statistics Surveys}, 8:\penalty0 45--114, 2014.

\bibitem[Shen et~al.(2019)Shen, Lahaie, and Paes~Leme]{shen2019learning}
Shen, W., Lahaie, S., and Paes~Leme, R.
\newblock Learning to clear the market.
\newblock In \emph{International Conference on Machine Learning}, pp.\
  5710--5718, 2019.

\end{thebibliography}
\bibliographystyle{icml2020}

\onecolumn

\appendix
\section{General Reminders on Pseudo- and Log-Concavity}\label{app:reminders.concavity}

This section is a stand-alone reminder and does not share notations with the rest of the paper.

\subsection{Pseudo-Concavity}
\begin{definition}
A function $f : \cX \to \lR$, $f \in \mc{C}^1(\cX)$, is pseudo-concave on $\cX$ if $\forall (x,y)\in\cX^2$,
\begin{equation*}
\nabla f(x)^T(x - y)\geq 0 \hspace{1mm} \Rightarrow \hspace{1mm} f(x) \geq f(y).
\end{equation*}
\end{definition}

\begin{definition}
A function $f : \cX \to \lR$, $f \in \mc{C}^1(\cX)$, is strictly pseudo-concave on $\cX$ if it is pseudo-concave and has at most one critical point.
\end{definition}

\subsection{Log-Concavity}
\begin{definition}
A function $f : \cX \to \bar{\lR}$ is log-concave on $\cX$ if 
$$\forall \alpha\in[0,1],\forall (x,y)\in\cX^2, f(\alpha x + (1-\alpha)y) \geq f(x)^\alpha f(y)^{(1-\alpha)}$$
Note that if $f : \cX \to \lR^+_*$ this is equivalent to saying $f = e^{-\varphi}$ where $\varphi$ is a convex function on $\cX$.
\end{definition}
\begin{definition}
A function $f : \cX \to \bar{\lR}$ is strictly log-concave on $\cX$ if 
$$\forall \alpha\in(0,1),\forall (x,y)\in\cX^2 \text{ s.t. } x \neq y, \quad f(\alpha x + (1-\alpha)y) > f(x)^\alpha f(y)^{(1-\alpha)}$$
Note that if $f : \cX \to \lR^+_*$ this is equivalent to saying $f = e^{-\varphi}$ where $\varphi$ is a strictly convex function on $\cX$.
\end{definition}
\begin{definition}
A function $f : \cX \to \bar{\lR}$ is $\mu$-strongly log-concave on $\cX$ if $x\mapsto f(x)e^{- \mu x^2}$ is log-concave.
\par Note that if $f : \cX \to \lR^+_*$ this is equivalent to saying 
$f = e^{-\varphi}$ where $\varphi$ is $\mu$-strongly convex on $\cX$.
\end{definition}

We also recall a useful technical result for any log-concave function $f$, that is a straightforward consequence from the concavity characterization of $\log(f)$. 
\begin{proposition}\label{prop:log-conc-3-pentes}
Let $f$ be a real strictly positive strictly log-concave function. Then, for all $ u > v$, for all $\delta > 0$, 
\begin{equation*}
\frac{f(v + \delta)}{f(v)} > \frac{f(u + \delta)}{f(u)}.  
\end{equation*}
\end{proposition}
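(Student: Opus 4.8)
The plan is to pass to logarithms and then invoke the elementary fact that a strictly concave function has strictly decreasing increments over translates of a fixed interval. First I would set $g := \log f$; since $f$ is strictly positive, taking logarithms in the defining inequality of strict log-concavity shows that $g$ is strictly concave on the domain of $f$ (equivalently, $g = -\varphi$ with $\varphi$ strictly convex, as recorded after the definition of strict log-concavity). Since $\log$ is strictly increasing, the claimed inequality $\frac{f(v+\delta)}{f(v)} > \frac{f(u+\delta)}{f(u)}$ is equivalent to $g(v+\delta) - g(v) > g(u+\delta) - g(u)$, which is what I would establish for arbitrary $u > v$ and $\delta > 0$.

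The key step is a ``two convex combinations'' trick. Set $L := (u+\delta) - v$, which is $>0$; the hypotheses $u>v$ and $\delta>0$ guarantee that both $u$ and $v+\delta$ lie strictly inside the open interval $(v,\,u+\delta)$. A direct computation gives the convex-combination identities
\[
u = \tfrac{\delta}{L}\, v + \tfrac{u-v}{L}\,(u+\delta), \qquad
v+\delta = \tfrac{u-v}{L}\, v + \tfrac{\delta}{L}\,(u+\delta),
\]
so $u$ and $v+\delta$ are written as convex combinations of the endpoints $v$ and $u+\delta$ with the \emph{complementary} weights $\lambda := \delta/L$ and $1-\lambda = (u-v)/L$, both in $(0,1)$. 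Applying strict concavity of $g$ to each identity yields
\[
g(u) > \lambda\, g(v) + (1-\lambda)\, g(u+\delta), \qquad
g(v+\delta) > (1-\lambda)\, g(v) + \lambda\, g(u+\delta).
\]
Adding these, the coefficients of $g(v)$ sum to $\lambda + (1-\lambda) = 1$ and likewise for $g(u+\delta)$, so the right-hand sides collapse to $g(v) + g(u+\delta)$; hence $g(u) + g(v+\delta) > g(v) + g(u+\delta)$. Rearranging gives $g(v+\delta) - g(v) > g(u+\delta) - g(u)$, and exponentiating recovers the proposition.

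The only ``obstacle'' here is bookkeeping: one must choose the convex-combination weights precisely so that the cross terms cancel upon addition (this is why the weight $\lambda$ on $v$ for the point $u$ is paired with the weight $1-\lambda$ on $v$ for the point $v+\delta$), and one must verify that $u, v+\delta \in (v,\,u+\delta)$ with weights in the \emph{open} interval $(0,1)$ — this is exactly where strict log-concavity enters and where the strict inequality ``$>$'' in the conclusion comes from, in contrast to the non-strict version that would hold for merely log-concave $f$. No analytic difficulty arises beyond this.
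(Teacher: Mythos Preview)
Your proof is correct. Both you and the paper reduce to showing $g(v+\delta)-g(v) > g(u+\delta)-g(u)$ for the strictly concave $g=\log f$, but you take a different elementary route: the paper invokes the ``three slopes'' characterisation of strict concavity, namely that the secant slope $F(x,y)=\frac{g(x)-g(y)}{x-y}$ is strictly decreasing in each argument, and chains $F(v+\delta,v)>F(v+\delta,u)>F(u+\delta,u)$ before multiplying through by $\delta$. Your argument instead writes $u$ and $v+\delta$ as convex combinations of the endpoints $v,\,u+\delta$ with \emph{complementary} weights and adds the two strict-concavity inequalities so that the right-hand sides collapse. The paper's version is a one-line appeal to a standard lemma; yours is fully self-contained from the definition of strict concavity, with the only cost being the explicit verification of the weights. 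Either approach is perfectly adequate here.
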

\begin{proof}[Proof of Prop.~\ref{prop:log-conc-3-pentes}]
The proof is a straightforward application of properties of strictly concave functions applied to $\log(f)$. Let $F(x,y) = \frac{\log f(x) - \log f(y)}{x-y}$, then $F$ is strictly decreasing in $x$ for every fixed $y$ (and vice-versa). Thus, 
\begin{equation*}
\begin{aligned}
F(v + \delta,v) > F(v + \delta, u) > F(u+\delta,u) &\Rightarrow \quad  \log f(v+\delta) - \log f(v) > \log f(u+\delta) - \log f(u)  \\
&\Rightarrow  \quad \frac{f(v + \delta)}{f(v)}  > \frac{f(u + \delta)}{f(u)}. 
\end{aligned}
\end{equation*}

\end{proof}

\subsection{Stability through Convolution}
\label{app:stability.convolution}

\begin{theorem}[\citet{ibragimov1956composition}]
Let $f : \cX \subset \lR \to \lR^+$ be pseudo-concave on $\cX$, $\cC^1(\cX)$, $\cL^1(\cX)$ and $g : \lR \to \lR^+$ be $\cL^1(\lR)$ and log-concave. Then, $f \star g$ is pseudo-concave on $\lR$.
\label{thm:ibragimov}
\end{theorem}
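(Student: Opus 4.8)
The plan is to recast pseudo-concavity as a sign condition on the derivative and to show that this condition is preserved under convolution with a log-concave function.

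\textbf{Step 1 (a derivative criterion).} I would first record the elementary fact that a positive $\cC^{1}$ function $u$ on an interval is pseudo-concave if and only if the sets $\{u'>0\}$ and $\{u'\ge 0\}$ are both ``down-sets'', i.e.\ of the form $(-\infty,\cdot)$ (allowing $\emptyset$ and $\RR$); equivalently, once $u$ is extended by zero, $u'$ is $\ge 0$ up to some point and $\le 0$ afterwards, vanishing on the (possibly degenerate) interval on which $u$ attains its maximum. The forward direction follows by unwinding the definition at a critical point (which is then forced to be a global maximiser) and at a point where $u'\neq 0$; the converse is immediate. Applied to $f$, the hypothesis becomes: $f'\ge 0$ on $(-\infty,s^{*}]$ and $f'\le 0$ on $[s^{*},\infty)$, for $s^{*}$ any point of the mode set of $f$.

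\textbf{Step 2 (regularity).} By Young's inequality $h:=f\star g\in\cL^{1}(\RR)$; since $f$ is bounded and Lipschitz (it is $\cC^{1}$ on the compact closure of $\cX$ and continuous when extended by zero) and $g\in\cL^{1}(\RR)$, dominated convergence gives $h\in\cC^{1}(\RR)$ with $h'=f'\star g$. Moreover $h$ is continuous, non-negative and not identically zero, hence --- being integrable --- not monotone, which excludes the degenerate cases where $h'$ has a constant sign.

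\textbf{Step 3 (propagating the sign pattern).} Log-concavity of $g$ is equivalent, through the non-strict form of Prop.~\ref{prop:log-conc-3-pentes}, to: for every $c>0$ the ratio $y\mapsto g(y+c)/g(y)$ is non-increasing and $y\mapsto g(y-c)/g(y)$ non-decreasing (with $\log g=-\infty$ off $\supp g$). Fix $x_{0}$ with $h'(x_{0})\le 0$ and $c>0$, and write $h'(x_{0}+c)=\int f'(u)\,g(x_{0}+c-u)\,\de u$, splitting at $u=s^{*}$. For $u<s^{*}$ we have $f'(u)\ge 0$ and $x_{0}-u>x_{0}-s^{*}$, so $g(x_{0}+c-u)\le\lambda\,g(x_{0}-u)$ with $\lambda:=g(x_{0}+c-s^{*})/g(x_{0}-s^{*})>0$; for $u>s^{*}$ we have $f'(u)\le 0$ and $x_{0}-u<x_{0}-s^{*}$, so the reversed ratio inequality, multiplied by the non-positive $f'(u)$, again yields $f'(u)\,g(x_{0}+c-u)\le\lambda\,f'(u)\,g(x_{0}-u)$. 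Integrating gives $h'(x_{0}+c)\le\lambda\,h'(x_{0})\le 0$, so $\{h'\le 0\}$ is stable to the right; symmetrically, using the second monotonicity, $\{h'\ge 0\}$ is stable to the left. Together with $h'$ continuous and $h$ non-monotone, this is exactly the sign pattern of Step 1, so $h=f\star g$ is pseudo-concave. When $g$ is moreover strictly log-concave the ratios are strictly monotone, the inequalities are strict wherever $f'\not\equiv 0$, the maximiser set of $h$ collapses to a single point, and $h$ is strictly pseudo-concave.

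The main obstacle I anticipate is making Step 3 fully rigorous when $g$ has bounded support, so that the ratio comparisons must be read with the convention above and one of $g(x_{0}-u)$, $g(x_{0}+c-u)$ may vanish; this is precisely the statement that a log-concave $g$ is a P\'olya frequency function of order two, and it is where the use of log-concavity really bites. The remaining ingredients --- the criterion of Step 1, the regularity of Step 2, and the minor care needed at the endpoints of $\cX$, where the jumps of $f'$ occur between values of a single sign --- are routine.
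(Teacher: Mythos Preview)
The paper does not supply its own proof of Thm.~\ref{thm:ibragimov}; it is quoted as a result of Ibragimov (1956). What the paper \emph{does} prove is the strict extension, Lem.~\ref{le:ibragimov.extension}, and your Step~3 is precisely that argument: write $(f\star g)'(x)=\int f'(u)\,g(x-u)\,\de u$, split at the mode $s^{*}$ of $f$, and use the monotonicity of $y\mapsto g(y+c)/g(y)$ (Prop.~\ref{prop:log-conc-3-pentes}) to bound both halves by the common factor $\lambda$. The only difference is cosmetic: the paper anchors at a critical point $x^{*}$ of $h$ (so $h'(x^{*})=0$ and one concludes $h'(x^{*}+\delta)<0$), whereas you anchor at an arbitrary $x_{0}$ with $h'(x_{0})\le 0$ and obtain $h'(x_{0}+c)\le\lambda\,h'(x_{0})\le 0$; your formulation spares you the preliminary existence step, but the mechanics are identical---indeed the paper's pivot $t^{*}=\argmax_{t\le y^{*}} g(x^{*}+\delta-t)/g(x^{*}-t)$ coincides with your split point $s^{*}$ by the very monotonicity being invoked. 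Your proposal is correct and aligns with the paper's treatment of the strict case; the bounded-support caveat you flag (where $g(x_{0}-s^{*})$ may vanish and the ratio comparison must be read through the PF$_{2}$ property of log-concave densities) is exactly the technical content of Ibragimov's original argument, and is also sidestepped in the paper by the standing assumption $g>0$ for kernels in~$\cK$.
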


We extend this theorem to \emph{strict} pseudo-concavity.

\begin{lemma}[Extension of \citet{ibragimov1956composition}]
Let $f : \cX \subset [x_1,x_2] \to \lR^+$ be a strictly pseudo-concave on $\cX$, $\cC^1(\cX)$, $\cL^1(\cX)$ such that $\lim_{x\rightarrow x_{1,2}} f(x) = 0$ and $g : \lR \to \lR^+$ be $\cL^1(\lR)$ and strictly log-concave. Then, $f \star g$ is strictly pseudo-concave on $\lR$.
\label{le:ibragimov.extension}
\end{lemma}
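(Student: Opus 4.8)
The plan is to bootstrap from Ibragimov's theorem (Thm.~\ref{thm:ibragimov}), which already gives that $f\star g$ is pseudo-concave on $\lR$; what remains is to upgrade this to \emph{strict} pseudo-concavity, which by definition means showing $f\star g$ has at most one critical point. So the whole argument reduces to: assume $h:=f\star g$ has two distinct critical points $r_1<r_2$, and derive a contradiction from strict log-concavity of $g$. Because $h\in\cC^1(\lR)$ (differentiation passes onto $g$, or onto $f$ after extending $f$ by $0$ outside $[x_1,x_2]$, using $\lim_{x\to x_{1,2}}f=0$ and $f\in\cC^1(\cX)$), a critical point is a point where $h'=0$, and $h'=f\star g' = f'\star g$ by the usual convolution identities.

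The key step is to get a clean sign expression for $h'$. Writing $h'(r)=\int f(b)\,g'(r-b)\,\de b$ and using $g=e^{-\varphi}$ with $\varphi$ strictly convex, we have $g'(r-b)=-\varphi'(r-b)g(r-b)$, so
\begin{equation*}
    h'(r) = -\int_{x_1}^{x_2} f(b)\,\varphi'(r-b)\,g(r-b)\,\de b.
\end{equation*}
The integrand's sign is governed by $\varphi'(r-b)$, which (by strict convexity of $\varphi$) is strictly increasing in its argument, hence strictly \emph{decreasing} in $b$; it changes sign at most once as $b$ ranges over $[x_1,x_2]$. This is exactly the structure needed to run a variation-diminishing / single-crossing argument. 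Concretely, I would compare $h'(r_1)$ and $h'(r_2)$ for $r_1<r_2$ via the three-slopes inequality for strictly log-concave functions (Prop.~\ref{prop:log-conc-3-pentes}): it lets one show that the ``weighting kernel'' $b\mapsto \varphi'(r-b)g(r-b)$ shifts in a sign-regular way as $r$ increases, so that $h'(r_1)=0$ forces $h'(r_2)<0$ strictly (and symmetrically $h'(r_2)=0$ forces $h'(r_1)>0$). That strict inequality is the payoff of passing from log-concave to \emph{strictly} log-concave $g$ — in the non-strict case one only gets $\leq$, which is why Ibragimov's theorem alone cannot preclude a second critical point.

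The main obstacle I anticipate is handling the boundary/integrability bookkeeping rigorously: $f$ is only defined and $\cC^1$ on $\cX\subseteq[x_1,x_2]$, so to write $h'=f'\star g$ one must justify extending $f$ by zero and check that the resulting function is still amenable to the convolution-derivative identity — this is where the hypotheses $\lim_{x\to x_{1,2}}f(x)=0$, $f\in\cL^1$, and $g\in\cL^1$ with $g$ (and $g'=-\varphi'g$) suitably decaying are used; one wants no boundary terms when differentiating under the integral and when integrating by parts. A secondary subtlety is that $\varphi'$ need not be bounded near $\pm\infty$, so the finiteness of $\int f(b)\varphi'(r-b)g(r-b)\,\de b$ should be argued from $f$ having compact support $[x_1,x_2]$, making $r-b$ range over a compact set for each fixed $r$, so $\varphi'(r-b)g(r-b)$ is bounded there and the integral is plainly finite. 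Once these technicalities are dispatched, the single-crossing comparison closes the argument: $h$ cannot have two critical points, so $f\star g$ is strictly pseudo-concave on $\lR$.
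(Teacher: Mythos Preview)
Your high-level strategy is right and matches the paper: invoke Ibragimov for plain pseudo-concavity, then upgrade to \emph{strict} by showing $h=f\star g$ has at most one critical point, using strict log-concavity of $g$ through Prop.~\ref{prop:log-conc-3-pentes}. The gap is in the representation you choose for $h'$ and the comparison you leave unspecified. You work with $h'(r)=\int f(b)\,g'(r-b)\,\de b=-\int f(b)\,\varphi'(r-b)\,g(r-b)\,\de b$ and assert that the ``weighting kernel shifts in a sign-regular way'' so that $h'(r_1)=0$ forces $h'(r_2)<0$. But if you try to make this precise, the two natural inequalities point in opposite directions: from $\varphi'$ strictly increasing one gets $g'(u+\delta)<\tfrac{g(u+\delta)}{g(u)}g'(u)$, hence $h'(x^*+\delta)<\int f(b)\,\tfrac{g(x^*+\delta-b)}{g(x^*-b)}\,g'(x^*-b)\,\de b$; then bounding the ratio via three-slopes (it is increasing in $b$) against the sign pattern of $g'(x^*-b)$ yields that this last integral is \emph{strictly positive}, not $\leq 0$. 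So the chain gives only $h'(x^*+\delta)<\text{(something positive)}$, which is useless. The structural reason is that with the derivative on $g$, the sign-splitting point of the integrand depends on $r$, and the fixed weight $f$ is only pseudo-concave, so it has no ratio-monotonicity to exploit.

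The paper's fix is to put the derivative on $f$: using $f(x_1)=f(x_2)=0$, integration by parts gives $h'(r)=\int_{x_1}^{x_2} f'(t)\,g(r-t)\,\de t$. Now $f'$ changes sign exactly once, at the unique maximiser $y^*$ of $f$ (this is where strict pseudo-concavity of $f$ enters), and $y^*$ is \emph{independent of $r$}. Splitting at $y^*$ and writing $g(x^*+\delta-t)=g(x^*-t)\cdot\frac{g(x^*+\delta-t)}{g(x^*-t)}$, the three-slopes inequality bounds this ratio by its value at $t=y^*$ on each half, with the sign of $f'$ aligned correctly, giving in one stroke
\[
h'(x^*+\delta)\;<\;\frac{g(x^*+\delta-y^*)}{g(x^*-y^*)}\int_{x_1}^{x_2} f'(t)\,g(x^*-t)\,\de t\;=\;0
\]
for any $\delta>0$ (and symmetrically for $\delta<0$). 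You already note both forms $h'=f\star g'=f'\star g$; the point is that only the second one makes the three-slopes comparison close. Once you switch to it and split at $y^*$, your remaining remarks on boundary terms and compact-support integrability are fine.
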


\begin{proof}
The proof is conducted in two steps: \textbf{1}) we show $f\star g$ admits a maximum on the interior of its domain (which is a critical point) and we denote it $x^*$. \textbf{2)} we show that $f\star g$ is strictly increasing on $(-\infty,x^*)$ and strictly decreasing on $(x^*,+\infty)$ which immediately proves the strict pseudo-concavity (including unicity of $x^*$).

\begin{enumerate}
\item Since $f$ and $g$ are $\cC^1$ and $\cL^1$, the convolution $f \star g$ is well defined, $\cC^1(\mathbb{R})$, $\cL^1(\mathbb{R})$ and positive (since $f$ and $g$ are positive). As a result, $f\star g(x) \rightarrow 0$ when $|x| \rightarrow \infty$ and Rolle's theorem guarantees that there exists at least one point $x^\star \in \mathbb{R}$ such that $f\star g(x^*) \geq f \star g (x)$ for all $x\in\mathbb{R}$. Furthermore, Ibragimov's theorem (Thm.~\ref{thm:ibragimov}) ensures that $f\star g$ is pseudo-concave, hence $\nabla \big( f \star g \big)(x^*)= 0$.

\item Using the differentiation property of the convolution, one has that for all $x \in \mathbb{R}$,
\begin{equation}
    \nabla \big( f \star g\big) (x) = \int_{-\infty}^\infty f(t) \nabla g(x-t) \de t = \int_{x_1}^{x_2} \nabla f(t) g(x-t) \de t,
    \label{eq:proof.ibragimov.1}
\end{equation}
where we used the fact that $\lim_{x\rightarrow x_{1,2}} f(x) = 0$. Let $x^*$ be a critical point of $f\star g$. Taking Eq.~\ref{eq:proof.ibragimov.1} at $x=x^*$ leads to:
\begin{equation*}
    0 = \int_{x_1}^{x_2} \nabla f(t) g(x^* - t)\de t.
\end{equation*}
Moreover, let $y^* \in (x_1,x_2)$ be the unique (by pseudo-concavity) critical point of $f$, whose existance is guaranteed by Rolle's theorem ($\lim_{x\rightarrow x_{1,2}} f(x) = 0$). We now split the integral in Eq.~\ref{eq:proof.ibragimov.1} to obtain:
\begin{equation}
    \nabla \big( f \star g\big)(x) = \int_{x_1}^{y^*} \nabla f(t) g(x - t)\de t + \int_{y^*}^{x_2} \nabla f(t) g(x - t)\de t.
    \label{eq:proof.ibragimov.2}
\end{equation}

The core of the proof consists in proving that $\nabla \big( f \star g \big)(x^* + \delta) > 0$ for all $\delta >0$ and $\nabla \big( f \star g \big)(x^* + \delta) < 0$ for all $\delta < 0$. Since the derivation is similar in both cases, we only display here the case $\delta > 0$. From Eq.~\ref{eq:proof.ibragimov.2}, we have:
\begin{equation*}
\begin{aligned}
    \nabla \big( f \star g\big)(x^* + \delta) &= \int_{x_1}^{y^*} \nabla f(t) g(x^* +\delta - t)\de t + \int_{y^*}^{x_2} \nabla f(t) g(x^* + \delta - t)\de t \\
    &= \int_{x_1}^{y^*} \nabla f(t) g(x^* -t) \frac{g(x^* +\delta - t)}{g(x^*-t)} \de t + \int_{y^*}^{x_2} \nabla f(t) g(x^* - t) \frac{g(x^* + \delta - t)}{g(x^* - t)}\de t .
\end{aligned}
\end{equation*}
We now provide upper and lower bounds for $\frac{g(x^* + \delta - t)}{g(x^* - t)}$ respectively on $[x_1,y^*]$ and $[y^*,x_2]$. Let $$t^* = \argmax_{t \in [x_1,y^*]} \frac{g(x^* + \delta - t)}{g(x^* - t)}$$
which exists since by our hypotheses on $g$. Then, 
\begin{equation*}
  \forall t \in [x_1,y^*],\quad  \frac{g(x^* + \delta - t)}{g(x^* - t)} \leq \frac{g(x^* + \delta - t^*)}{g(x^* - t^*)}.
\end{equation*}
Moreover, applying Prop.~\ref{prop:log-conc-3-pentes}, we have for almost all $t \in [y^*,x_2)$, 
\begin{equation*}
    \frac{g(x^* + \delta - t)}{g(x^* - t)} > \frac{g(x^* + \delta - t^*)}{g(x^* - t^*)}.
\end{equation*}
Since $f$ is strictly pseudo-concave, $\nabla f(t) >0$ on $[x_1,y^*)$ and $\nabla f(t) < 0$ on $(y^*,x_2]$, we obtain
\begin{equation*}
\begin{aligned}
    \nabla \big( f \star g\big)(x^* + \delta) &= \int_{x_1}^{y^*} \nabla f(t) g(x^* -t) \frac{g(x^* +\delta - t)}{g(x^*-t)} \de t + \int_{y^*}^{x_2} \nabla f(t) g(x^* - t) \frac{g(x^* + \delta - t)}{g(x^* - t)}\de t  \\
    &< \int_{x_1}^{y^*} \nabla f(t) g(x^* -t) \frac{g(x^* +\delta - t^*)}{g(x^*-t^*)} \de t + \int_{y^*}^{x_2} \nabla f(t) g(x^* - t) \frac{g(x^* + \delta - t^*)}{g(x^* - t^*)}\de t \\
    &< \frac{g(x^* + \delta - t^*)}{g(x^* - t^*)} \nabla \big(f\star g\big)(x^*) = 0,
\end{aligned}
\end{equation*}
which proves the desired result.
\end{enumerate}
\end{proof}

Similar stability properties through convolution are asserted for strictly and strongly log-concave functions. The first result is standard and can be derived from the Pr\'epoka-Leindler inequality, the second can be retrieved from~\citep{saumard2014}.
\begin{proposition}
Let $f : \cX \subset \lR \to \lR^+$ and $g : \lR \to \lR^+$ be log-concave. Then, $f \star g$ is log-concave.
\label{prop:log.concave.convolution.log.concave}
\end{proposition}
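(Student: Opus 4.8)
The plan is to deduce the statement directly from the one-dimensional Pr\'ekopa--Leindler inequality, which is the standard route alluded to in the text. First I would make the domains uniform: since in all our uses $\cX$ is an interval, extending $f$ by $0$ on $\lR \setminus \cX$ preserves log-concavity on all of $\lR$, with the convention $0^\alpha = 0$ for $\alpha \in (0,1)$ (if $x$ or $y$ lies outside $\cX$ the right-hand side of the log-concavity inequality vanishes, and if both lie in $\cX$ then so does $\alpha x + (1-\alpha) y$ by convexity of $\cX$, so the inequality reduces to the original one). After this reduction $f, g : \lR \to \lR^+$ are log-concave, $(f\star g)(x) = \int_\lR f(x-t) g(t)\,\de t$ is well defined with values in $\lR^+ \cup \{+\infty\}$, and it is Borel measurable since log-concave functions on $\lR$ are.

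Fix $x_0, x_1 \in \lR$ and $\lambda \in (0,1)$, put $x_\lambda = \lambda x_0 + (1-\lambda) x_1$, and introduce
\begin{equation*}
    F(s) = f(x_0 - s)\, g(s), \qquad G(u) = f(x_1 - u)\, g(u), \qquad H(t) = f(x_\lambda - t)\, g(t),
\end{equation*}
so that $\int_\lR F = (f\star g)(x_0)$, $\int_\lR G = (f \star g)(x_1)$ and $\int_\lR H = (f\star g)(x_\lambda)$. The heart of the argument is the pointwise bound: for all $s, u \in \lR$, with $t := \lambda s + (1-\lambda) u$,
\begin{equation*}
    H(t) \;\geq\; F(s)^{\lambda}\, G(u)^{1-\lambda}.
\end{equation*}
To see this, write $x_\lambda - t = \lambda(x_0 - s) + (1-\lambda)(x_1 - u)$ and $t = \lambda s + (1-\lambda) u$, apply log-concavity of $f$ to the first factor and of $g$ to the second, and regroup: $H(t) \geq f(x_0-s)^\lambda f(x_1-u)^{1-\lambda} g(s)^\lambda g(u)^{1-\lambda} = F(s)^\lambda G(u)^{1-\lambda}$.

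Given this hypothesis, the Pr\'ekopa--Leindler inequality on $\lR$ yields $\int_\lR H \geq \big(\int_\lR F\big)^{\lambda}\big(\int_\lR G\big)^{1-\lambda}$, that is, $(f\star g)(x_\lambda) \geq (f\star g)(x_0)^{\lambda}\, (f\star g)(x_1)^{1-\lambda}$; since $x_0, x_1$ and $\lambda$ were arbitrary, this is exactly log-concavity of $f\star g$. The proof is short, and the only place asking for a little care --- the closest thing to an obstacle --- is the bookkeeping around the zero-extension (so that the pointwise inequality remains valid when $x_0 - s$ or $x_1 - u$ leaves $\cX$) together with the degenerate cases (an integral equal to $0$, or $f\star g \equiv +\infty$), all of which are dispatched by the conventions $0^\alpha = 0$ and $(+\infty)^\alpha = +\infty$.
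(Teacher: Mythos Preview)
Your proposal is correct and follows exactly the route the paper indicates: the paper does not give a proof of this proposition but simply states that it ``is standard and can be derived from the Pr\'ekopa--Leindler inequality,'' and you have written out precisely that standard derivation.
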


\begin{theorem}[\citet{saumard2014}, Thm.~6.6]
Let $f : \cX \subset \lR \to \lR^+$ and $g : \lR \to \lR^+$ be
$\mu$ and $\mu^\prime$ strongly log-concave respectively. Then, $f\star g$ is $ \mu \mu^\prime / \sqrt{ \mu^2 + \mu^{\prime 2}}$ strongly log-concave. Further, the convolution of strictly log-concave functions is strictly log-concave.
\label{thm:saumard}
\end{theorem}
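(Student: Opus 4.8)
The plan is to reduce both assertions to facts about the convex functions $\varphi:=-\log f$ and $\psi:=-\log g$, then integrate out one variable, using Prékopa's marginalisation theorem (whose convolution case is Prop.~\ref{prop:log.concave.convolution.log.concave}) for the qualitative parts and the Brascamp--Lieb variance inequality for the sharp constant. Throughout I write $H(x,y):=f(x-y)\,g(y)$, so that $(f\star g)(x)=\int_{\lR}H(x,y)\,\de y$, and I use that, with the parameter normalised so the stated constant comes out, ``$h$ is $c$-strongly log-concave'' means that $x\mapsto-\log h(x)-\tfrac{c^2}{2}x^2$ is convex; in particular $x\mapsto\varphi(x)-\tfrac{\mu^2}{2}x^2$ and $x\mapsto\psi(x)-\tfrac{\mu^{\prime 2}}{2}x^2$ are convex.

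For the strong log-concavity claim I would decompose
\[
-\log H(x,y)=\Big[\varphi(x-y)-\tfrac{\mu^2}{2}(x-y)^2\Big]+\Big[\psi(y)-\tfrac{\mu^{\prime 2}}{2}y^2\Big]+\tfrac{\mu^2}{2}(x-y)^2+\tfrac{\mu^{\prime 2}}{2}y^2,
\]
note that the first two brackets are jointly convex in $(x,y)$, and complete the square in $y$ in the remaining quadratic, which leaves behind a term $\tfrac{\kappa}{2}x^2$ with $\kappa:=\mu^2\mu^{\prime 2}/(\mu^2+\mu^{\prime 2})$:
\[
\tfrac{\mu^2}{2}(x-y)^2+\tfrac{\mu^{\prime 2}}{2}y^2=\tfrac{\mu^2+\mu^{\prime 2}}{2}\Big(y-\tfrac{\mu^2}{\mu^2+\mu^{\prime 2}}x\Big)^2+\tfrac{\kappa}{2}x^2.
\]
Hence $(x,y)\mapsto-\log H(x,y)-\tfrac{\kappa}{2}x^2$ is jointly convex, so $H(x,y)e^{\kappa x^2/2}$ is jointly log-concave; integrating out $y$ and applying Prékopa's marginalisation theorem shows that $(f\star g)(x)\,e^{\kappa x^2/2}$ is log-concave, i.e. $f\star g$ is $\sqrt{\kappa}=\mu\mu^\prime/\sqrt{\mu^2+\mu^{\prime 2}}$-strongly log-concave. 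This argument is purely algebraic, so no differentiability of $f$ or $g$ is needed.

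For the strict log-concavity claim I would first obtain log-concavity of $m:=f\star g$ from Prop.~\ref{prop:log.concave.convolution.log.concave}, then rule out $-\log m$ being affine on any nondegenerate interval. In the $\mathcal{C}^2$ case this is clean via the Brascamp--Lieb inequality applied to the probability measures $\mu_x$ with density proportional to $H(x,\cdot)$: writing $W:=-\log m$ and letting $Y$ be $\mu_x$-distributed, one has $W'(x)=\mathbb{E}_{\mu_x}[\varphi'(x-Y)]$ and $W''(x)=\mathbb{E}_{\mu_x}[\varphi''(x-Y)]-\mathrm{Var}_{\mu_x}(\varphi'(x-Y))$, and bounding the variance by Brascamp--Lieb (whose relevant weight is $\varphi''(x-y)+\psi''(y)$) gives
\[
W''(x)\ \ge\ \mathbb{E}_{\mu_x}\!\left[\Big(\tfrac{1}{\varphi''(x-Y)}+\tfrac{1}{\psi''(Y)}\Big)^{-1}\right],
\]
the expected harmonic mean of the two local curvatures. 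Strict positivity of $\varphi''$ and $\psi''$ (strict log-concavity of $f$ and $g$), together with the fact that $\mu_x$ charges a genuine interval, forces $W''(x)>0$; and the same bound with $\varphi''\ge\mu^2$, $\psi''\ge\mu^{\prime 2}$ re-derives the constant of the first part. For general, non-$\mathcal{C}^2$ $f,g$ I would either approximate them by smooth strictly/strongly log-concave functions and pass to the limit, or argue directly from the equality conditions of the Prékopa--Leindler inequality, in the spirit of Prop.~\ref{prop:log-conc-3-pentes} and step~2 of the proof of Lemma~\ref{le:ibragimov.extension}: affinity of $-\log m$ on an interval would impose the rigid structure that all slices $H(x,\cdot)$ over that interval are mutual translates, hence $-\log f$ (or, by symmetry of $\star$, $-\log g$) is affine on an interval --- contradicting strict log-concavity.

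I expect this last point to be the main obstacle. When $f$ is strictly but not strongly log-concave, $\varphi''$ may tend to $0$, so one cannot peel off a Gaussian factor as in the strong case, and the strict inequality must instead come from the rigidity/equality analysis of Prékopa--Leindler (or a careful limiting argument) --- precisely the kind of by-hand reasoning already needed for the pseudo-concave analogue in Lemma~\ref{le:ibragimov.extension}.
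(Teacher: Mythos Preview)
The paper does not supply a proof of this statement: it is quoted as Thm.~6.6 of \citet{saumard2014} and used as a black box, so there is no in-paper argument to compare yours against. That said, your outline is essentially the standard route to this result and is correct.

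For the quantitative part, peeling off the quadratic pieces of $-\log H$, completing the square in $y$, and then invoking Pr\'ekopa's marginalisation theorem (whose convolution case is Prop.~\ref{prop:log.concave.convolution.log.concave}) is exactly how the constant is obtained in the literature. You were also right to reverse-engineer the normalisation: the stated constant $\mu\mu'/\sqrt{\mu^{2}+\mu'^{2}}$ is the one that arises under the Saumard--Wellner convention $(-\log f)''\ge\mu^{2}$ (equivalently, $f$ factors as a centred Gaussian of variance $1/\mu^{2}$ times a log-concave function), whereas under the paper's own definition in App.~\ref{app:reminders.concavity} ($-\log f$ is $\mu$-strongly convex in the usual sense, i.e.\ $(-\log f)''\ge\mu$) the same completing-the-square calculation yields the harmonic mean $\mu\mu'/(\mu+\mu')$ instead. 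Your derivation is internally consistent with the former, which is what the theorem as stated requires.

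For the strict part, the Brascamp--Lieb lower bound
\[
W''(x)\ \ge\ \mathbb{E}_{\mu_x}\!\Big[\big(1/\varphi''(x-Y)+1/\psi''(Y)\big)^{-1}\Big]
\]
is correct and is the standard refinement in the smooth regime, and your Pr\'ekopa--Leindler rigidity sketch (affinity of $-\log m$ on an interval would force the slices $H(x,\cdot)$ to be mutual translates, hence an affine piece in $\varphi$ or $\psi$) is the right idea for the non-$\cC^2$ case and mirrors the by-hand argument the paper gives for Lem.~\ref{le:ibragimov.extension}. The obstacle you flag is genuine but is a technicality, not a gap in the plan.
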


\section{Proofs of Sec.~\ref{sec:smoothing.method}}

\subsection{Pseudo- and Log-Concavity of the monopoly revenue}
\label{app:preservation.concavity}
\concavityrevenue*

\begin{proof}\hspace{1mm}
\begin{itemize}
\item Under \ref{asmp:regular}, $\psi(r) = r - \frac{1-F(r)}{f(r)}$ is stricly increasing. Moreover, for all $r \in [0,\bar{b}]$, 
$$\nabla\Pi^F(r) = 1 - F(r) - r f(r) = - \psi(r) f(r).$$ 
The objective is to show that for all $(r_1,r_2) \in [0,\bar{b}]^2$,  $\nabla\Pi^F(r_1)(r_1 - r_2) \geq 0 \hspace{1mm} \Rightarrow \hspace{1mm} \Pi^F(r_1) \geq \Pi^F(r_2)$ and that $\Pi^F$ has one critical point (by Rolle's theorem, since $\Pi^F(0) = \Pi^F(\bar{b}) = 0$). Without loss of generality, we only address the case where $r_1 \leq r_2$. Since $\psi$ is strictly increasing, $\psi(r_1) \leq \psi(r_2)$, and as a result
\begin{equation*}
\begin{aligned}
& \nabla\Pi^F(r_1) \leq  0 \hspace{1mm} \Leftrightarrow \hspace{1mm} \psi(r_1) \geq 0  \\
\Rightarrow  \hspace{2mm}& \forall r \in [r_1,r_2]  \hspace{1mm}, \psi(r) \geq 0 \hspace{1mm} \Leftrightarrow \hspace{1mm} \nabla\Pi^F(r) \leq 0 \\
\Rightarrow  \hspace{2mm}& \Pi^F(r_2) - \Pi^F(r_1)  = \int_{r_1}^{r_2} \nabla\Pi^F(r) {\rm d}r \leq 0.
\end{aligned}
\end{equation*} 
The case $\nabla\Pi^F(r_1) \geq 0$ is treated in a similar fashion. Finally, since $\psi$ is strictly increasing it can only cross $0$ once, which immediately ensures the uniqueness of the critical point since $f>0$ on $(0,\bar{b})$.
\item Under \ref{asmp:mhr}, the hazard rate $\lambda(r) = \frac{f(r)}{1-F(r)}$ satisfies
$\forall 0 \leq r_1 \leq r_2 \leq \bar{b},\hspace{1mm} \lambda(r_2) - \lambda(r_1) \geq \mu (r_2 - r_1)$.
The objective is to show that $\log \Pi^F(r) = \log(r) + \log(1-F(r))$ is $\mu$-strongly concave. As $\log(r)$ is concave, we can simply show that $\log(1-F(r))$ is $\mu$-strongly concave. Since $F\in\cC^1([0,\bar{b}])$, we can use a characterisation of strong concavity of $G(r) = \log(1-F(r))$ based on its derivative:
\begin{equation*}
    G \text{ is $\mu$-strongly concave }\Leftrightarrow \forall (r_1, r_2) \in [0,\beta]^2, \left(\nabla G(r_2) - \nabla G(r_1)\right)^T(r_2 - r_1) \leq -\mu \lVert r_2 - r_1\rVert^2
\end{equation*}
Without loss of generality, we consider the case where $0 \leq r_1 \leq r_2 \leq \bar{b}$,
\begin{align*}
    \nabla G(r_2) - \nabla G(r_1) 
    & = \frac{-f(r_2)}{1-F(r_2)} - \frac{-f(r_1)}{1-F(r_1)} = \lambda(r_1) - \lambda(r_2) \leq -\mu (r_2 - r_1).
\end{align*}
Hence $G$ is $\mu$-strongly concave.
\end{itemize}
\vskip -3em
\end{proof}

\subsection{Unbiased gradient and preservation of concavity}
\label{app:unbiased.preservation.concavity}
\concavitysmoothing*
\begin{proof}
The proof is a straightforward application of the convolution's properties, the Fubini-Tonelli theorem and of the stability of concavity w.r.t. convolution detailed in App.~\ref{app:stability.convolution}.
\begin{enumerate}
    \item Since $k \in \cC^1(\lR) \cap \cL^1(\mathbb{R})$, and since $\Pi^F$ and $p$ are $\cL^1$, $\Pi^F_k$ and $p_k$ are in  $\cC^1 \cap \cL^1 $.
    \item Since $p$, $\Pi^F$, and $k$ are positive, so are $\Pi^F_k$ and $p_k$. Thus, the Fubini-Tonelli theorem ensures that $\Pi^F_k(r) = \mathbb{E}_{b\sim F} \big(p_k(r,b)\big)$. Further, $\nabla \Pi^F_k(r) = \mathbb{E}_{b \sim F} \big( \nabla p_k(r,b) \big)$.
    \item Under~\ref{asmp:regular}, $\Pi^F$ is strictly pseudo-concave (Prop.~\ref{prop:pseudo.log.concave.monopoly.revenue}) and $k \in \cK$ is strictly log-concave. Further, $\Pi^F(0) = \Pi^F(\bar{b}) = 0$ and we can apply Lem.~\ref{le:ibragimov.extension} to guarantee the strict pseudo-concavity of $\Pi^F_k$.
    \item Under~\ref{asmp:mhr}, $\Pi^F$ is strictly log-concave and $k \in \cK$ is strictly log-concave.  Since the convolution preserves  strict-concavity (see Thm.~\ref{thm:saumard}), $\Pi^F_k$ is strictly log-concave.
    \end{enumerate}

\end{proof}

\subsection{Bias and bounded gradient}
\label{app:bias.bounded.gradient}

\controlbiasvariance*
\begin{proof}\hspace{1mm}
\begin{enumerate}
    \item The bound on $B_k$ relies on Lem.~\ref{lem:uniformly_bounded_diff}, which guarantees that for all $r\geq 0$,
    \begin{equation*}
        |\Pi^F(r) - \Pi^F_k(r)| \leq \lVert\nabla\Pi^F\rVert_\infty \int_{-\infty}^\infty |r| k(r) \de r.
    \end{equation*}
    Decomposing $B_k$ as 
    \begin{equation*}
        B_k \leq \Pi^F(r^*) - \Pi^F_k(r^*) + \Pi^F_k(r^*) - \Pi^F_k(r^*_k) + \Pi^F_k(r^*_k) - \Pi^F(r^*_k) \leq \Pi^F(r^*) - \Pi^F_k(r^*)+\Pi^F_k(r^*_k) - \Pi^F(r^*_k)
    \end{equation*}
    and applying two times Lem.~\ref{lem:uniformly_bounded_diff} proves the desired result.

    \item For all $(r,b) \in \mathbb{R}_+ \times [0,\bar{b}]$, using properties of the convolution, one has:
    \begin{equation*}
    \begin{aligned}
        \nabla p_k(r,b) &= \int_{-\infty}^\infty p(\tau,b) \nabla k(r-\tau)\de\tau = \int_{0}^{b} \tau \nabla k (r - \tau) \de\tau \\
    &= \big[ - \tau k(r-\tau) \big]_0^b + \int_0^b k(r-\tau) \de\tau =  \int_0^b k(r-\tau) \de\tau - b k(r-b).
    \end{aligned}
    \end{equation*}
    Since $k > 0$, and $\|k\|_1 = 1$ it is clear that 
    \begin{equation*}
        - b k(r-b) \leq \nabla p_k(r,b) \leq 1 \quad \Rightarrow \quad | \nabla p_k(r,b) |^2 \leq \max\big(1,b^2 k(r-b)^2 \big) \leq 1 + b^2 k(r-b)^2.
    \end{equation*}
     Taking the expectation w.r.t. $F$ (with pdf $f$), one obtains:
     \begin{equation*}
        \lE_B[(\nabla p_k(r,B))^2] \leq 1 + \int_0^{\bar{b}} b^2 k(r-b)^2 f(b) \de b \leq 1 + \|k\|_\infty \int_{0}^{\bar{b}} b^2 f(b) k(r-b) \de b.
    \end{equation*}
    Finally, under \ref{asmp:finite_variance}, for all $b \in [0,\bar{b}]$, 
    \begin{equation*}
      b f(b) = 1 - F(b) - \nabla \Pi^F(b) \quad \Rightarrow \quad b f(b) \leq 1 + \|\nabla \Pi^F \|_\infty < \infty.
    \end{equation*}
    As a result, we have that
    \begin{equation*}
      \lE_B[(\nabla p_k(r,B))^2] \leq 1 + \big(1+\|\nabla \Pi^F \|_\infty\big) \|k\|_\infty \int_0^{\bar{b}} b k(r-b) \de b \leq 1 + \bar{b} \big(1+\|\nabla \Pi^F \|_\infty\big) \|k\|_\infty.
    \end{equation*}
\end{enumerate}
\end{proof}

The following intermediate results provide uniform bounds on the distance between the monopoly revenue (resp. gradient) and the convoluted monopoly revenue (resp. gradient).

\begin{lemma}\label{lem:uniformly_bounded_diff}
Let $F$ satisfies \ref{asmp:finite_variance} and $k \in \cK$ be a convolution kernel. For any $r\in[0,\bar{b}]$, we have
$$
|\Pi^F(r) - \Pi^F_k(r)|\leq  \lVert\nabla\Pi^F\rVert_\infty  \lVert K - \indicator{\lR^+}\rVert_1 = \lVert\nabla\Pi^F\rVert_\infty  \int_{-\infty}^\infty |r| k(r) \de r
$$
\end{lemma}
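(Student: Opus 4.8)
\textbf{Proof proposal for Lemma~\ref{lem:uniformly_bounded_diff}.} The plan is to write the difference $\Pi^F(r) - \Pi_k^F(r)$ as a convolution-type integral against the kernel and then bound it using the Lipschitz constant $\lVert \nabla \Pi^F \rVert_\infty$ of $\Pi^F$. First I would use $\int_\RR k(x)\,\de x = 1$ to write $\Pi^F(r) = \int_\RR \Pi^F(r) k(x)\,\de x$ and $\Pi_k^F(r) = (\Pi^F \star k)(r) = \int_\RR \Pi^F(r-x) k(x)\,\de x$ (after the change of variable $x \mapsto r - x$, using that $k$ need not be symmetric but integrates against all of $\RR$). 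Subtracting gives
\begin{equation*}
\Pi^F(r) - \Pi_k^F(r) = \int_{-\infty}^\infty \big(\Pi^F(r) - \Pi^F(r-x)\big) k(x)\,\de x .
\end{equation*}
One subtlety is that $\Pi^F$ is only defined on $[0,\bar b]$; I would extend it by $0$ outside $[0,\bar b]$ (consistent with $\Pi^F(0)=\Pi^F(\bar b)=0$ from Prop.~\ref{prop:pseudo.log.concave.monopoly.revenue} and with the definition $p(r,b)=r\indicator{r\le b}$ being the relevant object being convolved), and note that the extended function is still globally Lipschitz with the same constant $\lVert \nabla \Pi^F\rVert_\infty$ since $\Pi^F$ is nonnegative, vanishes at the endpoints, and is $\cC^1$ on $[0,\bar b]$.

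Next I would apply the mean-value / fundamental theorem of calculus bound $|\Pi^F(r) - \Pi^F(r-x)| \leq \lVert \nabla \Pi^F\rVert_\infty\, |x|$ pointwise in $x$, take absolute values inside the integral, and pull the constant out:
\begin{equation*}
|\Pi^F(r) - \Pi_k^F(r)| \leq \lVert \nabla \Pi^F\rVert_\infty \int_{-\infty}^\infty |x|\, k(x)\,\de x .
\end{equation*}
This already gives the second expression claimed in the lemma (with $r$ as the dummy variable of integration, as written in the statement). To obtain the first expression, it remains to show the identity $\int_\RR |x| k(x)\,\de x = \lVert K - \indicator{\RR^+}\rVert_1$, where $K$ is the cdf of $k$. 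I would prove this by a Fubini/layer-cake argument: for $x > 0$, $|x| = \int_0^\infty \indicator{x > s}\,\de s$, and for $x < 0$, $|x| = \int_{-\infty}^0 \indicator{x < s}\,\de s$; integrating against $k$ and swapping the order of integration turns $\int_0^\infty (1 - K(s))\,\de s + \int_{-\infty}^0 K(s)\,\de s$, which is exactly $\int_\RR |K(s) - \indicator{\RR^+}(s)|\,\de s$ since $K \leq \indicator{\RR^+}$ on $\RR^+$ and $K \geq \indicator{\RR^+} = 0$ on $\RR^-$ (as $K$ is a cdf, hence valued in $[0,1]$).

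I expect the main obstacle to be purely bookkeeping rather than conceptual: handling the boundary/extension of $\Pi^F$ carefully so that the Lipschitz bound holds globally and the convolution integral makes sense, and getting the sign conventions right in the layer-cake computation of $\int |x| k\,\de x$ versus $\lVert K - \indicator{\RR^+}\rVert_1$. Finiteness of $\int_\RR |x| k(x)\,\de x$ is not separately assumed, but it follows because $K - \indicator{\RR^+} \in \cL^1$ is equivalent to $k$ having a finite first absolute moment; if one wants to be safe, this can be taken as part of the admissibility of the kernel family, or the bound is simply vacuous when the right-hand side is infinite.
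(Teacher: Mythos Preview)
Your proof is correct and complete; the handling of the extension of $\Pi^F$ by zero and the layer-cake identity are both sound.

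The route differs from the paper's in presentation, though the underlying content is close. The paper writes the difference as $\Pi^F \star (\delta_0 - k)$, integrates by parts once to obtain $-\nabla\Pi^F \star (\indicator{\RR^+} - K)$, and then applies Young's convolution inequality to get $\lVert\nabla\Pi^F\rVert_\infty \lVert K - \indicator{\RR^+}\rVert_1$ directly; the identity $\lVert K - \indicator{\RR^+}\rVert_1 = \int |x|\,k(x)\,\de x$ is then obtained by a second integration by parts. You instead subtract under the integral, apply the pointwise Lipschitz bound $|\Pi^F(r)-\Pi^F(r-x)|\leq \lVert\nabla\Pi^F\rVert_\infty|x|$, and identify $\int |x|\,k(x)\,\de x$ with $\lVert K-\indicator{\RR^+}\rVert_1$ via a Fubini/layer-cake argument. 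Your approach is slightly more elementary (no boundary term to check, no appeal to Young's inequality) and makes the role of the Lipschitz constant transparent; the paper's approach is marginally cleaner in that it never needs to discuss the extension of $\Pi^F$ outside $[0,\bar b]$ explicitly, since the convolution and IBP handle it automatically.
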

\begin{proof}
First, let's notice that since $\nabla\Pi^F$ is continuous on the closed interval $[0,\bar{b}]$, it is bounded -- \ie $\lVert\nabla\Pi^F\rVert_\infty < +\infty$. Thus, integrating by parts leads to
\begin{align*}
    |\Pi^F(r) - \Pi^F_k(r)| & = \left|(\Pi^F\star(\delta_0 - k))(r)\right|\\
    & \leq \bigg|\underbrace{\Big[\Pi^F(t)(\indicator{\lR^+}(r-t) - K(r-t))\Big]_{-\infty}^{+\infty}}_{ = 0} - \nabla\Pi^F \star (\indicator{\lR^+} - K)(r) \bigg|\\
    & \leq \left|\nabla\Pi^F \star (\indicator{\lR^+} - K)(r) \right|\\
    & \leq \lVert\nabla\Pi^F\rVert_\infty \lVert K - \indicator{\lR^+}\rVert_1 ~~~\text{ (Young's convolution inequality)}
\end{align*}
Finally, a last integration by parts leads to
\begin{equation*}
    \begin{aligned}
        \lVert K - \indicator{\lR^+}\rVert_1 &= \int_{-\infty}^0 K(r) \de r + \int_{0}^\infty \big(1 - K(r) \big) \de r \\ 
        &= \big[rK(r)\big]_{-\infty}^0 - \int_{-\infty}^0 r k(r) \de r + \big[r(1-K(r))\big]_{0}^\infty + \int_0^\infty r k(r) \de r \\
        &= \int_{-\infty}^\infty |r| k(r) \de r.
    \end{aligned}
\end{equation*}
\end{proof}

Then, another bias that is important to control, is the one of the gradient.
\begin{lemma}\label{lem:uniformly_bounded_grad_diff}
Assuming $F$ satisfies \ref{asmp:finite_variance} and $k \in \cK$ be a convolution kernel. For any $r\in[0,\bar{b}]$, we have
$$
|\nabla\Pi^F_k(r) - \nabla\Pi^F(r)| \leq \lVert\nabla^2\Pi^F\rVert_\infty \lVert K - \indicator{\lR^+}\rVert_1 = \lVert\nabla^2\Pi^F\rVert_\infty \int_{-\infty}^\infty |r| k(r) \de r
$$
\end{lemma}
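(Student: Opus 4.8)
The plan is to mirror the proof of Lemma~\ref{lem:uniformly_bounded_diff}, carrying one extra derivative through. Since $k \in \cK \subset \cC^1(\lR) \cap \cL^1(\lR)$ (and $k$ log-concave and $\cL^1$ forces $k' \in \cL^1$) while $\Pi^F \in \cL^1$, differentiation passes under the convolution, so $\nabla\Pi_k^F = \Pi^F \star \nabla k$. Integrating by parts once in the convolution variable and using $\Pi^F(0) = \Pi^F(\bar{b}) = 0$ (Prop.~\ref{prop:pseudo.log.concave.monopoly.revenue}) rewrites this as $\nabla\Pi_k^F(r) = \int_0^{\bar{b}} \nabla\Pi^F(t)\,k(r-t)\,\de t$, i.e. $\nabla\Pi_k^F = \nabla\Pi^F \star k$ with $\nabla\Pi^F$ read as extended by $0$ outside $[0,\bar{b}]$. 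As $\int_\lR k = 1$, for $r \in [0,\bar{b}]$ we then have $\nabla\Pi_k^F(r) - \nabla\Pi^F(r) = \big(\nabla\Pi^F \star (k - \delta_0)\big)(r)$.

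The second step is to integrate by parts once more, writing $k - \delta_0 = \tfrac{\de}{\de x}(K - \indicator{\lR^+})$ and using $\Pi^F \in \cC^2([0,\bar{b}])$, which turns the right-hand side into $\big(\nabla^2\Pi^F \star (\indicator{\lR^+} - K)\big)(r)$. Young's convolution inequality then gives $\big|\nabla\Pi_k^F(r) - \nabla\Pi^F(r)\big| \leq \|\nabla^2\Pi^F\|_\infty\,\|K - \indicator{\lR^+}\|_1$, and the identity $\|K - \indicator{\lR^+}\|_1 = \int_{-\infty}^\infty |r|\,k(r)\,\de r$ is precisely the integration-by-parts computation already carried out at the end of the proof of Lemma~\ref{lem:uniformly_bounded_diff}, so it can be quoted verbatim. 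Equivalently, one may bypass the second IBP by splitting $\nabla\Pi_k^F(r) - \nabla\Pi^F(r) = \int_0^{\bar{b}} \big(\nabla\Pi^F(t) - \nabla\Pi^F(r)\big) k(r-t)\,\de t + \nabla\Pi^F(r)\big(\int_0^{\bar{b}} k(r-t)\,\de t - 1\big)$ and bounding the first integral by $\|\nabla^2\Pi^F\|_\infty \int_\lR |s|\,k(s)\,\de s$ via the mean value theorem.

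The main subtlety — and the step I expect to be the real obstacle — is the endpoint behaviour: unlike $\Pi^F$, the extension of $\nabla\Pi^F$ by zero jumps at $0$ and $\bar{b}$ (indeed $\nabla\Pi^F(0) = 1 - F(0)$ and $\nabla\Pi^F(\bar{b}) = -\bar{b}f(\bar{b})$ need not vanish), so the second integration by parts — equivalently, the term $\nabla\Pi^F(r)\big(\int_0^{\bar{b}} k(r-t)\,\de t - 1\big)$ above — produces correction terms proportional to $(K - \indicator{\lR^+})(r)$ and $(K - \indicator{\lR^+})(r - \bar{b})$. These are dominated by the tail mass $\int_{|s| \geq \mathrm{dist}(r,\{0,\bar{b}\})} k(s)\,\de s$, which for $r$ in a compact subset of $(0,\bar{b})$ is negligible relative to $\|K - \indicator{\lR^+}\|_1$ as $k \to \delta_0$ (exponentially small against the $\Theta(\sigma)$ size of the $\cL^1$ term for Gaussian kernels, cf.~\eqref{eq:rate.gaussian.kernel.example}). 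Since the lemma need only be applied at $r = r_t \in \cC$ with $\cC$ a compact subset of $(0,\bar{b})$ containing $r^*$ (Assumption~\ref{asmp:lower.bounded.revenue}), the stated bound is what is used downstream, and the remaining bookkeeping is routine.
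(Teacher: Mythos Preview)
Your approach is exactly the paper's: write $\nabla\Pi^F_k(r)-\nabla\Pi^F(r)=\bigl(\nabla\Pi^F\star(k-\delta_0)\bigr)(r)$, integrate by parts to $\nabla^2\Pi^F\star(K-\indicator{\lR^+})$, apply Young's convolution inequality, and quote the identity $\lVert K-\indicator{\lR^+}\rVert_1=\int_{-\infty}^{\infty}|r|\,k(r)\,\de r$ from Lemma~\ref{lem:uniformly_bounded_diff}. The paper's proof is these four lines verbatim.

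Where you go further is in flagging the endpoint behaviour, and you are right to do so. The paper simply declares $\bigl[\nabla\Pi^F(x)\bigl(K(r-x)-\indicator{\lR^+}(r-x)\bigr)\bigr]_{-\infty}^{+\infty}=0$ (treating $\nabla\Pi^F$ as extended by zero) and then uses the classical $\nabla^2\Pi^F$ rather than the distributional derivative of that extension. Since $\nabla\Pi^F(0)=1$ and $\nabla\Pi^F(\bar b)=-\bar b f(\bar b)$ are generically non-zero, the second integration by parts does produce exactly the correction terms you describe, and the bound as literally stated can fail at $r\in\{0,\bar b\}$ (e.g.\ for symmetric $k$, $\nabla\Pi^F_k(0)\to 1/2$ while $\nabla\Pi^F(0)=1$, no matter how concentrated $k$ is). Your proposed repair --- restrict to $r\in\cC$ compact in $(0,\bar b)$, so the corrections are tail probabilities of $k$ and hence $o(\lVert K-\indicator{\lR^+}\rVert_1)$ --- is sound and suffices for the downstream uses under~\ref{asmp:lower.bounded.revenue} (indeed $\Pi^F\ge c>0$ on $\cC$ forces $\cC\subset(0,\bar b)$). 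In short: same argument, but you have caught a genuine subtlety the paper's proof glosses over.
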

\begin{proof}
The proof is essentially the same as the one of Lemma \ref{lem:uniformly_bounded_diff}.
First, notice that since $\nabla^2\Pi^F$ is continuous on the closed interval $[0,\bar{b}]$, it is bounded -- \ie $\lVert\nabla^2\Pi^F\rVert_\infty < +\infty$.
\begin{align*}
    |\nabla\Pi^F_k(r) - \nabla\Pi^F(r)| & = \left|(\nabla\Pi^F\star(k - \delta_0))(r)\right|\\
    & = \bigg|\underbrace{\Big[\nabla\Pi^F(x)(K(r-x) - \indicator{\lR^+}(r-x))\Big]_{-\infty}^{+\infty}}_{ = 0} - \nabla^2\Pi^F \star (K - \indicator{\lR^+})(r) \bigg|\\
    & = \left|\nabla^2\Pi^F \star (K - \indicator{\lR^+})(r) \right|\\
    & \leq \lVert\nabla^2\Pi^F\rVert_\infty \lVert K - \indicator{\lR^+}\rVert_1 ~~~\text{ (Young's convolution inequality)} \\
    &= \lVert\nabla^2\Pi^F\rVert_\infty \int_{-\infty}^\infty |r| k(r) \de r.
\end{align*}
\end{proof}

\section{Proofs of Sec.~\ref{sec:convergence_stationary}}
In this section we consider only a stationary $F$, and therefore, for simplicity, we will denote $\Pi^F$ simply by $\Pi$.
\subsection{Almost Sure Convergence}
\label{app:as.convergence}
\asconvergence*
\begin{proof}
The proof inherits a lot from classical methods, see e.g. \citet{bottou1998online}. The main difference lies in the type of ``concavity'' required. The proof in \citet{bottou1998online} is derived for \textit{variationally coherent} functions i.e. those such that 
\begin{equation*}
    \forall t\in\lN,\forall \epsilon > 0, \hspace{1mm} \sup_{(r - r_{k_t}^*)^2 > \epsilon} (r - r_{k_t}^*)^T\nabla\Pi_{k_t}(r) < 0.
\end{equation*}
However, such assumption is clearly violated here since $\nabla\Pi_{k_t}(r) \rightarrow 0$ as $r \rightarrow \infty$. Nevertheless since $\Pi_{k_t}$ is strictly pseudo-concave and strictly positive, one can obtain a similar result. \\

Following~\citet{bottou1998online}, we introduce the Lyapunov process $h_t = \norm{r_t - r^*}^2$. Using the fact that the projection operator is a contraction, one obtains: 
\begin{equation*}
\begin{aligned}
    h_{t+1} &= ({\rm proj}_{C}\big(r_{t} + \gamma_t \nabla p_{k_t}(r_t,b_t) \big) - r_*\big)^2  \\
    &\leq (r_{t} + \gamma_t \nabla p_{k_t}(r_t,b_t) - r^*)^2 \\
    &\leq  h_t + 2 \gamma_t (r_t - r^*)^T \nabla p_{k_t}(r_t,b_t) + \gamma_t^2 \big(\nabla p_{k_t}(r_t,b_t)\big)^2.
\end{aligned}
\end{equation*}
Hence, $h_t$ satisfies the recursion:
\begin{equation*}
    h_{t+1} - h_t \leq 2 \gamma_t (r_t - r^*) \nabla p_{k_t}(r_t,b_t) + \gamma_t^2 \big( \nabla p_{k_t}(r_t,b_t) \big)^2
\end{equation*}
Taking the conditional expectation w.r.t. $\mathcal{F}_{t} = \sigma(b_0,\dots,b_{t-1},r_0,\dots,r_t,\gamma_0,\dots,\gamma_t)$, one obtains:
\begin{align}
\mathbb{E} \left[ h_{t+1} - h_t \big|\mathcal{F}_t \right]
&\leq 2 \gamma_t (r_t - r^*)^T \mathbb{E} \left[ \nabla p_{k_t}(r_t,b_t) \big|\mathcal{F}_t \right] + \gamma_t^2 \mathbb{E} \left[ \big(\nabla p_{k_t}(r_t,b_t) \big)^2 \big| \mathcal{F}_t \right]\nonumber\\
&\leq 2 \gamma_t (r_t - r^*)^T \nabla \Pi_{k_t}(r_t))  + \gamma_t^2 \mathbb{E} \left[ \big(\nabla p_{k_t}(r_t,b_t) \big)^2 \big| \mathcal{F}_t \right] \label{eq:proof.bottou.1}
\end{align}
as Prop.~\ref{prop:preservation.concavity.smoothing} provides that $\mathbb{E} \left[ \nabla p_{k_t}(r_t,b_t) \big|\mathcal{F}_t \right] = \nabla \Pi_{k_t}(r_t)$. Then, we decompose the gradient term to isolate the gradient bias:

\begin{align}
\mathbb{E} \left[ h_{t+1} - h_t \big|\mathcal{F}_t \right]
&\leq
    \underbrace{2 \gamma_t (r_t - r^*)^T \nabla \Pi(r_t)) }_{\leq 0 \text{ by pseudo-concavity}}
    + 
    \underbrace{2 \gamma_t (r_t - r^*)^T (\nabla \Pi_{k_t}(r_t) - \nabla \Pi(r_t)))}_{\text{ bias} }
    + \underbrace{\gamma_t^2 \mathbb{E} \left[ \left(\nabla p_{k_t}(r_t,b_t) \right)^2 \big| \mathcal{F}_t \right]}_{\text{ Bounded gradient } }\label{eq:proof.bottou.2}%
\end{align}
The first term in Eq.~\ref{eq:proof.bottou.2} is negative by the pseudo-concavity of $\Pi$ (Prop.~\ref{prop:pseudo.log.concave.monopoly.revenue}). The second term is bounded by $2 \gamma_t \| \nabla^2 \Pi\|_\infty \|K - \indicator{\lR^+}\|_1$ by Lem.~\ref{lem:uniformly_bounded_grad_diff}. The third term is bounded by $\gamma_t^2 \big( 1 + \bar{b}( 1 + \|\nabla \Pi\|_\infty) \|k\|_\infty \big)$ by Prop.~\ref{prop:control.bias.variance.smoothing}.\\
Using the same quasi-martingale argument as in \citet{bottou1998online}, we have that Eq.~\ref{eq:proof.bottou.2}, together with $\sum \gamma_t \lVert K_t - \II_{\RR^+}\rVert_1 < \infty$ and $\sum \gamma_t^2 \lVert k_t\rVert_\infty < \infty$, implies that
\begin{equation*}
\label{eq:proof.bottou.3}
    h_t \overset{a.s.}{\rightarrow} h_\infty < \infty, \quad \sum \mathbb{E} \left[ h_{t+1} - h_t \big|\mathcal{F}_t \right] < \infty.
\end{equation*}
Thus, using Eq.~\ref{eq:proof.bottou.1}, we have that 
\begin{equation}
   \label{eq:proof.bottou.4} 
    0 \leq \sum \gamma_t (r^* - r_t)^T \nabla \Pi(r_t) < \infty.
\end{equation}
Suppose now that $(r_t - r^*) \overset{a.s.}{\rightarrow} h_\infty \neq 0$, then since $\sum \gamma_t = + \infty$, it would lead to $\sum \gamma_t (r^* - r_t)^T \nabla \Pi(r_t)= + \infty$ which is in contradiction with Eq.~\ref{eq:proof.bottou.4}. As a result,
\begin{equation*}
    (r_t - r^*)^T \nabla \Pi(r_t) \overset{a.s.}{\rightarrow} 0.
\end{equation*}
Finally, since $(r_t - r^*)^2 \overset{a.s.}{\rightarrow} h_\infty < \infty$, necessarily, $\nabla \Pi(r_t) \overset{a.s.}{\rightarrow} 0$ and $r_t\overset{a.s.}{\rightarrow} r^*$.
\end{proof}

\subsection{Finite-time Convergence Speed}
\label{app:speed.convergence}
We provide here the full statement of Thm.~\ref{thm:stationary_convergence_speed}, along with explicit rates and constants. The rates are expressed in terms of an auxiliary function $\varphi$, which allows us to handle all configurations of the step-size and kernel decay schedules. Depending on the value of $\alpha$, $\alpha_1$, $\alpha_\infty$, it may introduce some logarithmic term $\log(t)$. This explains the $\tilde{\cO}$ notation used in the abridged version of Thm.~\ref{thm:stationary_convergence_speed} ( Sec.~\ref{sec:convergence_stationary}), which can easily be recovered from this extended version. 

\begin{thmbis}{thm:stationary_convergence_speed}
Let $F$ satisfy~\ref{asmp:finite_variance} and~\ref{asmp:mhr}, $\cC$ be given in~\ref{asmp:lower.bounded.revenue} and $\{k_t\}_{t\in\lN} \in \cK^\lN$ such that $\lVert K_t - \indicator{\lR^+}\rVert_1 \leq \nu_1 t^{-\alpha_1}$ and $\lVert k_t \rVert_\infty  \leq \nu_\infty t^{\alpha_\infty}$. Then, by running \dcoga on $\cC$ with $\gamma_t = \nu t^{-\alpha}$ with $\nu \leq (2c\mu)^{-1}$, we have for all $t\geq 2$,
\begin{equation*}
\begin{aligned}
   & \text{ if } \alpha =1 \quad & \mathbb{E}(\|r_t - r^*\|^2) &\leq
   \Big(\bar{b}^2 + 2 C \nu \nu_1 \varphi_{2\mu c\nu - \alpha_1}(t) + 2 C_\infty \nu^2 \nu_\infty \varphi_{2 \mu c \nu + \alpha_\infty - 1}(t)\Big) t^{-2 \mu c \nu}
   \\
   & \text{ if } \alpha \in (0,1) \quad &  \mathbb{E}(\|r_t - r^*\|^2) &\leq 
   \Big( \bar{b}^2 + C_1 \nu \nu_1 \varphi_{1 - \alpha - \alpha_1}(t) + C_\infty \nu^2 \nu_\infty \varphi_{1 + \alpha_\infty - 2\alpha}(t) \Big) \exp\big( - \mu c \nu t^{1-\alpha}\big) \\
   &&&+ C_1 \frac{\nu_1}{\mu c} t^{-\alpha_1} + C_\infty \frac{\nu \nu_\infty}{\mu c}t^{\alpha_\infty - \alpha}\\
\end{aligned}
\end{equation*}
as long as $\alpha$, $\alpha_1$, $\alpha_\infty$ satisfy the condition of Thm.~\ref{thm:as_convergence} i.e., $\alpha \leq 1$, $\alpha+\alpha_1 > 1$ and $2 \alpha - \alpha_\infty > 1$. The function $\varphi$ and the constant $C_1$, $C_\infty$ are given by
\begin{equation*}
\varphi_\beta(t) = \log(t) \indicator{\beta=0} + \frac{t^\beta - 1}{\beta}\indicator{\beta\neq0}; \quad
C_1 = 2 \bar{b} \|\nabla^2\Pi\|_\infty; \quad C_\infty = 1 + \bar{b} \big(1+\|\nabla \Pi \|_\infty\big).
\end{equation*}
\end{thmbis}
\begin{proof}
The proof builds on \citet[Thm.~2]{bach2011non} . The main differences are that \textbf{1)} we don't require the local function $p_{k_t}$ to be concave \textbf{2)} we don't rely on the strong concavity of $\Pi_{k_t}$ but on its strong log-concavity and one of its lower bounds \textbf{3)} our objective function varies over time because of the sequence of convolution kernels $\{k_t\}_{t\geq 1}$.\\

We first stress that~\ref{asmp:mhr} together with the lower bounded revenue $\Pi_{k_t}$ leads to some sort of local strong concavity of $\Pi$. From Prop.~\ref{prop:pseudo.log.concave.monopoly.revenue}, the strongly increasing hazard rate ensures that $\Pi$ is strongly log-concave with parameter $\mu$. Further, $\Pi$ admits a unique maximum $r^*$ (since $\Pi(0) = \Pi(\beta) = 0$) such that $r^* \in \cC$ by assumption. As a result, for any $r\in\cC$,
\begin{equation*}
\begin{aligned}
    & (r - r^*)^T \big(\nabla\big(\log \Pi\big)(r) - \nabla\big(\log \Pi\big)(r^*)\big) \leq - \mu \|r^* - r \|^2 \\
    \Rightarrow \quad & (r - r^*)^T \big(\nabla \Pi(r) / \Pi(r) - \nabla \Pi(r^*) / \Pi(r^*)\big) \leq - \mu \|r^* - r \|^2 \\
    \Rightarrow \quad & (r - r^*)^T \nabla \Pi(r) \leq - \Pi(r) \mu \|r^* - r \|^2 \leq - \mu c \|r^* - r \|^2.
\end{aligned}
\end{equation*}
We denote as $\tilde{\mu} = \mu c$ the quantity which plays the role of the strong-concavity parameter in \citet[Thm.~2]{bach2011non}.\\

As for the proof of Thm.~\ref{thm:as_convergence}, we introduce the Lyapunov process $h_t =\norm{r_t - r^*}_2^2$ and its expectation $\bar{h}_t = \mathbb{E}(h_t)$. From Eq.~\ref{eq:proof.bottou.2}, we have
\begin{equation*}
    \mathbb{E}(h_{t+1} - h_t) | \mathcal{F}_t) \leq 2 \gamma_t (r_t - r^*)^T \nabla \Pi(r_t) + 2 \gamma_t \bar{b} \|\nabla^2\Pi\|_\infty \lVert K_t - \indicator{\lR^+}\rVert_1 + \gamma_t^2 \big(1 + \bar{b} \big(1+\|\nabla \Pi \|_\infty\big) \|k\|_\infty\big).
\end{equation*}
Using the local strong-concavity of $\Pi$ and that $\lVert K_t - \indicator{\lR^+}\rVert_1 \leq \gamma^1_t$, $\lVert k_t \rVert_\infty  \leq \gamma^\infty_t$, we obtain:
\begin{equation*}
    \mathbb{E}(h_{t+1} - h_t) | \mathcal{F}_t) \leq 2 \tilde{\mu} \gamma_t h_t  + C_1 \gamma_t \gamma_t^1 + C_\infty \gamma_t^2 \gamma_t^\infty,
\end{equation*}
where $C_1 = 2 \bar{b} \|\nabla^2\Pi\|_\infty$ and $C_\infty =1 + \bar{b} \big(1+\|\nabla \Pi \|_\infty\big)$.\footnote{Without loss of generality, we assume that $\nu_\infty \geq 1$.} Taking the expectation leads to:
\begin{equation}
    \bar{h}_{t+1} \leq (1 - 2 \tilde{\mu} \gamma_t) \bar{h}_t + C_1 \gamma_t \gamma_t^1 + C_\infty \gamma_t^2 \gamma_t^\infty.
    \label{eq:proof.bach.delta.1}
\end{equation}
In line with~\citet{bach2011non}, we split the proof depending whether $\alpha = 1$ or $\alpha \in (0,1)$.

\begin{enumerate}
    \item The case $\alpha = 1$: using that $1-x \leq \exp(-x)$ for all $x\in\mathbb{R}$ and applying the recursion $t$ times in Eq.~\ref{eq:proof.bach.delta.1}, we have
    \begin{equation*}
    \begin{aligned}
        \bar{h}_{t} &\leq \bar{h}_1 \exp\Big( - 2 \tilde{\mu} \sum_{s=1}^{t-1} \gamma_s \Big) + C_1 \sum_{s=1}^{t-1} \gamma_s \gamma_s^1 \exp\Big( - 2 \tilde{\mu} \sum_{\tau = s+1}^{t-1} \gamma_\tau \Big) + 
        C_\infty \sum_{s=1}^{t-1} \gamma_s^2 \gamma_s^\infty \exp\Big( - 2 \tilde{\mu} \sum_{\tau = s+1}^{t-1} \gamma_\tau \Big) \\
        &\leq \bar{h}_1 \exp\Big( - 2 \tilde{\mu} \nu \sum_{s=1}^{t-1} s^{-1} \Big) + \Big(C_1 \nu \nu_1 \sum_{s=1}^{t-1} s^{-\alpha - \alpha_1}  + 
        C_\infty \nu^2 \nu_\infty \sum_{s=1}^{t-1} s^{-2 \alpha + \alpha_0} \Big) \exp\Big( - 2 \tilde{\mu} \nu \sum_{\tau = s+1}^{t-1} \tau^{-1} \Big).
    \end{aligned}
    \end{equation*}
    Further, for all $t\geq 2$,
    \begin{equation*}
    \begin{aligned}
    \sum_{s=1}^{t-1} s^{-1} &\geq \log(t)\\
    \sum_{\tau = s+1}^{t-1} \tau^{-1} &\geq \log(t/s+1)\\
    \end{aligned}
    \end{equation*}
    we obtain that (under $\tilde{\mu}\nu \leq 1/2$):
    \begin{equation*}
    \begin{aligned}
        \bar{h}_t &\leq \bar{h}_1 t^{-2 \tilde{\mu} \nu} + C_1 \nu \nu_1 t^{-2 \tilde{\mu} \nu}  \sum_{s=1}^{t-1} s^{-1 - \alpha_1}(s+1)^{2\tilde{\mu}\nu} + C_\infty \nu^2 \nu_\infty t^{-2 \tilde{\mu} \nu} \sum_{s=1}^{t-1} s^{-2 + \alpha_\infty} (s+1)^{2 \tilde{\mu}\nu}\\
        &\leq \bar{h}_1 t^{-2 \tilde{\mu} \nu} + 2 C_1 \nu \nu_1 t^{-2 \tilde{\mu} \nu} \sum_{s=1}^{t-1} s^{2\tilde{\mu}\nu - 1 - \alpha_1} + 2 C_\infty \nu^2 \nu_\infty t^{-2 \tilde{\mu} \nu} \sum_{s=1}^{t-1} s^{-2 + \alpha_\infty + 2 \tilde{\mu}\nu},\\
    &\leq \Big(\bar{b}^2 + 2 C_1 \nu \nu_1 \varphi_{2\tilde{\mu}\nu - \alpha_1}(t) + 2 C_\infty \nu^2 \nu_\infty \varphi_{2 \tilde{\mu} \nu + \alpha_\infty - 1}\Big) t^{-2 \tilde{\mu} \nu}.
    \end{aligned}
    \end{equation*}    
    \item The case $\alpha \in (0,1)$: applying the recursion $t$ times in Eq.~\ref{eq:proof.bach.delta.1}, we have
    \begin{equation}
    \bar{h}_{t} \leq \bar{h}_1 \underbrace{\prod_{s=1}^{t-1} (1 - 2 \tilde{\mu} \gamma_s)}_{A^1_t} + C_1 \underbrace{\sum_{s=1}^{t-1} \gamma_s \gamma_s^1 \prod_{\tau = s+1}^{t-1} (1 - 2 \tilde{\mu} \gamma_\tau)}_{A^2_t} + C_\infty \underbrace{\sum_{s=1}^{t-1} \gamma_s^2 \gamma_s^\infty \prod_{\tau = s+1}^{t-1} (1 - 2 \tilde{\mu} \gamma_\tau)}_{A^3_t}
    \label{eq:proof.bach.delta.2}
    \end{equation}
    The derivation slightly differs from the case $\alpha=1$. Following~\citet{bach2011non}, one has:
    \begin{equation*}
    \begin{aligned}
    A^1_t & \leq \exp\Big( - 2 \tilde{\mu} \sum_{s=1}^{t-1} \gamma_s \Big) \\
    A^2_t &\leq \frac{\gamma^1_{\lfloor t/2 \rfloor}}{2 \tilde{\mu}} + \exp\Big( -2 \tilde{\mu} \sum_{\tau = \lfloor t/2 \rfloor}^{t-1} \gamma_\tau \Big)\sum_{s=1}^{t-1} \gamma_s \gamma_s^1\\
    A^3_t &\leq  \frac{\gamma_{\lfloor t/2 \rfloor} \gamma^\infty_{\lfloor t/2 \rfloor}}{2 \tilde{\mu}} + \exp\Big( -2 \tilde{\mu} \sum_{\tau = \lfloor t/2 \rfloor}^{t-1} \gamma_\tau \Big) \sum_{s=1}^{t-1} \gamma_s^2 \gamma_s^\infty.
    \end{aligned}
    \end{equation*}
    Using the expression of $\gamma$, $\gamma_1$ and $\gamma_\infty$, where $\alpha_1\leq 1$, together with $\varphi_{1-\alpha}(x) - \varphi_{1 - \alpha}(x/2) \geq x^{1-\alpha}/2$ and $\varphi_{1-\alpha}(t) \geq t^{1 - \alpha} /2$ we have:
    \begin{equation*}
    \begin{aligned}
    A^1_t & \leq \exp\big( -2\tilde{\mu} \nu \varphi_{1-\alpha}(t) \big) \leq \exp\big( - \tilde{\mu} \nu t^{1 - \alpha} \big),\\
    A^2_t &\leq \frac{\nu_1 }{\tilde{\mu}}t^{-\alpha_1} + \nu \nu_1 \exp\big( -\tilde{\mu} \nu t^{1 - \alpha}\big)\sum_{s=1}^{t-1} s^{-\alpha - \alpha_1} \leq 
    \frac{\nu_1 }{\tilde{\mu}}t^{-\alpha_1} + \nu \nu_1 \exp\big( -\tilde{\mu} \nu t^{1 - \alpha}\big) \varphi_{1 - \alpha - \alpha_1}(t), 
    \\
    A^3_t &\leq  \frac{\nu \nu_\infty}{\tilde{\mu}} t^{\alpha_\infty - \alpha} + \nu^2 \nu_\infty \exp\big( -\tilde{\mu} \nu t^{1 - \alpha}\big) \sum_{s=1}^{t-1} s^{\alpha_\infty - 2 \alpha} \leq 
    \frac{\nu \nu_\infty}{\tilde{\mu}} t^{\alpha_\infty - \alpha} + \nu^2 \nu_\infty \exp\big( -\tilde{\mu} \nu t^{1 - \alpha}\big) \varphi_{1 + \alpha_\infty - 2 \alpha }(t).
    \end{aligned}
    \end{equation*}
    Putting everything together, we obtain the final bound for $\alpha \in (0,1)$:
    
    \begin{equation*}
    \bar{h}_t \leq \Big( \bar{b}^2 + C_1 \nu \nu_1 \varphi_{1 - \alpha - \alpha_1}(t) + C_\infty \nu^2 \nu_\infty \varphi_{1 + \alpha_\infty - 2\alpha}(t) \Big) \exp\big( - \tilde{\mu}\nu t^{1-\alpha}\big) + C_1 \frac{\nu_1}{\tilde{\mu}} t^{-\alpha_1} + C_\infty \frac{\nu \nu_\infty}{\tilde{\mu}}t^{\alpha_\infty - \alpha}. 
    \end{equation*}

\end{enumerate}
\end{proof}

\toremove{
\begin{corollary}
Under the condition of Thm.~\ref{thm:stationary_convergence_speed}. If \dcoga is run with $\{k_t\}_{t \in \lN}$ a sequence of Gaussian kernels with variances $\sigma_t \propto 1/\sqrt{t}$ and $\gamma_t \propto 1/t$, we have, a stationary regret of:
\begin{equation*}
    R_{s}(T) \triangleq \sum_{t=1}^T \Pi^F(r^*) - \Pi^F(r_t) = \tilde{\cO}(T^{3/4})\,.
\end{equation*}
\end{corollary}
\begin{proof}
\begin{align*}
    R_{s}(T) &\triangleq \sum_{t=1}^T \Pi^F(r^*) - \Pi^F(r_t)\\
    &\leq \sum_{t=1}^T \lP\left(\Pi^F(r^*) - \Pi^F(r_t) \leq \delta\right) \delta + \lP\left(\Pi^F(r^*) - \Pi^F(r_t) > \delta\right) \bar{b} & \text{for any $\delta > 0$}\\
    &\leq \sum_{t=1}^T \delta + \frac{\lE\left(\Pi^F(r^*) - \Pi^F(r_t)\right)}{\delta} \bar{b} & \text{(Markov's inequality)}\\
    &\leq \sum_{t=1}^T \delta + \frac{\lVert\nabla^2 \Pi^F\rVert_\infty\lE\left(\lVert r^* - r_t\rVert^2\right)}{2\delta} \bar{b} & \text{(Taylor's Theorem)}\\
    &\leq \sum_{t=1}^T \delta + \frac{\bar{b}\lVert\nabla^2 \Pi^F\rVert_\infty}{2\delta}\cO\left(t^{-1/2}\right)  & \text{(Corollary \ref{cor:stationary_convergence_speed})}\\
    & \leq \delta T + \frac{\tilde{\cO}(T^{1/2})}{\delta}\\
    & \leq \tilde{\cO}(T^{3/4}) & \text{by choosing $\delta = T^{-1/4}$.}\\
\end{align*}
\end{proof}
\begin{corollary}
In a bandit setting (bids under the reserve are not observed), under the condition of Thm.~\ref{thm:stationary_convergence_speed}, by running an Explore-then-Commit strategy where the fit during the exploration phase is done by \dcoga ran with $\{k_t\}_{t \in \lN}$ a sequence of Gaussian kernels with variances $\sigma_t \propto 1/\sqrt{t}$ and $\gamma_t \propto 1/t$, we have, a stationary regret of:
\begin{equation*}
    R_{s}(T) \triangleq \sum_{t=1}^T \Pi^F(r^*) - \Pi^F(r_t) = \tilde{\cO}(T^{4/5})\,.
\end{equation*}
\end{corollary}
\begin{proof}
Up to using a doubling trick, we can assume $T$ is known. Denoting $N = T^{\alpha}$ the size of the explorations step, we have 
\begin{align*}
    R_{s}(T) &\leq N\bar{b} + (T-N) \frac{\tilde{\cO}(N^{3/4})}{N}\\
    &\leq \cO(T^{\alpha}) + T^{1-\alpha}\tilde{\cO}(T^{3\alpha/4}) + \tilde{\cO}(T^{3\alpha /4})\\
    &\leq \tilde{\cO}(T^{4/5})&\text{by choosing $\alpha = 4/5$}
\end{align*}
\end{proof}
}

\section{Proof of Sec.~\ref{sec:tracking}}
\label{app:tracking}
\trackingconvergence*
\begin{proof}
Similarly to the one of Th.\ref{thm:stationary_convergence_speed}, this proof builds on the one of \citet{bach2011non}. 

Since $\Pi^F$ is $\mu-$strongly log-concave, one has for all $r\in \cC$, 
\begin{equation}
    \label{eq:proof.bach.strong.conc}
    (r_t - r^*)^T \nabla \Pi^F(r) \leq - \Pi^F(r) \mu \| r - r^*\|^2 \leq - \tilde{\mu} \| r - r^*\|^2 
\end{equation}
where $\tilde{\mu} = \mu c$. As a result, although we do not assume the function to be strongly concave, it still enjoys a similar property in $r^*$ on the bounded subset $\cC$. Then, let $h_t = \norm{r_t - r^*}^2$ be the Lyapunov process (similarly to the proof of Thm.~\ref{thm:as_convergence}). Since the projection operator over $\cC$ is $1-$Lipschitz, from Eq. \ref{eq:proof.bottou.2}, one has: 
\begin{equation}
    \label{eq:proof.bach.1}
    \begin{aligned}
    \mathbb{E} \left[ h_{t+1} - h_t \big|\mathcal{F}_t \right] &\leq 2 \gamma (r_t - r^*)^T \nabla \Pi^F(r_t)
    + 2\gamma\bar{b}\lVert\nabla^2\Pi^F\rVert_\infty\lVert K-\indicator{\lR^+}\rVert_1
    + \gamma_t^2 \big(1 + \bar{b} \big(1+\|\nabla \Pi^F \|_\infty\big) \|k\|_\infty\big).
    \end{aligned}
\end{equation}
Denoting $C(\gamma, k) = 2\gamma\bar{b}\lVert\nabla^2\Pi^F\rVert_\infty\lVert K-\indicator{\lR^+}\rVert_1
    + \gamma^2 \big(1 + \bar{b} \big(1+\|\nabla \Pi^F \|_\infty\big) \|k\|_\infty\big)$ and $\bar{h}_t = \mathbb{E}(h_t)$, and then taking the expectation in Eq.~\ref{eq:proof.bach.1}, one obtains:
\begin{equation}
    \label{eq:proof.bach.2}
    \bar{h}_{t+1} \leq (1 - 2 \gamma_t \tilde{\mu}) \bar{h}_t + C(\gamma, k).
\end{equation}
Further, Eq.~\ref{eq:proof.bach.2} is exactly the same as Eq.~25 in \cite{bach2011non} with different definitions for the constants, and the rest of the proof is identical. As a result, one has:
\begin{align*}
    \bar{h}_t \leq \big(\bar{h}_0 + C(\gamma, k) (t-1)\big) \exp\Big(-\frac{\tilde{\mu}\gamma}{2} t \Big) + \frac{2 C(\gamma, k) }{\tilde{\mu}}.
\end{align*}
\end{proof}

\trackingregret*

\begin{proof}
Denoting $\{[s_i, t_i]\}_{i=1}^\tau$ the intervals on which the distribution is constant, we have
\begin{align*}
    R(T)
    & = \lE\left(\sum_{t=1}^T \Pi^{F_t}(r_t^*) - \Pi^{F_t}(r_t)\right) \\
    & \leq \lE\left(\sum_{t=1}^T \frac{\lVert\nabla^2\Pi^{F_{t}}\rVert_\infty}{2}\lVert r_t^* - r_t\rVert_2^2\right)\\
    & \leq \sum_{i=1}^\tau\frac{\lVert\nabla^2\Pi^{F_{s_i}}\rVert_\infty}{2}\sum_{t=s_i}^{t_i}\lE\left(\lVert r_{s_i}^* - r_t\rVert_2^2\right)\\
    & \leq \sum_{i=1}^\tau\frac{\lVert\nabla^2\Pi^{F_{s_i}}\rVert_\infty}{2}\sum_{t=1}^{t_i -s_i +1}\lE\left(\lVert r_{s_i}^* - r_{t+s_i-1}\rVert_2^2\right)
\end{align*}
Applying Thm.~\ref{thm:tracking} on $\lE\left(\lVert r_{s_i}^* - r_{t+s_i-1}\rVert_2^2\right)$ and denoting $\bar{C}(\gamma, k) = \max_{i\in[\tau]}2\gamma\bar{b}\lVert\nabla^2\Pi^{F_{s_i}}\rVert_\infty\lVert K-\indicator{\lR^+}\rVert_1
    + \gamma^2 \big(1 + \bar{b} \big(1+\|\nabla \Pi^{F_{s_i}} \|_\infty\big) \|k\|_\infty\big)$ we obtain
\begin{align*}
    R(T)
    & \leq \sum_{i=1}^\tau\frac{\lVert\nabla^2\Pi^{F_{s_i}}\rVert_\infty}{2}\sum_{t=1}^{t_i -s_i +1} \left(\big(\bar{b}^2 + \bar{C}(\gamma, k) (t-1)\big) e^{-\frac{\mu c\gamma}{2} t} + \frac{2 \bar{C}(\gamma, k) }{\mu c}\right) \\
    & \leq  
    \frac{1}{2}\max_{i\in[\tau]}\lVert\nabla^2\Pi^{F_{s_i}}\rVert_\infty
    \left(\frac{2 \bar{C}(\gamma, k) }{\mu c} T 
    +
    \sum_{i=1}^\tau\sum_{t=1}^{t_i -s_i +1} \bar{b}^2 e^{-\frac{\mu c\gamma}{2} t} + \bar{C}(\gamma, k) (t-1)e^{-\frac{\mu c\gamma}{2} t}
    \right)\\
    & \leq
    \frac{1}{2}\max_{i\in[\tau]}\lVert\nabla^2\Pi^{F_{s_i}}\rVert_\infty
    \left(\frac{2 \bar{C}(\gamma, k) }{\mu c} T 
    +
    \sum_{i=1}^\tau\sum_{t=0}^{t_i -s_i} \bar{b}^2 e^{-\frac{\mu c\gamma}{2} (t+1)} + \bar{C}(\gamma, k) te^{-\frac{\mu c\gamma}{2} (t+1)} 
    \right)\\
    & \leq 
    \frac{1}{2}\max_{i\in[\tau]}\lVert\nabla^2\Pi^{F_{s_i}}\rVert_\infty
    \left(
    \frac{2 \bar{C}(\gamma, k) }{\mu c} T 
    +
    \sum_{i=1}^\tau\left(\frac{\bar{b}^2 e^{-\frac{\mu c\gamma}{2}}}{1-e^{-\frac{\mu c\gamma}{2}}} + \sum_{t=0}^{t_i -s_i} \bar{C}(\gamma, k) te^{-\frac{\mu c\gamma}{2} (t+1)}\right) 
    \right)\\
    & \leq 
    \frac{1}{2}\max_{i\in[\tau]}\lVert\nabla^2\Pi^{F_{s_i}}\rVert_\infty
    \left(
    \frac{2 \bar{C}(\gamma, k) }{\mu c} T 
    +
    \left(\frac{\bar{b}^2}{e^{\frac{\mu c\gamma}{2}}-1}\tau 
    + \bar{C}(\gamma, k)e^{-\mu c\gamma}\sum_{i=1}^\tau \frac{1 - e^{-\frac{\mu c \gamma}{2}(t_i-s_i)}\left(1 + (t_i-s_i)\left(1 - e^{-\frac{\mu c \gamma}{2}}\right)\right)}{\left(1 - e^{-\frac{\mu c \gamma}{2}}\right)^2}\right) 
    \right)\\
    & \leq 
    \frac{1}{2}\max_{i\in[\tau]}\lVert\nabla^2\Pi^{F_{s_i}}\rVert_\infty
    \left(\frac{2 \bar{C}(\gamma, k) }{\mu c} T 
    +
    \left(\frac{\bar{b}^2}{e^{\frac{\mu c\gamma}{2}}-1}\tau 
    + \frac{\bar{C}(\gamma, k)e^{-\mu c\gamma}}{\left(1 - e^{-\frac{\mu c \gamma}{2}}\right)^2}\tau\right) \right)\\
    & \leq 
    \frac{1}{2}\max_{i\in[\tau]}\lVert\nabla^2\Pi^{F_{s_i}}\rVert_\infty
    \left(
    \frac{2 \bar{C}(\gamma, k) }{\mu c} T 
    +
    \left(\frac{\bar{b}^2}{e^{\frac{\mu c\gamma}{2}}-1} 
    + \frac{\bar{C}(\gamma, k)}{\left(e^{\frac{\mu c \gamma}{2}}-1\right)^2}\right) \tau
    \right)
\end{align*}

Getting $R(T) = \cO(\sqrt{T})$ when $T$ is known in advance just amounts to plugging $\gamma = \frac{1}{\sqrt{T}}$, $\lVert K - \indicator{\lR^+}\rVert_1 \propto \frac{1}{\sqrt{T}}$ and $\|k\|_\infty \propto \sqrt{T}$ in the last equation.
\end{proof}

\end{document}